\theoremstyle{plain}
\newtheorem{theorem}{Theorem}[section]
\newtheorem{proposition}[theorem]{Proposition}
\newtheorem{corollary}[theorem]{Corollary}
\theoremstyle{definition}
\newtheorem{definition}[theorem]{Definition}
\theoremstyle{remark}
\title{The PWLR graph representation: \\ A Persistent Weisfeiler-Lehman scheme with Random Walks for graph classification}
\author{Sun Woo Park \thanks{Equal contribution as co-first authors}, \; Yun Young Choi \footnotemark[1], \; Dosang Joe \& Youngho Woo \thanks{ Corresponding author. Email: youngw@nims.re.kr} \\
Division of Industrial Mathematics\\
National Institute for Mathematical Sciences\\
Daejeon, Republic of Korea\\
\texttt{\{spark483, choi930121, dosjoe, youngw\}@nims.re.kr} \\
\And
U Jin Choi \\
Department of Mathematical Sciences \\
KAIST \\
Daejeon, Republic of Korea \\
\texttt{\{ujchoi\}@kaist.ac.kr} \\
}
\begin{document}

\maketitle

\begin{abstract}
This paper presents the Persistent Weisfeiler-Lehman Random walk scheme (abbreviated as PWLR) for graph representations, a novel mathematical framework which produces a collection of explainable low-dimensional representations of graphs with discrete and continuous node features. The proposed scheme effectively incorporates normalized Weisfeiler-Lehman procedure, random walks on graphs, and persistent homology. We thereby integrate three distinct properties of graphs, which are local topological features, node degrees, and global topological invariants, while preserving stability from graph perturbations. This generalizes many variants of Weisfeiler-Lehman procedures, which are primarily used to embed graphs with discrete node labels. Empirical results suggest that these representations can be efficiently utilized to produce comparable results to state-of-the-art techniques in classifying graphs with discrete node labels, and enhanced performances in classifying those with continuous node features. 
\end{abstract}

\section{Introduction}
\label{section: Introduction}

Non-Euclidean data structures are crucial subjects of researches, ranging from interactions among protein structures to social networking systems \cite{SY20, SE20}. Graphs are commonly utilized for modeling the innate properties of a wide class of these non-Euclidean structures \cite{BD21}. A number of strategies producing state-of-the-art results in analyzing the properties of graphs include graph neural networks (GNN) or message passing neural networks (MPNN) \cite{KW17}, graph kernels \cite{VS10}, Weisfeiler-Lehman procedures \cite{SS11, WL68}, random walks \cite{NV20, ZW18, Lo93}, and persistent homological techniques \cite{RBB19, CC20, EH10}. These techniques are often employed to obtain graph representations suitable for graph classifications, which aim to classify innate properties of graphs by detecting their structural differences. The varying topological structures of graphs, however, make the task of obtaining a consistent graph representation demanding.

\textbf{[Related Works]} \; \;
The Weisfeiler-Lehman (WL) isomorphism test measures similarities among graphs with discrete labels by updating the coloring of nodes, each of which represents a depth $k$ unfolding tree \cite{WL68}. Shervashidze et al. implemented the test in the form of graph kernels, producing state-of-the-art results in classifying graph data sets \cite{VS10, SS11}. Various literature focused on generalizing the WL procedure that allows one to represent graphs with continuous node attributes \cite{TG19,BF21}, and incorporate global topological invariants \cite{CK17, RBB19}.

Utilizing stochastic processes on graphs, such as random walks (RW), is an alternative approach to embed graphs to real vector spaces. A random walk on a graph is a random course of travel along the nodes of $G$ characterized by iterative processes of starting from a node and randomly choosing an adjacent node, or itself, to travel to. A number of state-of-the-art approaches include graph kernels using return probabilities of RW \cite{ZW18}, and comparing the number of common random walks \cite{BK05, SB16}.

Persistent homological techniques are known to be effective for computing the global topological invariants of data sets \cite{Ca09, EH10, CR17}. One determines height functions over nodes and edges of $G$ to construct persistence diagrams, which capture the homological properties of $G$. The encapsulated properties depend on which features of graphs the predetermined height function utilizes. These features include node labels \cite{RBB19}, and spectral decompositions of the adjacency matrix \cite{CC20}.

\textbf{[Motivation]} \; The motivation for this project originates from instability and non-optimal dimensionality of representations obtained from pre-existing approaches. Variants of GNNs do not necessarily guarantee representation stability with respect to graph perturbations, where a pair of graphs with similar topological structures may be embedded to a pair of vectors whose distance between them may be arbitrarily large \cite{XH19}. The WL procedure also substantially increases the dimensionality of representations as the number of iterations of the procedure increase \cite{SS11}. While graph kernels effectively bound these dimensions, they produce representations that depend on the choice of training data sets \cite{HK03, VS10}. 

To address these limitations, we propose the Persistent Weisfeiler-Lehman Random Walk embedding framework (PWLR), a novel graph embedding formalism which produces a collection of low-dimensional representations of graphs while preserving stability from graph perturbations. Inspired from WL \cite{SS11} and PWL \cite{RBB19} procedures, the PWLR framework captures local topological features, node degrees, and global topological invariants of graphs by effectively incorporating the normalized WL procedure, random walks, and persistent homological approaches. Experimental results suggest that the proposed algorithm produces comparable results to state-of-the-art techniques in classifying graphs with discrete node labels, and enhanced performances in classifying those with continuous node attributes or edge weights. 

\textbf{[Contributions]} \; \; 
The novelty of the PWLR embedding framework can be summarized as follows.

$\bullet$ A mathematical framework incorporating \textbf{three distinct topological properties} (Theorem \ref{theorem: three_properties})

$\bullet$ Effective \textbf{low-dimensional} representations for classifying graphs with discrete and \textbf{continuous features}. (Table \ref{tab:results}, \ref{tab:dimension})

$\bullet$ \textbf{Stability} of representations with respect to graph perturbations (Theorem \ref{theorem:stability})

\section{Persistent Weisfeiler-Lehman Random Walk Graph Representation}
\label{section: PWLR}


\textbf{[Algorithm]} \; \; 
The PWLR scheme obtains a vector representation of a finite graph $G$ as follows: \footnote{Appendix provided at: } \footnote{Github repository: https://github.com/spark483/The-PWLR-graph-representation}
\begin{equation} \label{equation:PWLR_embedding_scheme}
    \varphi_{\text{PWLR}}(G) := \varphi \left( \left( M_G^{k_1} X \right)^T M_G^{k_2} \right).
\end{equation}
We note that $k_1, k_2$ are positive integers, $X$ is a $|V| \times l$ matrix consisting of a concatenation of node labels of $G$, $M_G$ is the normalized weighted adjacency matrix of dimension $|V| \times |V|$ given by
\begin{equation} \label{equation:transition_matrix}
    M_G := (D+I)^{-1}(A+I),
\end{equation}
$(\cdot)^T$ is the transpose of a matrix, and $\varphi$ is the Euclidean embedding obtained from persistent homological features utilizing the updated node labels $(M_G^{k_1} X)^T M_G^{k_2}$. The matrix $M_G$ is the normalized adjacency convolutional operator used in message passing neural networks (MPNN) \cite{KW17, CJM20, VC18}. Theorem \ref{theorem: three_properties} provides a list of correspondences among each component of (\ref{equation:PWLR_embedding_scheme}) and the utilized algorithms for analyzing finite graphs, the proof of which is in Appendix \ref{appendix: theorem: three_properties}.
\begin{theorem}
\label{theorem: three_properties}
Given a finite undirected graph $G = (V,E)$ without self-loops, the PWLR procedure incorporates three disjoint topological properties of $G$. 

$\bullet$ The component $M_G^{k_1} \times (\cdot)$ (\textbf{WL}) incorporates local topological properties by representing depth $k_1$ unfolding trees with fixed vertices. 

$\bullet$ The component $(\cdot) \times M_G^{k_2}$ (\textbf{R}) incorporates node degrees with local topological properties. 

$\bullet$ Lastly, the component $\varphi(\cdot)$ (\textbf{P}) incorporates homological invariants of $G$, i.e. the connected components and cycles.
\end{theorem}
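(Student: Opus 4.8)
The plan is to treat the three bullets as three independent claims about the three algebraic pieces of (\ref{equation:PWLR_embedding_scheme}), prove each, and then record that the resulting invariants are genuinely distinct. The only input I will use repeatedly is the elementary fact that $M_G$ defined by (\ref{equation:transition_matrix}) is row-stochastic: row $v$ of $(D+I)^{-1}(A+I)$ sums to $(d_v+1)^{-1}(d_v+1)=1$, and $(M_GX)_v = (d_v+1)^{-1}\bigl(X_v+\sum_{u\sim v}X_u\bigr)$.

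\textbf{The WL component ($M_G^{k_1}\times(\cdot)$).} The displayed formula for $(M_GX)_v$ is exactly one step of a degree-normalized, self-including Weisfeiler--Lehman relabelling, i.e.\ the normalized adjacency convolution of \cite{KW17}. I would then argue by induction on $k_1$ that the $v$-th row of $M_G^{k_1}X$ is a function only of the multiset of initial labels collected along all walks of length at most $k_1$ issuing from $v$ --- equivalently, of the depth-$k_1$ unfolding tree rooted at $v$ --- and that, since the row index is the vertex itself, no vertex identities are discarded. This yields the first bullet. Injectivity of the assignment is not needed for the statement as phrased; what is needed is that the map factors through the unfolding tree, which the inductive step gives directly.

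\textbf{The random-walk component ($(\cdot)\times M_G^{k_2}$).} Being row-stochastic, $M_G$ is the transition kernel of the lazy random walk that stays put at $v$ with probability $(d_v+1)^{-1}$ and otherwise steps uniformly to a neighbour. I would check reversibility: detailed balance holds for $\pi_v\propto d_v+1$, since $(d_v+1)(M_G)_{vw}=(A+I)_{vw}=(A+I)_{wv}=(d_w+1)(M_G)_{wv}$. Conjugating by $(D+I)^{1/2}$ turns $M_G$ into the symmetric operator $\tilde M:=(D+I)^{-1/2}(A+I)(D+I)^{-1/2}$, giving $(M_G^{k_2})_{vw}=(d_v+1)^{-1/2}(\tilde M^{k_2})_{vw}(d_w+1)^{1/2}$. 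Writing $Y:=(M_G^{k_1}X)^T$, one then gets $(YM_G^{k_2})_{iw}=(d_w+1)^{1/2}\sum_v\bigl[Y_{iv}(d_v+1)^{-1/2}\bigr](\tilde M^{k_2})_{vw}$: right-multiplication applies a symmetric $k_2$-step diffusion carrying local connectivity and, on top of it, an explicit rescaling of vertex $w$ by $(d_w+1)^{1/2}$ --- equivalently, as $k_2\to\infty$ the walk relaxes to $\pi_w\propto d_w+1$. Either description establishes the second bullet; note the left and right multiplications act on different indices of the tensor (tree-features versus walk endpoints), so they contribute complementary rather than redundant information.

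\textbf{The persistence component ($\varphi(\cdot)$) and disjointness.} Here I would invoke that a finite graph, viewed as a $1$-dimensional simplicial/CW complex, has vanishing homology in degrees $\ge 2$; hence any sub- or superlevel filtration built from a height function on the updated labels $(M_G^{k_1}X)^TM_G^{k_2}$ produces a persistence module supported in degrees $0$ and $1$, whose diagram records precisely the birth and death of connected components ($H_0$) and of independent cycles ($H_1$). For ``three disjoint properties'' I would exhibit that none of the three invariants determines the others: a pair of graphs with equal degree sequences but non-isomorphic depth-$k_1$ unfolding-tree multisets, and a pair indistinguishable by depth-$k_1$ WL yet with different $H_1$ under the induced filtration. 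The main obstacle is the second bullet: making ``incorporates node degrees'' a precise statement rather than a heuristic. The reversibility computation and the symmetrization $M_G=(D+I)^{-1/2}\tilde M(D+I)^{1/2}$ are the crux of that step; the WL induction and the $1$-dimensionality of graph homology are essentially bookkeeping, so most of the writing effort should go into stating the unfolding-tree correspondence cleanly and into the degree-extraction identity.
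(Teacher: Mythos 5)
Your proposal is correct and establishes the same three correspondences as the paper, but by partly different means. For the WL component the paper just combines Proposition \ref{proposition: normalized_wl} with the standard fact that $k$ iterations of WL encode depth-$k$ unfolding trees, while you run the induction on $k_1$ directly; the content is the same. The genuine divergence is in the random-walk bullet: the paper decomposes the columns of $X$ into scaled probability distributions and invokes Perron--Frobenius together with geometric ergodicity, so that $X^T M_G^{k_2}$ converges at rate $\mu_2^{k_2}$ to a matrix whose rows are multiples of the degree-proportional stationary distribution, whereas you extract the degree dependence structurally via detailed balance and the symmetrization $M_G=(D+I)^{-1/2}\tilde{M}(D+I)^{1/2}$. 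Your route makes explicit where degrees enter at finite $k_2$ (the $(d_w+1)^{1/2}$ rescaling), while the paper's buys a quantitative convergence rate that it later reuses in the stability argument; note also that your stationary measure $\pi_v\propto d_v+1$ is in fact the correct one for $M_G=(D+I)^{-1}(A+I)$, whereas Theorem \ref{appendix:theorem:stationary_distribution} as stated gives $\pi_v\propto d_v$, the stationary distribution of $D^{-1}A$ --- a discrepancy that is harmless for this qualitative claim. For the persistence bullet the paper argues via the Euler characteristic formula that each added edge either kills a component or creates a cycle and that the filtration terminates at $G$; your appeal to the vanishing of graph homology above degree one reaches the same conclusion more abstractly, though you should still record that the filtration ends at $G$ so the surviving classes are exactly the components and cycles of $G$. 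Finally, the separating examples you promise for ``disjointness'' are never exhibited, but the paper's own proof does not establish disjointness either, so nothing is missing relative to the paper.
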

The parameters $k_1$ and $k_2$, each corresponding to the number of iterations of the WL or RW procedure, provide a collection of adaptable graph representations suitable for classification tasks. They determine the extent of incorporating node attributes and node degrees in constructing graph representations. The map $\varphi$ incorporates global topological characteristics by constructing persistence diagrams from preset height functions, as will be shown in (\ref{eq:first_representation}, \ref{eq:second_representation}).

Each property mentioned in Theorem \ref{theorem: three_properties} is known to be crucial for distinguishing isomorphism classes of graphs. The WL procedure and its variants are well-known for their state-of-the-art results in classifying graphs \cite{SS11, TG19}. Geerts, Mazowiecki, and Per\'{e}z proved that incorporating node degrees enhances conventional message passing neural networks (MPNN) in distinguishing nodes based on their attributes \cite{GMP21}. In addition, persistent homological algorithms are known to be effective for enhancing the accuracy of graph classification algorithms \cite{RBB19, CC20}. Interested readers may refer to Appendix \ref{appendix: mathematical_background} for expositions on random walks and persistent homology. A pseudo-code and a diagram summarizing the PWLR embedding scheme are outlined in Algorithm \ref{alg:PWLR} and Figure \ref{figure:blueprint}. Explicit computations on how the PWLR algorithm constructs graph representations can be found in Appendix \ref{appendix: examples}.


\begin{algorithm}[tb]
   \caption{PWLR Embedding Framework}
   \label{alg:PWLR}
\begin{algorithmic}
    \scriptsize
    \STATE {\bfseries Input:} Graph $G = (V,E)$, iterations $(k_1, k_2)$, $p=1, M_G := (D+I)^{-1}(A+I)$, $X := \{L(v_i)\}_{v_i \in V}$.
        \STATE Initialize $\overline{D}_e := \{\overline{d}_e := (\overline{d}_v,\overline{d}_w) \; | \; e = (v,w) \in E\}$. \; ($\overline{d}_v$ is the unweighted degree at $v$.)
        \STATE Set $\varphi_{\text{H0}}, \varphi_{\text{H1}} := \emptyset$, \; $\varphi_{\text{H0,Opt}}, \varphi_{\text{H1,Opt}} := [0,\cdots, 0]$ (of length $|\overline{D}|$).
    \STATE {\bfseries Normalized WL:} Update $X \gets M_G^{k_1} X$
    \STATE {\bfseries Random Walk:} Update $X \gets X^T M_G^{k_2}$
    \STATE {\bfseries Persistent homological embedding:} Define $h_E((v_1,v_2)) := \|X(v_1)-X(v_2)\|_p$
        \STATE Obtain a set $\tilde{E}$ by sorting the set $E(G)$ using $h_E$: $\tilde{E} := \{e_i \in E(G) \; | \; h_E(e_i) \leq h_E(e_j) \text{ if } i \leq j \}$.
        \FOR{$i \in \{1,\cdots,|\tilde{E}|\}$}
            \STATE Define $E^{[i]} := \{e_j \in \tilde{E} \; | \; j \leq i\}$, \; Initialize $G^{[i-1]} := (V,E^{[i-1]})$, $G^{[i]} := (V,E^{[i]})$.
            \STATE Pick $e_i \in E^{[i]} \setminus E^{[i-1]}$ and compute $\overline{d}_{e_i} := (\overline{d}_{v_1^i}, \overline{d}_{v_2^i})$.
            \IF {$\# \text{ Comp. } G^{[i]} - \# \text{ Comp. } G^{[i-1]} > 0$}
                \STATE $\varphi_{\text{H0}} \gets \text{Concat}(\varphi_{\text{H0}},[h_E(e_i)+1])$, \; $\varphi_{\text{H0,Opt}}(\overline{d}_{e_i}) \gets \varphi_{\text{H0,Opt}}(\overline{d}_{e_i}) + (h_E(e_i)+1)$
            \ENDIF
            \IF {$\# \text{ Cycle } G^{[i]} - \# \text{ Cycle } G^{[i-1]}> 0$}
                \STATE $\varphi_{\text{H1}} \gets \text{Concat}(\varphi_{\text{H1}},[h_E(e_i)]+1)$, \; $\varphi_{\text{H1,Opt}}(\overline{d}_{e_i}) \gets \varphi_{\text{H1,Opt}}(\overline{d}_{e_i}) + (h_E(e_i)+1)$
            \ENDIF
        \ENDFOR
    \STATE {\bfseries Return:} $\varphi_{\text{H0}}, \varphi_{\text{H1}}, \varphi_{\text{H0,Opt}}, \varphi_{\text{H1,Opt}}$.
\end{algorithmic}
\end{algorithm}

\begin{figure*}[ht]
\vskip 0.2in
\begin{center}
\centerline{\includegraphics[width=130mm]{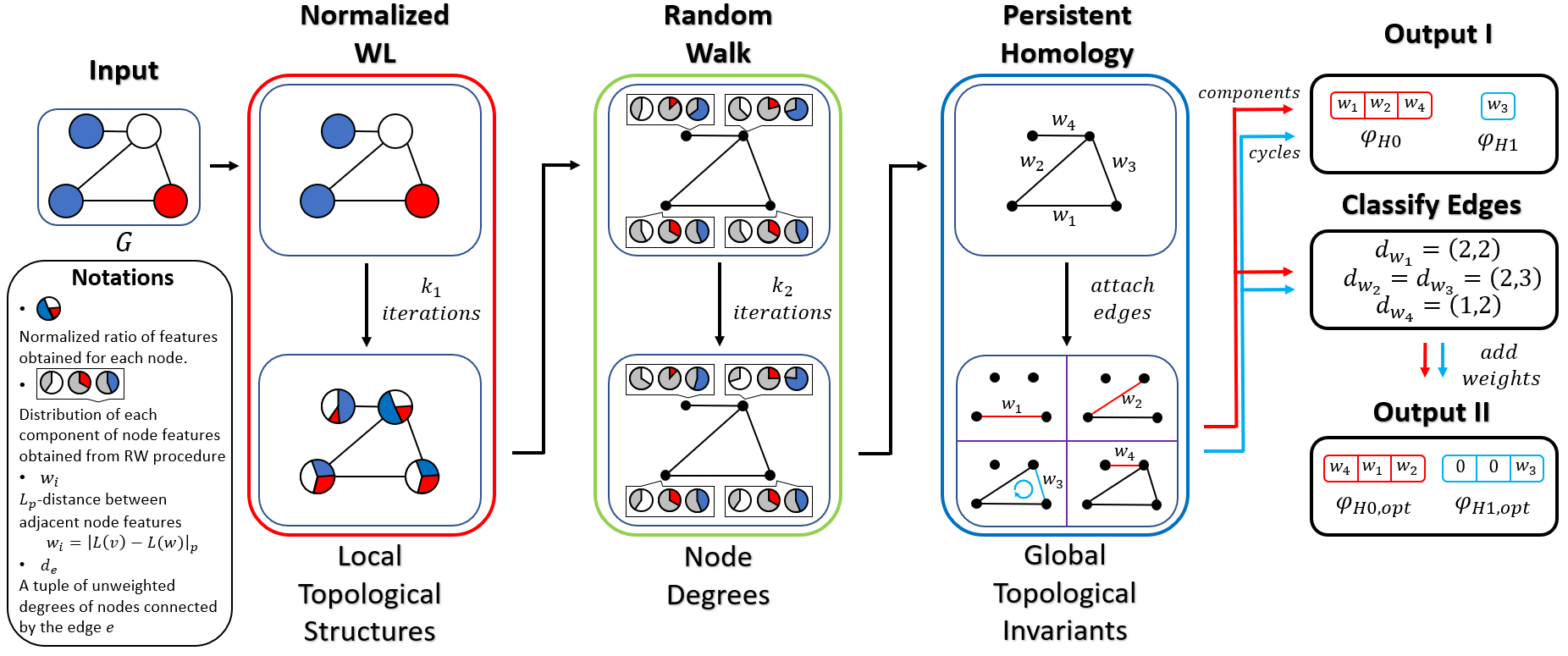}}
\caption{A diagram outlining the architecture of the PWLR embedding scheme}
\label{figure:blueprint}
\end{center}
\vskip -0.2in
\end{figure*}

\textbf{[Normalized WL procedure]} \; \; 
We analyze the correspondence between the operator $M_G^{k_1} \times (\cdot)$ and the node labels obtained from the normalized WL procedure. We first define the normalized WL procedure, obtained from normalizing a node label updated from a WL procedure by its length.
\begin{definition}[Normalized WL procedure]
Let $G := (V,E)$ be a graph with node attributes $L:V \to \mathbb{R}_{>0}^l$. Given a node $v \in V$, denote by $N(v)$ the set of nodes which are adjacent to $v$, including $v$ itself. Let $\|\cdot\|_1$ be the $l_1$-norm of a node label, and $\tilde{a}_{v,w}$ be the weight on edges $a_{v,w}$ from node $v$ to $w$, with $\tilde{a}_{v,v}=1$. The normalized WL procedure updates the attribute of $v$ by
\begin{equation}
    L(v) \mapsto \frac{\sum_{w \in N(v)} \tilde{a}_{v,w} L(w)}{\sum_{w \in N(v)} \tilde{a}_{v,w} \|L(w)\|_1}.
\end{equation}
\end{definition}
If we further assume that the $l_1$-norms of all node attributes are equal to $1$, then we immediately obtain the following correspondence stated in Proposition \ref{proposition: normalized_wl}. The proposition can be applied to graphs with discrete node labels, where each node label is embedded to real vector spaces using the one-hot encoder, see Appendix \ref{appendix: examples} for instance.
\begin{proposition} \label{proposition: normalized_wl}
Suppose that for every node $v \in V(G)$, the $l_1$ norm of node labels is equal to $1$, i.e. $\|L(v)\|_1 = 1$. Then the right multiplication $M_G^{k_1} \times (\cdot)$ is equivalent to $k_1$ iterations of normalized WL procedure applied to $G$.
\end{proposition}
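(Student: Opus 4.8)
The plan is to unwind both sides of the claimed equivalence entrywise and then induct on $k_1$. First I would record the explicit form of the map $X \mapsto M_G X$, where the $v$-th row of $X$ is the attribute $L(v)$. Since $D+I$ is diagonal with $(D+I)_{vv} = 1 + \sum_{w \sim v}\tilde a_{v,w} = \sum_{w \in N(v)}\tilde a_{v,w}$ (using $\tilde a_{v,v}=1$, which is exactly the contribution of the added identity), and $(A+I)_{vw} = \tilde a_{v,w}$ for $w \in N(v)$ and $0$ otherwise, the $v$-th row of $M_G X$ is the weighted average
\[
(M_G X)(v) = \frac{\sum_{w\in N(v)}\tilde a_{v,w}\, L(w)}{\sum_{w\in N(v)}\tilde a_{v,w}}.
\]
This is the ``numerator aggregated over $N(v)$, divided by the total incident weight'' form of a single WL-type step.

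Next I would compare this with one step of the normalized WL update from the preceding definition. Its denominator is $\sum_{w\in N(v)}\tilde a_{v,w}\|L(w)\|_1$, which collapses to $\sum_{w\in N(v)}\tilde a_{v,w}$ precisely under the standing hypothesis $\|L(w)\|_1 = 1$ for all $w$. Hence a single application of $M_G$ coincides exactly with a single normalized WL iteration, which establishes the base case $k_1 = 1$.

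For the inductive step I must verify that the hypothesis propagates, i.e.\ that every row of $M_G X$ again has unit $l_1$-norm, since the normalized WL denominator at the next step refers to the \emph{current} labels rather than the original ones. Because node attributes take values in $\mathbb{R}_{>0}^l$ and the edge weights $\tilde a_{v,w}$ are nonnegative, the $l_1$-norm is additive and positively homogeneous along the combination defining $(M_G X)(v)$, so $\|(M_G X)(v)\|_1 = \big(\sum_w \tilde a_{v,w}\|L(w)\|_1\big)\big/\big(\sum_w \tilde a_{v,w}\big) = 1$. Thus $M_G X$ satisfies the same hypothesis as $X$; applying the base case to $M_G^{\,k_1-1}X$, which by the induction hypothesis equals $k_1-1$ normalized WL iterations and still has unit-norm rows, yields that $M_G^{\,k_1}X$ equals $k_1$ normalized WL iterations.

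I do not expect a serious obstacle: the single point needing care is the bookkeeping that the normalization invariant $\|L(v)\|_1 = 1$ is maintained throughout the iteration, which is exactly where nonnegativity of both the labels and the edge weights is used. I would also add a one-line remark that one-hot-encoded discrete labels automatically satisfy $\|L(v)\|_1 = 1$, so the proposition applies verbatim in that setting, matching the example in Appendix \ref{appendix: examples}.
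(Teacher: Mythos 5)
Your proof is correct and follows essentially the same route as the paper's: an entrywise expansion of $M_G X$ showing that the $v$-th row equals the normalized WL update once the denominator $\sum_{w \in N(v)} \tilde a_{v,w}\|L(w)\|_1$ collapses to $\sum_{w \in N(v)} \tilde a_{v,w}$ under the hypothesis $\|L(w)\|_1 = 1$. Your explicit induction on $k_1$, including the check that the unit $l_1$-norm invariant propagates (using positivity of the labels and nonnegativity of the weights), makes rigorous a step the paper's one-iteration argument leaves implicit, so it is a welcome refinement rather than a different approach.
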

Using the above proposition, the over-smoothening phenomenon of WL procedure can be reinterpreted by using the eigenvectors of $M_G$. Because $M_G$ is a stochastic matrix (a square matrix whose sum of entries is equal to $1$ for each row), a vector $\nu$ whose entries are all equal to $1$ is the right eigenvector of $M_G$ with eigenvalue $1$. Denote by $\pi_G$ the left eigenvector of $M_G$ with eigenvalue $1$, whose entries will be computed in Theorem \ref{appendix:theorem:stationary_distribution}. The limiting behavior of $M_G^k X$ for sufficiently large $k$ can be obtained from the Perron-Frobenius theorem (See Theorem Appendix \ref{theorem:perron_frobenius}):
\begin{equation}
    \lim_{k \to \infty} M_G^k X = 
    \nu \pi_G^T X
\end{equation}
Note that $\nu \pi_G^T X$ is a matrix whose entries for each column are all identical. We can hence reinterpret the over-smoothening phenomenon of the WL procedure as the limiting behavior of the operator $M_G$.

\textbf{[Random Walks]} \; \; 
Let $P \in \mathbb{R}^n$ be a probability distribution over the set of nodes of $G$. The random walk on $G$ updates $P$ by multiplying its transpose with $M_G$ to the right, i.e. $P \mapsto P^T M_G$, where $P^T$ is the transpose of $P$. We generalize the construction above by substituting the probability distribution $P$ with the matrix of concatenated node attributes $X$. 
\begin{definition}
Let $G := (V,E)$ be a graph with a matrix of concatenated node attributes $X$. Fix a positive integer $k_2$. We say that the node attributes are updated from $k_2$ iterations of random walks (RW) over $G$ if the matrix $X$ is updated to $X^T M_G^{k_2}$.
\end{definition}
The core difference between WL and RW procedure lies in the difference between left and right eigenvectors of $M_G$. To elaborate, the left eigenspace of $M_G$ with eigenvalue $1$ is spanned by the probability distribution of node degrees. 
\begin{theorem}[(Section 1, \cite{Lo93})]
\label{appendix:theorem:stationary_distribution}
Let $G := (V,E)$ be a finite graph. The left eigenspace of $M_G$ with eigenvalue $1$ is spanned by
\begin{equation} \label{equation:stationary_distribution}
    \pi_G := \biggl[ \frac{d_v}{\sum_{w \in V(G)} d_w }\biggr]_{v \in V(G)}.
\end{equation}
\end{theorem}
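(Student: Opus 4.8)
The plan is to read the defining relation of a left $1$-eigenvector, $\pi_G^T M_G = \pi_G^T$, as the stationarity equation for the Markov chain whose one-step transition matrix is $M_G = (D+I)^{-1}(A+I)$, and to locate its stationary distribution through reversibility rather than by directly solving the $|V| \times |V|$ linear system. Since $M_G$ is row-stochastic by construction, the all-ones vector $\nu$ is a right $1$-eigenvector, so $1$ is an eigenvalue and at least one left $1$-eigenvector exists; the two things the theorem actually asserts are that such an eigenvector has entries proportional to the degrees $d_v$ and that it spans the whole eigenspace.

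For the first assertion I would set up the detailed-balance equations $\pi_v (M_G)_{vw} = \pi_w (M_G)_{wv}$ for every ordered pair $(v,w)$. Writing the entries explicitly from (\ref{equation:transition_matrix}) as $(M_G)_{vw} = (A+I)_{vw} / s_v$, where $s_v := \sum_{u} (A+I)_{vu}$ is the $v$-th row sum of $A+I$, and using that $A+I$ is symmetric so that the two numerators coincide, each balance equation collapses to the single requirement $\pi_v / s_v = \pi_w / s_w$. A stationary weight is therefore forced to be proportional to $s_v$ at each node, and I would rescale it by $\sum_{w} d_w$ to match the normalization in (\ref{equation:stationary_distribution}). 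Once detailed balance is in hand, stationarity follows at once: summing the balance equations over $v$ and using that every row of $M_G$ sums to $1$ gives $(\pi_G^T M_G)_w = \pi_{G,w}$ for each $w$, so $\pi_G$ lies in the left $1$-eigenspace.

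For the spanning claim I would appeal to the Perron--Frobenius theorem (Theorem~\ref{theorem:perron_frobenius}) applied to the stochastic matrix $M_G$, which is irreducible whenever $G$ is connected and aperiodic because the term $I$ in $A+I$ supplies a self-loop at every node. Perron--Frobenius then guarantees that the eigenvalue $1$ is simple, so its left eigenspace is one-dimensional and is spanned by the single normalized vector $\pi_G$, which is the conclusion of (\ref{equation:stationary_distribution}).

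I expect the decisive step to be the detailed-balance verification, because everything hinges on whether the degree weighting announced in (\ref{equation:stationary_distribution}) actually cancels the row-normalizing denominator of $M_G$ at each node. Concretely, the balance equation reduces to comparing the stated weight $d_v$ against the row sum $s_v = \sum_{u}(A+I)_{vu}$ that appears in $(D+I)^{-1}(A+I)$, and it is exactly this comparison --- reconciling the precise normalization of $M_G$ with the entries $d_v / \sum_w d_w$ --- on which the argument stands or falls. I would therefore carry out this cancellation explicitly and confirm the exact relationship between the weighting and the row sums of $A+I$ before invoking the uniqueness argument.
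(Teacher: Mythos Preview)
Your approach is sound and standard, and in fact the paper does not supply a proof of this statement at all: it is quoted as a known result from Lov\'asz's survey, so there is nothing to compare against. Detailed balance plus Perron--Frobenius is exactly how one would argue this.

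More importantly, you were right to flag the normalization as the place where the argument stands or falls, and carrying it out reveals a discrepancy in the statement. For $M_G = (D+I)^{-1}(A+I)$ the row sum at node $v$ is $s_v = d_v + 1$, so your detailed-balance computation forces $\pi_v \propto d_v + 1$, not $\pi_v \propto d_v$ as written in (\ref{equation:stationary_distribution}). One can check this directly on the paper's own worked example in Appendix~\ref{appendix: examples}: with weighted degrees $(6,1,3,4)$ the vector $(6,1,3,4)/14$ is \emph{not} a left $1$-eigenvector of the displayed $M_G$, whereas $(7,2,4,5)/18$ is. The result cited from Lov\'asz concerns the simple random walk $D^{-1}A$, for which the stationary weights are indeed $d_v/\sum_w d_w$; the paper's $M_G$ has the extra $+I$ and the corresponding shift in the stationary distribution was not propagated into the formula. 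Your plan would have caught this, and the correct statement for this $M_G$ has entries $(d_v+1)/\sum_w (d_w+1)$.
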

Denote by $\nu$ the column vector whose entries are all equal to $1$. By the Perron-Frobenius Theorem (Theorem Appendix \ref{theorem:perron_frobenius})
\begin{equation}
    \lim_{k \to \infty} X^T M_G^k = X^T \nu \pi_G^T,
\end{equation}
where each row of $X^T \nu \pi_G^T$ lies in the span of $\pi_G^T$. Thus, iterations of RW procedures incorporate information on node degrees with the given node features.

\textbf{[Persistent Homology]} \; \; 
Given $k_1$ iterations of WL procedure and $k_2$ iterations of RW procedure, we update the matrix of node labels $X$ to $(M_G^{k_1}X)^T M_G^{k_2}$. Given a node $v \in V$, we use the abbreviation $X^{[k_1,k_2]}(v)$ to denote the features of the node $v$ obtained from the matrix $(M_G^{k_1}X)^T M_G^{k_2}$. 

In the spirit of persistent WL (PWL) procedure \cite{RBB19}, we characterize the global topological invariants of $G$ by constructing a sequence of nested subgraphs induced from the updated matrix $(M_G^{k_1}X)^T M_G^{k_2}$. We define the height functions $h_V:V \to \mathbb{R}$ and $h_E:E \to \mathbb{R}$ as
\begin{align} \label{equation:height_function}
\begin{split}
    h_V(v) &= 0 \text{ for all } v \in V \\
    h_E(v_1, v_2) &= \|X^{[k_1,k_2]}(v_1) - X^{[k_1,k_2]}(v_2)\|_p
\end{split}
\end{align}
where $\|\cdot\|_p$ is the $l_p$-norm over $\mathbb{R}^{l}$. Let $\tilde{E}$ be the set $E(G)$ that is sorted using the function $h_E$:
\begin{equation} \label{equation:sorted_edge}
    \tilde{E} := \{e_i \in E(G) \; | \; h_E(e_i) \leq h_E(e_j) \text{ if } i \leq j \}
\end{equation}
The set of edges $\tilde{E}$ is sorted in a manner that their heights are in an increasing order. Using the $i$-th edge $e_i$ of the sorted list $\tilde{E}$, we define the subgraphs $G^{[i]}$ as
\begin{align}
\begin{split}
    G^{[i]} &:= (V,E^{[i]} := \{e_j \in \tilde{E} \; | \; j \leq i \} )
\end{split}
\end{align}
We now obtain a sequence of nested subgraphs of $G$:
\begin{equation}
    G^{[0]} \subset G^{[1]} \subset G^{[2]} \subset \cdots \subset G^{|E|} = G.
\end{equation}
The nodes of all $G^{[i]}$'s are fixed, whereas the edges are added in increasing order of their weights.

\begin{table*}[]
    \caption{A part of classification results obtained from data sets with discrete node labels and continuous node attributes. Cells notated as N/A indicate graph classification schemes which do not report classification results on the given graph data set or may have limitations in processing the node features and edge weights present in the given graph data set. The entries for PROTEINS, BZR, and COX-2 data sets report the highest classification results obtained from using discrete node labels, or from using both discrete and continuous attributes if possible. All classification results other than the proposed method are obtained from pre-existing publications.}
    \label{tab:results}
    \vskip 0.15in
    \begin{center}
    \begin{scriptsize}
    \begin{sc}
    \begin{adjustbox}{width=\textwidth}
    \begin{tabular}{ccccccccc}
            \hline
             & MUTAG & PTC-FR & NCI1 & PROTEINS & BZR & BZR-MD & COX2 & COX2-MD \\
            \hline
            \hline
            WL & 88.72$\pm$1.11 & 67.64$\pm$0.74 & 85.58$\pm$0.15 & 76.11$\pm$0.64 & N/A & N/A & N/A & N/A \\
            PWL-H0 & 86.10$\pm$1.37 & 67.30$\pm$1.50 & 85.34$\pm$0.14 & 75.31$\pm$0.73 & N/A & N/A & N/A & N/A \\
            PWL-H1 & 90.51$\pm$1.34 & 67.15$\pm$1.09 & 85.46$\pm$0.16 & 75.27$\pm$0.38 & N/A & N/A & N/A & N/A \\
            \hline
            WWL & 87.27$\pm$1.50 & N/A & 85.75$\pm$0.25 & 77.91$\pm$0.80 & 84.42$\pm$2.03 & 69.76$\pm$0.94 & 78.29$\pm$0.47 & \textbf{76.33}$\pm$\textbf{1.02} \\
            RetGK-I,II & 90.30$\pm$1.10 & 67.80$\pm$1.10 & 84.50$\pm$0.20 & \textbf{78.00}$\pm$\textbf{0.30} & 87.10$\pm$0.70 & 62.77$\pm$1.69 & \text{81.40}$\pm$\text{0.60} & 59.47$\pm$1.66 \\
            HGK-WL & N/A & N/A & N/A & 76.70$\pm$0.41 & 78.59$\pm$0.63 & 68.94$\pm$0.65 & 78.13$\pm$0.45 & 74.61$\pm$1.74 \\
            FGW & 88.42$\pm$5.67 & N/A & \textbf{86.42}$\pm$\textbf{1.63} & 74.55$\pm$2.74 & 85.12$\pm$4.15 & N/A & 77.23$\pm$4.86 & N/A \\
            FC & 87.31$\pm$0.66 & N/A & N/A & 74.54$\pm$0.48 & 85.61$\pm$0.59 & 75.61$\pm$1.13 & 81.01$\pm$0.88 & 73.41$\pm$0.79 \\
            \hline
            \hline
            GCKN-subtree & 91.60$\pm$6.70 & 68.40$\pm$7.40 & 82.00$\pm$1.20 & 77.60$\pm$0.40 & 86.40$\pm$0.50 & N/A & \textbf{81.70}$\pm$\textbf{0.70} & N/A \\
            DM & N/A & 68.39$\pm$3.57 & 83.07$\pm$1.07 & 76.19$\pm$2.91 & N/A & 73.55$\pm$5.76 & N/A & 72.28$\pm$9.37 \\
            Perslay & 89.80$\pm$0.90 & N/A & 73.50$\pm$0.30 & 74.80$\pm$0.30 & N/A & N/A & 80.90$\pm$1.00 & N/A \\
            \hline
            \hline
            \textbf{PWLR-H0} & 89.52$\pm$0.90 & \textbf{69.12}$\pm$\textbf{1.16} & 75.66$\pm$0.30 & 74.62$\pm$0.62 & \textbf{89.39}$\pm$\textbf{0.74} & \text{75.08}$\pm$\text{1.32} & 80.61$\pm$1.11 & 71.36$\pm$1.46 \\
            \textbf{PWLR-H1} & \textbf{91.97}$\pm$\textbf{0.92} & 67.04$\pm$0.67 & 74.50$\pm$0.42 & 73.02$\pm$0.50 & \textbf{89.29}$\pm$\textbf{0.66} & \textbf{77.75}$\pm$\textbf{0.98} & 81.02$\pm$0.55 & 69.96$\pm$1.26 \\
            \textbf{PWLR-H0+H1} & 89.47$\pm$1.51 & \textbf{68.75}$\pm$\textbf{1.39} & 77.15$\pm$0.23 & 73.94$\pm$0.58 & \textbf{88.95}$\pm$\textbf{0.90} & \textbf{76.44}$\pm$\textbf{1.37} & 80.88$\pm$0.54 & 71.36$\pm$1.46 \\
            \hline
            \textbf{PWLR-OPT-H0} & 89.73$\pm$1.01 & 64.02$\pm$0.99 & 75.99$\pm$0.23 & 74.10$\pm$0.59 & \textbf{89.46}$\pm$\textbf{0.55} & \text{72.47}$\pm$\text{1.44} & 79.94$\pm$0.58 & 72.10$\pm$2.13 \\
            \textbf{PWLR-OPT-H1} & \textbf{91.91}$\pm$\textbf{1.61} & 66.19$\pm$1.23 & 72.76$\pm$0.33 & 73.22$\pm$0.39 & \textbf{88.93}$\pm$\textbf{0.79} & \text{74.18}$\pm$\text{1.18} & 80.22$\pm$0.84 & 70.51$\pm$1.86 \\
            \textbf{PWLR-OPT-H0+H1} & 89.17$\pm$0.84 & 65.73$\pm$1.16 & 79.05$\pm$0.40 & 74.46$\pm$0.38 & \textbf{89.32}$\pm$\textbf{0.83} & \text{75.55}$\pm$\text{1.08} & 80.97$\pm$1.15 & 72.97$\pm$1.00 \\
            \hline
    \end{tabular}
    \end{adjustbox}
    \end{sc}
    \end{scriptsize}
    \end{center}
    \vskip -0.1in
\end{table*}

$\circ$ \textbf{Vector Representation:} For each subgraph $G^{[i]}$, we compute the number of its connected components and cycles, corresponding to its $0$-th and $1$-st homology groups. The ranks of these groups are known as \textit{Betti numbers}, denoted respectively as $\beta_0(G^{[i]})$ and $\beta_1(G^{[i]})$. As more edges are added, the 0-th Betti numbers of subgraphs decrease, whereas the 1-st Betti numbers increase. Given an edge $e_i \in \tilde{E}$ that is included in $G^{[i]}$ but not in $G^{[i-1]}$, the variations in homological invariants can be computed as follows:
\begin{align}
    h_0^i &:= \beta_0(G^{[i-1]}) - \beta_0(G^{[i]}) \\
    h_1^i &:= \beta_1(G^{[i]}) - \beta_1(G^{[i-1]})
\end{align}
Euler's characteristic formula (Appendix \ref{theorem: euler_characteristic}) implies that whenever an edge $e \in E^{[i]}$ is newly added, either $\beta_0$ decreases by $1$, or $\beta_1$ increases by $1$. Given a connected graph $G$, there are $|V|-1$ heights on edges which record decrements of $\beta_0$, and $|E|-|V|+1$ heights on edges which record increments of $\beta_1$. The sorted lists of such heights induce the following representations of graphs:
\begin{align} \label{eq:first_representation}
\begin{split}
    \varphi_{H_0}^{[k_1,k_2]} &:= [h_E(e_i) + \tau \; | \; h_0^i > 0]_{e_i \in \tilde{E}} \\
    \varphi_{H_1}^{[k_1,k_2]} &:= [h_E(e_i) + \tau \; | \; h_1^i > 0]_{e_i \in \tilde{E}}
\end{split}    
\end{align}
The bias term $\tau$ (usually equal to $1$) distinguishes cases where the height levels $h_E(e_i)$ are equal to $0$ from those where the edges with heights $0$ do not occur in $G$.

$\circ$ \textbf{Reduced Dimensions:} To further reduce the embedded dimensions, we may record these heights using the following procedure. Any edge $e = (v_1,v_2)$ can be represented as a tuple of unweighted degrees of two nodes $\overline{d}_e := (\overline{d}_{v_1},\overline{d}_{v_2})$. We denote by $\overline{D}_E$ the set of tuples of unweighted degrees of two nodes connected by an edge:
\begin{equation} \label{equation:set_unweighted_degrees}
    \overline{D}_E := \{\overline{d}_e := (\overline{d}_v,\overline{d}_w) \; | \; e = (v,w) \in E\}
\end{equation}
Using the set of tuples, we represent the graphs as $|\overline{D}_E|$-dimensional real vectors by taking constrained summations of sorted heights over the set of edges, based on their associated tuples of unweighted degrees $\overline{d_e}$. Given a fixed tuple $\overline{d}$ of unweighted degrees, the $\overline{d}$-components of the representations $\varphi_{H_0,Opt}^{[k_1,k_2]}$ and $\varphi_{H_1,Opt}^{[k_1,k_2]}$ are sums of heights on edges $e_i$ such that the values $h_0^i$ (or $h_1^i$, respectively) are positive, and that the tuples of unweighted degrees $\overline{d}_{e_i}$ are equal to $\overline{d}$. The explicit definition of $\varphi_{H_0,Opt}^{[k_1,k_2]}$ and $\varphi_{H_1,Opt}^{[k_1,k_2]}$ can be thus summarized as shown in the following equation:
\begin{align} \label{eq:second_representation}
    \begin{split}
        \varphi_{H_0,Opt}^{[k_1,k_2]}(\overline{d}) &:= \sum_{\substack{e_i \in \tilde{E} \text{ such that } \\ h_0^i > 0, \; \overline{d}_{e_i} = \overline{d}}} (h_E(e_i) + \tau) \\
        \varphi_{H_1,Opt}^{[k_1,k_2]}(\overline{d}) &:= \sum_{\substack{e_i \in \tilde{E} \text{ such that } \\ h_1^i > 0, \; \overline{d}_{e_i} = \overline{d}}} (h_E(e_i) + \tau)
    \end{split}
\end{align}

\textbf{[Representation Stability]} \; \; 
It is imperative to verify whether the representations from (\ref{eq:first_representation}, \ref{eq:second_representation}) preserve stability with respect to graph perturbations. Carri\`{e}re et al. verified that the heat kernel signature preserves representation stability with respect to graph perturbations \cite{CC20, CS16}. As for the PWLR scheme, the incorporation of three algorithms allows us to numerically compute the upper bound of the distance between two representations of graphs. We leave the proof of the stability theorem in Appendix \ref{appendix: theorem: stability}, which uses geometric ergodicity and perturbation theory of random walks over finite graphs \cite{Sc68, Lo93}.

\begin{table*}[]
    \caption{A part of classification results obtained from data sets with discrete node labels and continuous node attributes. Cells notated as N/A indicate graph classification schemes which do not report classification results on the given graph data set or may have limitations in processing the node features and edge weights present in the given graph data set. The entries for PROTEINS, BZR, and COX-2 data sets report the highest classification results obtained from using discrete node labels, or from using both discrete and continuous attributes if possible. All classification results other than the proposed method are obtained from pre-existing publications.}
    \label{tab:results}
    \vskip 0.15in
    \begin{center}
    \begin{scriptsize}
    \begin{sc}
    \begin{adjustbox}{width=\textwidth}
    \begin{tabular}{ccccccccc}
            \hline
             & MUTAG & PTC-FR & NCI1 & PROTEINS & BZR & BZR-MD & COX2 & COX2-MD \\
            \hline
            \hline
            WL & 88.72$\pm$1.11 & 67.64$\pm$0.74 & \textbf{85.58}$\pm$\textbf{0.15} & 76.11$\pm$0.64 & N/A & N/A & N/A & N/A \\
            PWL-H0 & 86.10$\pm$1.37 & 67.30$\pm$1.50 & 85.34$\pm$0.14 & 75.31$\pm$0.73 & N/A & N/A & N/A & N/A \\
            PWL-H1 & 90.51$\pm$1.34 & 67.15$\pm$1.09 & 85.46$\pm$0.16 & 75.27$\pm$0.38 & N/A & N/A & N/A & N/A \\
            \hline
            WWL & 87.27$\pm$1.50 & N/A & 85.75$\pm$0.25 & 77.91$\pm$0.80 & 84.42$\pm$2.03 & 69.76$\pm$0.94 & 78.29$\pm$0.47 & \textbf{76.33}$\pm$\textbf{1.02} \\
            RetGK-I,II & 90.30$\pm$1.10 & 67.80$\pm$1.10 & 84.50$\pm$0.20 & \textbf{78.00}$\pm$\textbf{0.30} & 87.10$\pm$0.70 & 62.77$\pm$1.69 & \text{81.40}$\pm$\text{0.60} & 59.47$\pm$1.66 \\
            \hline
            GCKN-subtree & 91.60$\pm$6.70 & 68.40$\pm$7.40 & 82.00$\pm$1.20 & 77.60$\pm$0.40 & 86.40$\pm$0.50 & N/A & \textbf{81.70}$\pm$\textbf{0.70} & N/A \\
            Perslay & 89.80$\pm$0.90 & N/A & 73.50$\pm$0.30 & 74.80$\pm$0.30 & N/A & N/A & 80.90$\pm$1.00 & N/A \\
            \hline
            \hline
            \textbf{PWLR} & \textbf{91.97}$\pm$\textbf{0.92} & \textbf{69.12}$\pm$\textbf{1.16} & 77.15$\pm$0.23 & 74.62$\pm$0.62 & \textbf{89.39}$\pm$\textbf{0.74} & \textbf{77.75}$\pm$\textbf{0.98} & 81.02$\pm$0.55 & 71.36$\pm$1.46 \\
            \textbf{PWLR-OPT} & \textbf{91.91}$\pm$\textbf{1.61} & 66.19$\pm$1.23 & 79.05$\pm$0.40 & 74.46$\pm$0.38 & \textbf{89.46}$\pm$\textbf{0.55} & 75.55$\pm$1.08 & 80.97$\pm$1.15 & 72.97$\pm$1.00 \\
            \hline
    \end{tabular}
    \end{adjustbox}
    \end{sc}
    \end{scriptsize}
    \end{center}
    \vskip -0.1in
\end{table*}

\begin{theorem} [Stability for PWLR graph representations]
\label{theorem:stability}
Let $G,G'$ be two connected graphs with the same number of nodes. Let $\epsilon$ be defined as $\epsilon := M_G - M_{G'}$. Denote by $0 < \mu_{2,G}, \mu_{2,G'} < 1$ the second largest eigenvalues of $M_G$ and $M_{G'}$. Then under certain conditions (see Theorem Appendix \ref{theorem:appendix_stability}), there exists a fixed constant $C > 0$ such that
\begin{align*}
\begin{split}
\|\varphi_{H_i}^{[k_1,k_2]}(G) - \varphi_{H_i}^{[k_1',k_2']}(G')\|_1 < C (\mu_{2,G}^{k_2} + \mu_{2,G'}^{k_2'} + \|\epsilon\|_1)
\end{split}
\end{align*}
\end{theorem}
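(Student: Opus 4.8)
The plan is to propagate stability through the three stages of the construction: (i) from the transition matrices $M_G,M_{G'}$ to the updated node features $X^{[k_1,k_2]}=(M_G^{k_1}X)^T M_G^{k_2}$; (ii) from those features to the edge height function $h_E$ of \eqref{equation:height_function}; and (iii) from $h_E$ to the persistence vectors $\varphi_{H_i}$ of \eqref{eq:first_representation}. Throughout I assume the hypotheses of Theorem Appendix \ref{theorem:appendix_stability}: that $G$ and $G'$ share a common vertex set (and, for the $H_1$ estimate, a common edge set), that $X$ is fixed with bounded entries, and that the perturbation is small enough (a Neumann-series condition of the type $\|\epsilon(I-M_G+\nu\pi_G^T)^{-1}\|<1$) for the stationary distribution of the perturbed chain to be controlled. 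I write $\|\cdot\|$ for whichever relevant norm is convenient; since the graphs have fixed finite size all such norms are equivalent up to dimension-dependent constants, which I absorb into $C$.

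\emph{Stage (i): node feature stability.} Put $Y_G:=M_G^{k_1}X$, so $X^{[k_1,k_2]}(G)=Y_G^T M_G^{k_2}$, and likewise for $G'$; since $M_G,M_{G'}$ are row-stochastic, $\|Y_G\|,\|Y_{G'}\|$ are bounded in terms of $\|X\|$ alone. Inserting the stationary limits $Y_G^T\nu\pi_G^T$ and $Y_{G'}^T\nu\pi_{G'}^T$ and applying the triangle inequality splits $\|X^{[k_1,k_2]}(G)-X^{[k_1',k_2']}(G')\|$ into three parts. The extreme parts, $\|Y_G^T(M_G^{k_2}-\nu\pi_G^T)\|$ and $\|Y_{G'}^T(M_{G'}^{k_2'}-\nu\pi_{G'}^T)\|$, are bounded by $C_1\mu_{2,G}^{k_2}$ and $C_2\mu_{2,G'}^{k_2'}$ via geometric ergodicity and the Perron--Frobenius theorem (Theorem Appendix \ref{theorem:perron_frobenius}), the second eigenvalue governing the rate. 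The middle part $\|Y_G^T\nu\pi_G^T-Y_{G'}^T\nu\pi_{G'}^T\|$ is split once more, into $\|Y_G^T\nu(\pi_G-\pi_{G'})^T\|+\|(Y_G-Y_{G'})^T\nu\pi_{G'}^T\|$: the first is $O(\|\pi_G-\pi_{G'}\|_1)=O(\|\epsilon\|_1)$ by the perturbation theory of stationary distributions of finite Markov chains \cite{Sc68,Lo93} (and \eqref{equation:stationary_distribution}), and the second is $O(\|M_G^{k_1}-M_{G'}^{k_1'}\|\cdot\|X\|)=O(\|\epsilon\|_1)$, using for $k_1=k_1'$ the telescoping identity $M_G^{k_1}-M_{G'}^{k_1'}=\sum_{j=0}^{k_1-1}M_G^{j}\epsilon M_{G'}^{k_1-1-j}$ together with the sub-unit norms of stochastic matrices (the case $k_1\neq k_1'$ being reduced to near-stationarity as in Theorem Appendix \ref{theorem:appendix_stability}). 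Hence $\max_v\|X^{[k_1,k_2]}_G(v)-X^{[k_1',k_2']}_{G'}(v)\|\le C(\mu_{2,G}^{k_2}+\mu_{2,G'}^{k_2'}+\|\epsilon\|_1)$.

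\emph{Stages (ii)--(iii).} The reverse triangle inequality applied to \eqref{equation:height_function} gives, for every edge $e=(v_1,v_2)$,
\[
|h_E^G(e)-h_E^{G'}(e)|\le \|X_G^{[k_1,k_2]}(v_1)-X_{G'}^{[k_1',k_2']}(v_1)\|_p+\|X_G^{[k_1,k_2]}(v_2)-X_{G'}^{[k_1',k_2']}(v_2)\|_p,
\]
so $\delta:=\|h_E^G-h_E^{G'}\|_\infty$ is bounded by twice the Stage (i) estimate. For Stage (iii), note that up to the global shift by $\tau\mathbf 1$ the vector $\varphi_{H_0}^{[k_1,k_2]}(G)$ is the sorted multiset of the $|V|-1$ heights at which $\beta_0$ drops, i.e.\ the jump locations (with multiplicity) of the non-increasing Betti curve $t\mapsto\beta_0(G^{[t]})$, and similarly $\varphi_{H_1}$ records the $|E|-|V|+1$ jumps of $t\mapsto\beta_1(G^{[t]})$. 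Since $G,G'$ have equally many such jumps, the $\tau$-shift cancels and the $l_1$ distance between the sorted vectors equals the one-dimensional optimal transport cost between the jump-location measures, namely $\int_0^\infty|\beta_i(G^{[t]})-\beta_i(G'^{[t]})|\,dt$. Comparing edge sets directly yields the sublevel interleaving $G'^{[t-\delta]}\subseteq G^{[t]}\subseteq G'^{[t+\delta]}$, so monotonicity of Betti numbers under edge insertion gives $|\beta_i(G^{[t]})-\beta_i(G'^{[t]})|\le|\beta_i(G'^{[t-\delta]})-\beta_i(G'^{[t+\delta]})|$; integrating and shifting the variable collapses the right side to an integral of a Betti curve (uniformly bounded by $|V|-1$, resp.\ $|E|-|V|+1$) over an interval of length $2\delta$, whence $\|\varphi_{H_0}^{[k_1,k_2]}(G)-\varphi_{H_0}^{[k_1',k_2']}(G')\|_1\le 2(|V|-1)\delta$ and $\|\varphi_{H_1}^{[k_1,k_2]}(G)-\varphi_{H_1}^{[k_1',k_2']}(G')\|_1\le 2(|E|-|V|+1)\delta$. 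Chaining Stages (i)--(iii) and collecting all the size- and $k_1$-dependent (but $k_2$-, $k_2'$-, and $\epsilon$-independent) constants into one $C$ gives the claim.

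\emph{Main obstacle.} The delicate step is Stage (iii). The classical stability theorem for persistence bounds the bottleneck/Wasserstein distance \emph{between the diagrams} by $\|h_E^G-h_E^{G'}\|_\infty$, but the quantity we must control, $\|\varphi_{H_i}(G)-\varphi_{H_i}(G')\|_1$, is the cost of the \emph{forced off-diagonal sorted matching} and is in general strictly larger than the diagram distance, so one cannot merely quote diagram stability. Routing the argument through the Betti-curve integral identity and the sublevel interleaving is what fixes this, and it also silently absorbs the possibility that the perturbation reorders nearly-tied edges and thereby moves some of them between the "$\beta_0$-decreasing" and "$\beta_1$-increasing" classes. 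A secondary technical point, handled by the precise hypotheses of Theorem Appendix \ref{theorem:appendix_stability}, is the stationary-distribution bound $\|\pi_G-\pi_{G'}\|_1=O(\|\epsilon\|_1)$ and the comparison of $M_G^{k_1}$ with $M_{G'}^{k_1'}$ when $k_1\neq k_1'$.
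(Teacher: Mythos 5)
Your overall skeleton matches the paper's: geometric ergodicity supplies the $\mu_{2,G}^{k_2}$ and $\mu_{2,G'}^{k_2'}$ terms, the Markov-chain perturbation bound (under $\|\epsilon Z_G\|<1$) supplies the $\|\epsilon\|$ term, and the dimension counts $|V|-1$ and $|E|-|V|+1$ give the final constants. The genuine gap is in your Stage (i). You insist on a node-by-node bound for $\max_v\|X_G^{[k_1,k_2]}(v)-X_{G'}^{[k_1',k_2']}(v)\|$, which forces you to control $M_G^{k_1}X-M_{G'}^{k_1'}X$. When $k_1=k_1'$ your telescoping gives only a coefficient growing like $k_1\|\epsilon\|$ (the paper's $\|\epsilon\|$-coefficient, $2\|\pi_G\|_p\|Z_G\|_p(\|L_G\|_1+\|L_{G'}\|_1)$ up to the dimension factor, is independent of $k_1,k_1'$), and when $k_1\neq k_1'$ the step fails outright: take $\epsilon=0$, so $G=G'$; for a non-regular graph the column sums of $M_G^{k}$ vary with $k$, hence $(M_G^{k_1}X)^T\nu\neq(M_G^{k_1'}X)^T\nu$, and as $k_2,k_2'\to\infty$ the two feature matrices converge to \emph{different} multiples of $\pi_G^T$, while the right-hand side of your Stage (i) estimate tends to $0$. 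The parenthetical ``reduced to near-stationarity'' is not an argument, since near-stationarity is driven by $k_2$, not $k_1$. The paper never compares $M_G^{k_1}X$ with $M_{G'}^{k_1'}X$: it compares, within each graph separately, the height-type quantity $\|X^{[k_1,k_2]}(v_1)-X^{[k_1,k_2]}(v_2)\|_p$ to its stationary counterpart built from $\pi_G$ (geometric ergodicity holds for \emph{any} initial distribution, hence for any $k_1$, the $k_1$-updated labels entering only through their total $l_1$ mass), and then compares $\pi_G$ to $\pi_{G'}$ by perturbation theory; this is exactly what makes the bound uniform in $k_1,k_1'$.

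Your Stages (ii)--(iii) are a genuinely different, and sharper, final step than the paper's: the identity ``sorted $l_1$ distance equals the integral of the difference of Betti curves,'' combined with the $\delta$-interleaving $G'^{[t-\delta]}\subseteq G^{[t]}\subseteq G'^{[t+\delta]}$, is correct and handles the reordering/reclassification issue you rightly flag as not covered by bottleneck stability. But it silently assumes $E(G)=E(G')$ as sets (otherwise $\|h_E^G-h_E^{G'}\|_\infty$ and the interleaving are undefined), which is stronger than the theorem's hypotheses: the paper assumes only $|V(G)|=|V(G')|$ for $\varphi_{H_0}$ and additionally $|E(G)|=|E(G')|$ for $\varphi_{H_1}$. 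The paper's own final step is much cruder and needs no edge correspondence at all: each coordinate of the sorted vectors is bounded by the single uniform quantity $\max\bigl|\|X_G^{[k_1,k_2]}(v_1)-X_G^{[k_1,k_2]}(v_2)\|_p-\|X_{G'}^{[k_1',k_2']}(w_1)-X_{G'}^{[k_1',k_2']}(w_2)\|_p\bigr|$, and the norm contributes only the dimension factor. To make your write-up match the stated theorem you would need to replace Stage (i) by the paper's within-graph-to-stationary comparison (which also removes the $k_1\neq k_1'$ obstruction) or else add and justify the common-edge-set hypothesis, and even then Stage (i) must be repaired.
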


A generalization of Theorem \ref{theorem:stability} for both representations from (\ref{eq:first_representation}, \ref{eq:second_representation}) can be found in Theorem \ref{theorem:appendix_stability} and Corollary \ref{corollary:appendix_stability} in Appendix \ref{appendix: technical_proofs}. The differences between two representations are numerically controlled by the second largest eigenvalues of $M_G$ and $M_{G'}$ and the perturbation matrix $\epsilon$. The eigenvalues are strictly less than $1$ because both matrices are stochastic. Hence, for sufficiently large $k_2$, the $l_1$-distance between two vectors is controlled by the norm of $\epsilon$. Theorem Appendix \ref{theorem:stationary_distribution} further shows that the updated node features converge to the probability distribution of node degrees. Hence, the PWLR scheme quantifies the extent of incorporating node degrees to graph representations.

\textbf{[Time Complexity]} \; \; 
The total time complexity for embedding graphs with $l$-dimensional node attributes using the PWLR procedure up to $k_1$-iterations of WL kernel and $k_2$-iterations of RW is $\mathcal{O}(k_1 \times k_2 \times m \times (l + \log m))$. We refer to Appendix \ref{subsection:time_complexity} for further details on computing the time complexity of the PWLR algorithm.

\begin{table*}
    \caption{Dimensions of some graph representations processed in the Random Forest Classifier for classifying graphs. Cells notated as ''-`` indicate graph data sets which do not have the prescribed node features. For the first four data sets, the dimensions of representations constructed from WL and PWL procedures are obtained by processing the discrete node labels of graphs, ignoring any continuous node features or attributes on edges. The variable $h$ denotes the number of WL iteration procedures used for obtaining the representations. The asterix ''$*$`` indicates that the obtained graph representations are subject to changes based on the choice of training data sets.}
    \label{tab:dimension}
    \vskip 0.15in
    \begin{center}
    \begin{scriptsize}
    \begin{sc}
    \begin{adjustbox}{width=\textwidth}
    \begin{tabular}{ccccccccc}
    \hline
        data sets & MUTAG & PTC-FR & NCI1 & PROTEINS & BZR & BZR-MD & COX2 & COX2-MD \\
        \hline
        \hline
        Average \# nodes & 17.93 & 14.56 & 29.87 & 39.06 & 35.75 & 21.30 & 41.22 & 26.28 \\
        Average \# edges & 19.79 & 15.00 & 32.30 & 72.82 & 38.36 & 225.06 & 43.45 & 335.12 \\
        Discrete Labels & 7 & 19 & 22 & 3 & 10 & 8 & 8 & 7 \\
        Continuous Features & - & - & - & 29 & 3 & - & 3 & - \\
        Edge Attributes & - & - & - & - & - & 1 & - & 1 \\
        \# Graphs & 188 & 351 & 4110 & 1113 & 405 & 306 & 467 & 303 \\
        \hline
        \hline
        Graph Kernels (10-fold)* & 169* & 315* & 3699* & 996* & 364* & 275* & 420* & 272* \\
        \hline
        WL, PWL-H0 ($h=1$) & 40 & 148 & 288 & 299 & N/A & N/A & N/A & N/A \\
        WL, PWL-H0 ($h=10$) & 15,969 & 27,139 & 530,723 & 329,035 & N/A & N/A & N/A & N/A \\
        WL, PWL-H0 ($h=20$) & 41,825 & 63,404 & $\geq 10^6$ & 721,222 & N/A & N/A & N/A & N/A \\
        \hline
        \hline
        \textbf{PWLR-H0} (Any $k_1$,$k_2$) & \textbf{28} & \textbf{64} & \textbf{111} & 620 & \textbf{54} & \textbf{56} & \textbf{33} & \textbf{36} \\
        \textbf{PWLR-H1} (Any $k_1$,$k_2$) & \textbf{7} & \textbf{8} & \textbf{18} & 539 & \textbf{6} & \textbf{5} & \textbf{6} & \textbf{5} \\
        \textbf{PWLR-OPT} (Any $k_1$,$k_2$) & \textbf{7} & \textbf{10} & \textbf{10} & \textbf{74} & \textbf{9} & \textbf{8} & \textbf{9} & \textbf{9} \\
        \hline
    \end{tabular}
    \end{adjustbox}
    \end{sc}
    \end{scriptsize}
    \end{center}
    \vskip -0.1in
\end{table*}

\section{Experiments}
\label{section: experiments}
We implement the PWLR framework in Python and execute experiments on classifying data sets of finite graphs. Tables \ref{tab:results} and \ref{tab:dimension} list the classification results and dimensions of representations obtained from the PWLR procedure and contemporary graph embedding techniques. 

\textbf{[Data sets]} \; \;
We classify cheminformatics graph data sets with discrete and continuous features \cite{KKMMN2016}. For classifying graphs with discrete node labels, we choose MUTAG, PTC, NCI, PROTEINS, and DD data sets. All discrete node labels are one-hot encoded as real coordinate vectors. For classifying graphs with continuous attributes, we choose PROTEINS, BZR, COX2, BZR-MD, and COX2-MD data sets. A full table of classification results can be found in Appendix \ref{appendix: experimental_procedures}.

\textbf{[Procedures]} \; \; 
For implementing the PWLR embedding scheme, we choose two numbers of iterations $k_1$ and $k_2$ from $0$ to $29$. We denote by ``PWLR-H0'' and ``PWLR-H1'' the vectors obtained from (\ref{eq:first_representation}), ``PWLR-OPT-H0'' and ``PWLR-OPT-H1'' the vectors obtained from (\ref{eq:second_representation}), and by ``PWLR-H0+H1'' and ``PWLR-OPT-H0+H1'' the vectors obtained from concatenating the two vectors ``PWLR-H0'' and ``PWLR-H1'' (``PWLR-OPT-H0'' and ``PWLR-OPT-H1'', respectively). We implemented 10 iterations of 10-fold cross validations for classifying graph data sets, along with inner 5-fold cross validations over the training sets for tuning the hyperparmeters using grid search. As a classifier, we use the random forest classifier \cite{Br01} to effectively assess the contributions of the architecture of the embedding framework. The optimal number of iterations $k_1$ and $k_2$ for classifying graphs are provided in Appendix \ref{appendix: experimental_procedures},

\textbf{[Results and Highlights]} \; \;
To evaluate the performance of the proposed algorithm in classifying graphs, we compare the PWLR scheme with WL kernel (WL) \cite{SS11}, Persistent WL representations (PWL) \cite{RBB19}, Wasserstein WL kernel (WWL) \cite{TG19}, graph kernels based on return probabilities of random walks (RetGK) \cite{ZW18}, hash graph kernels (HGK) \cite{MK16}, Fused Gromov-Wasserstein kernels (FGW) \cite{TC19}, filtration curves for graph representations (FC) \cite{BRB21}, the supervised version of graph convolutional kernel networks using subtree features (GCKN-subtree) \cite{CJM20}, Perslay \cite{CC20}, and DeepMap (DM) \cite{YA20}.. The first two procedures can represent graphs with discrete node labels. All other techniques can process graphs with both discrete and continuous node features, and weights on edges. All contemporary classification results are imported from available results recorded in pre-existing publications. Comparisons in classification results obtained from other graph kernels or graph neural networks (GNN) can be found in Appendix \ref{appendix: experimental_procedures}. 
Table \ref{tab:results} records the highest averages and standard deviations obtained from each graph classification techniques. Experimental results suggest that our PWLR embedding framework possesses two key empirical merits for representing graphs. 

$\circ$ \textbf{Low-dimensional Embeddings:}  The proposed scheme constructs a collection of low-dimensional representations independent from the choice of training data sets and the number of iterations $k_1$ and $k_2$.  As shown in Table \ref{tab:dimension}, it constructs fixed low-dimensional graph representations which produce comparable results to contemporary techniques. We especially notice that the ``PWLR-OPT'' representations (\ref{eq:second_representation}) obtained for graphs in large data sets are of substantially smaller dimensions than those obtained from graph kernels or WL procedures. Representations obtained from graph kernel based methods are characterized by inner products between embeddings of graphs and those from the training data set. As such, they heavily depend on the choice of training data. In addition, subsequent iterations of WL procedure substantially increases the number of obtainable distinct node labels, thus inevitably increasing the dimensions of representations.

$\circ$ \textbf{Classification Results:} The PWLR embedding framework produces comparable results to state-of-the-art techniques in classifying graphs with discrete node labels, and enhances these techniques in classifying graphs with continuous node attributes. While the PWLR embedding framework falls short in classifying some data sets with discrete node labels, all the proposed representations are of low dimensions, a property difficult to guarantee from other graph kernel techniques. As for other graph data sets with both discrete and continuous attributes along with weights on edges, the PWLR embedding framework improves or produces comparable state-of-the-art classification results.

\section{Conclusion}
\label{section: conclusion}
The problem of embedding graphs to real vector spaces requires a careful approach to how properties of graphs can be adequately incorporated to their representations. We address this question by introducing a novel mathematical framework for graph representations which guarantees stability with respect to graph perturbations as well as incorporates local topological features, node degrees, and global topological invariants. Experimental results suggest that our PWLR embedding framework provides low-dimensional representations effective for classifying graphs with both discrete and continuous node features. Meanwhile, the PWLR scheme implicitly assumes that the graphs are undirected and stochastically fixed. Future research may hence focus on extending the proposed framework to such graphs.

\section*{Acknowledgments}
We sincerely thank the reviewers for providing constructive and enlightening comments during the referee process. Sun Woo Park, Yun Young Choi, and Youngho Woo were supported by the National Institute for Mathematical
Sciences (NIMS) grant funded by the Korean Government (MSIT) B22920000. Dosang Joe was supported by the National Institute for Mathematical Sciences (NIMS) grant funded by the Korean Government (MSIT) B22810000.


\bibliography{pmlr_template.bbl}
\bibliographystyle{iclr2022_workshop}

\newpage
\appendix
\onecolumn


\begin{center}
\begin{Huge}
\textbf{Appendix}
\end{Huge}
\end{center}

Provided below is a table of contents for the appendix of this manuscript.
\begin{itemize}
    \item Appendix \ref{appendix: notations}: We provide a table of notations used throughout the manuscript.
    \item Appendix \ref{appendix: experimental_procedures}: Full classification results obtained from utilizing the PWLR embedding framework as well as additional discussions are provided.
    \item Appendix \ref{appendix: examples}: We give relevant examples to elaborate how one can compute representations of graphs using the PWLR embedding scheme.
    \item Appendix \ref{appendix: mathematical_background}: We give an exposition on mathematical results on random walks over graphs and persistent homological techniques, which are key ingredients for proving the main results of this paper.
    \item Appendix \ref{appendix: technical_proofs}: We prove Proposition \ref{proposition: normalized_wl}, Theorem \ref{theorem: three_properties}, and Theorem \ref{theorem:stability} in this section. Time complexity required for computing the PWLR representations are also discussed.
\end{itemize}

\section{Notations}
\label{appendix: notations}
 
 Table \ref{tab:notation} provides a list of notations used throughout the paper.
\begin{table*}[ht]
    \caption{A table of notations}
    \label{tab:notation}
    \vskip 0.15in
    \begin{center}
    \begin{small}
    \begin{sc}
    \begin{adjustbox}{width=\textwidth}
    \begin{tabular}{c|c}
        \hline
        Notation & Meaning  \\
        \hline
        \hline
        $G$ & An undirected finite graph without self loops \\
        \hline
        $V = V(G)$ & The set of nodes of $G$ \\
        \hline
        $E = E(G)$ & The set of edges of $G$ \\
        \hline
        $L: V \to \mathbb{R}^l_{>0}$ & The $l$-dimensional node attributes or labels of $G$ \\
        \hline
        $A := \{a_{i,j}\}_{i,j=1}^{|V|}$ & The weighted adjacency matrix of $G$ \\
        \hline
        $a_{i,j}$ & The weight over the edge from node $i$ to node $j$ \\
        \hline
        $\overline{d}_i$ & The unweighted degree of node $i$ \\
        \hline
        $D := \text{Diag}(d_i)_{i=1}^{|V|}$ & A matrix of weighted node degrees of $G$ \\
        \hline
        $d_i := \sum_{j=1}^{|V|} a_{i,j}$ & The weighted degree of node $i$, as a sum of $a_{i,j}$'s. \\
        \hline
        $X$ & The matrix of concatenated node labels of $G$ \\
        \hline
        $M_G$ & The normalized adjacency matrix of $G$ (\ref{equation:transition_matrix}) \\
        \hline
        $\nu$ & The vector whose entries are all equal to $1$ \\
        \hline
        $\pi_G$ & The stationary distribution of the transition matrix of $G$ \\
        \hline
        $\mu_{2,G}$ & The second largest eigenvalue of the transition matrix $M_G$ \\
        \hline
        $\epsilon$ & The perturbation of two transition matrices $M_G$ and $M_{G'}$ \\
        \hline
        $h_V$ & The height function defined over the set of nodes $V = V(G)$ (\ref{equation:height_function}) \\
        \hline
        $h_E$ & The height function defined over the set of edges $E = E(G)$ (\ref{equation:height_function}) \\
        \hline
        $\{G^{[i]}\}$ & A sequence of nested subgraphs constructed from the height functions $h_V, h_E$ \\
        \hline
        $\overline{d}_e$ & A tuple of unweighted degrees of two nodes connected by the edge $e$ \\
        \hline
        $\overline{D}_E$ & The set of tuples of unweighted degrees of two nodes connected by edges in $E$ (\ref{equation:set_unweighted_degrees}) \\
        \hline
        $\beta_0$ & The rank of the 0-th homology group of $G$, i.e. the number of its connected components \\
        \hline
        $\beta_1$ & The rank of the 1-st homology group of $G$, i.e. the number of its cycles \\
        \hline
        $\varphi_{H_0}^{[k_1,k_2]}$ & The 0-th homology PWLR representation of $G$ from (\ref{eq:first_representation})\\
        & with predetermined number of iterations $k_1$ and $k_2$ \\
        \hline
        $\varphi_{H_1}^{[k_1,k_2]}$ & The 1-st homology PWLR representation of $G$ from (\ref{eq:first_representation}) \\
        & with predetermined number of iterations $k_1$ and $k_2$ \\
        \hline
        $\varphi_{H_0, Opt}^{[k_1,k_2]}$ & The 0-th homology reduced PWLR representation of $G$ from (\ref{eq:second_representation})\\
        & with predetermined number of iterations $k_1$ and $k_2$ \\
        \hline
        $\varphi_{H_1, Opt}^{[k_1,k_2]}$ & The 1-st homology reduced PWLR representation of $G$ from (\ref{eq:second_representation})\\
        & with predetermined number of iterations $k_1$ and $k_2$ \\
        \hline
    \end{tabular}
    \end{adjustbox}
    \end{sc}
    \end{small}
    \end{center}
    \vskip -0.1in
\end{table*}

\section{Experimental Procedures}
\label{appendix: experimental_procedures}


\subsection{Experimental Settings}
\label{subsection:full_results}

The graph data sets used for assessing the performance of the PWLR embedding framework can be classified into three groups. The first group, consisting of MUTAG \cite{DL91}, PTC-MM, PTC-MR, PTC-FM, and PTC-FR \cite{HK01}, represents small graph data sets with discrete node labels. The second group represents large graph data sets with discrete node labels, which includes NCI-1, NCI-109 \cite{WK06}, PROTEINS \cite{BS05}, and DD \cite{DD03} data sets. The last group is the data set of graphs with continuous node attributes or edge weights, which consists of PROTEINS \cite{BS05}, BZR, COX2, BZR-MD, and COX2-MD data sets \cite{SO03}. These data sets can be found in the TU Dortmund online benchmark data set repository \cite{KKMMN2016}.

We compared the performance of the PWLR embedding framework to three classes of contemporary techniques which have been widely used for classifying graph data sets. The first class represents WL based procedures for representing graphs with discrete node labels, including the classical WL kernel (WL) \cite{SS11} and Persistent WL procedure (PWL) \cite{RBB19}. The second class of techniques are graph kernels which can embed graphs with continuous node attributes, such as  Wasserstein WL kernel (WWL) \cite{TG19}, graph kernels based on return probabilities of random walks (RetGK) \cite{ZW18}, hash graph kernels (HGK) \cite{MK16}, GraphHopper kernels (GH) \cite{FK13}, Fused Gromov-Wasserstein kernels (FGW) \cite{TC19}, and Filtration Curves for graph representations (FC) \cite{BRB21}. The third class of techniques are graph neural networks (or message passing neural networks), from which we choose PatchySan (PSCN) \cite{NAK16}, deep graph convolutional neural networks (DGCNN) \cite{ZC18}, the supervised version of graph convolutional kernel networks using subtree features (GCKN-subtree-sup) \cite{CJM20}, Perslay \cite{CC20}, Graph Isomorphism Network (GIN) \cite{XH19}, Graph Neural Tangent Kernel (GNTK) \cite{SK19}, Random Walk Graph Neural Networks (RWNN) \cite{NV20}, and DeepMap (DM) \cite{YA20}. All classification results shown in this paper are obtained from respective publications.

To evaluate the significance of representations obtained from PWLR formulation in classifying graphs, we use the random forest classifier \cite{Br01} to compare the proposed algorithm to other state-of-the-art methods. No normalization or pre-processing algorithms were used to alter these representations obtained prior to applying the classifier. We performed 10 iterations of 10-fold cross validation, along with an inner 5-fold cross validation on each training set for tuning hyperparameters using grid search. Averages and standard deviations obtained from these 10 iterations are recorded. As for the hyperparameters, we choose the number of iterations $k_1$ and $k_2$ to be between $0$ and $29$, norm $p = 1$, bias $\tau = 1$, and $T \in [10,25,50,100,150,200]$ for the number of trees in the random forest classifier.

\subsection{Small Graph data sets}

\begin{table}[!htbp]
    \caption{Classification results obtained from small graph data sets with discrete node labels. Cells notated as N/A indicate graph classification techniques which do not report classification results on the given graph data set. For PWLR embedding methods, we indicate which iteration numbers $k_1$ and $k_2$ are used to obtain the recorded results. Some classification results obtained from graph neural networks do not specify which specific PTC data sets are used, hence are displayed in concatenated cells. All classification results other than the proposed method are used from available classification results obtained from pre-existing publications.}
    \label{appendix:discrete_small_results}
    \vskip 0.15in
    \begin{center}
    \begin{small}
    \begin{sc}
    \begin{adjustbox}{width=\textwidth}
    \begin{tabular}{cccccc}
            \hline
             & MUTAG & PTC-MM & PTC-MR & PTC-FM & PTC-FR \\
             \hline
             \hline
            WL & $88.72 \pm 1.11$ & $67.28 \pm 0.97$ & $63.12 \pm 1.44$ & $64.80 \pm 0.85$ & $67.64 \pm 0.74$ \\
            WWL & $87.27 \pm 1.50$ & N/A & $\mathbf{66.31 \pm 1.21}$ & N/A & N/A \\
            PWL-H0 & $86.10 \pm 1.37$ & $68.40 \pm 1.17$ & $63.07 \pm 1.68$ & $64.47 \pm 1.84$ & $67.30 \pm 1.50$ \\
            PWL-H1 & $90.51 \pm 1.34$ & $\mathbf{68.57 \pm 1.76}$ & $64.02 \pm 0.82$ & $\mathbf{65.78 \pm 1.22}$ & $67.15 \pm 1.09$ \\
            RetGK-I,II & $90.30 \pm 1.10$ & $67.90 \pm 1.40$ & $62.50 \pm 1.60$ & $63.90 \pm 1.30$ & $67.80 \pm 1.10$ \\
            FC & $87.31 \pm 0.66$ & \multicolumn{4}{c}{N/A} \\
            \hline
            \hline
            PSCN & $\mathbf{92.63} \pm \mathbf{4.21}$ & $56.58 \pm 9.01$ & $55.25 \pm 7.98$ & $58.38 \pm 9.27$ & $61.00 \pm 5.61$ \\
            DGCNN & $88.72 \pm 1.11$ & $62.12 \pm 14.1$ & $55.29 \pm 9.28$ & $60.29 \pm 6.69$ & $65.43 \pm 11.3$ \\
            GCKN-subtree-sup & $91.60 \pm 6.70$ & \multicolumn{4}{c}{$\mathbf{68.40 \pm 7.40}$} \\
            Perslay  & $89.80 \pm 0.90 $ & \multicolumn{4}{c}{N/A} \\ 
            GIN & $89.40 \pm 5.60$ & $67.19 \pm 7.41$ & $62.57 \pm 5.18$ & $64.22 \pm 2.36$ & $66.97 \pm 6.17$ \\
            GNTK & $90.00 \pm 8.50$ & $65.94 \pm 1.21$ & $58.32 \pm 1.00$ & $63.85 \pm 1.20$ & $66.97 \pm 0.56$ \\
            RWNN & $89.20 \pm 4.30$ & \multicolumn{4}{c}{N/A} \\
            DM & N/A & $\mathbf{69.59} \pm \mathbf{7.39}$ & $\mathbf{67.73} \pm \mathbf{6.61}$ & $65.16 \pm 5.62$ & $68.39 \pm 3.57$ \\
            \hline
            \hline
            \textbf{PWLR-H0} & $89.52 \pm 0.90$ & $65.75 \pm 1.22$ & $62.67 \pm 1.50$ & $62.93 \pm 1.45$ & $\mathbf{69.12 \pm 1.16}$ \\
            $(k_1,k_2)$ & (1,12) & (16,0) & (10,10) & (2,4) & (22,8) \\
            \hline
            \textbf{PWLR-H1} & $\mathbf{91.97 \pm 0.92}$ & $63.30 \pm 1.26$ & $58.92 \pm 1.16$ & $60.87 \pm 1.69$ & $67.04 \pm 0.67$ \\
            $(k_1,k_2)$ & (26,27) & (2,13) & (5,3) & (3,15) & (0,0) \\
            \hline
            \textbf{PWLR-H0+H1} & $89.47 \pm 1.51$ & $66.11 \pm 1.26$ & $63.89 \pm 1.75$ & $63.09 \pm 1.56$ & $\mathbf{68.75 \pm 1.39}$\\
            $(k_1,k_2)$ & (8,11) & (16,0) & (10,10) & (0,8) & (1,14) \\
            \hline
            \hline
            \textbf{PWLR-OPT-H0} & $89.73 \pm 1.01$ & $65.93 \pm 1.14$ & $60.08 \pm 2.11$ & $62.09 \pm 1.62$ & $64.02 \pm 0.99$ \\
            $(k_1,k_2)$ & (1,1) & (5,0) & (10,12) & (0,1) & (0,6) \\
            \hline
            \textbf{PWLR-OPT-H1} & $\mathbf{91.91 \pm 1.61}$ & $63.43 \pm 1.14$ & $58.21 \pm 1.51$ & $61.27 \pm 1.40$ & $66.19 \pm 1.23$ \\
            $(k_1,k_2)$ & \textbf{(16,27)} & (26,13) & (3,20) & (16,19) & (10,2) \\
            \hline
            \textbf{PWLR-OPT-H0+H1} & $89.17 \pm 0.84$ & $66.30 \pm 1.24$ & $62.78 \pm 0.97$ & $64.35 \pm 1.36$ & $65.73 \pm 1.16$ \\
            $(k_1,k_2)$ & (1,1) & (5,0) & (4,8) & (16,0) & (16,2) \\
            \hline
    \end{tabular}
    \end{adjustbox}
    \end{sc}
    \end{small}
    \end{center}

    \caption{Dimensions of representations processed in the Random Forest Classifier for classifying small graph data sets with discrete labels. Cells notated as - indicate graph data set which do not have the prescribed node features. The variable $h$ denotes the number of WL iteration procedures used for obtaining the representations. The asterix $*$ denotes that the obtained graph representations are subject to changes based on the choice of training data sets.}
    \label{appendix:small_discrete_dimension}
    \vskip 0.15in
    \begin{center}
    \begin{small}
    \begin{sc}
    \begin{tabular}{cccccc}
        \hline
        data sets & MUTAG & PTC-MM & PTC-MR & PTC-FM & PTC-FR \\
        \hline
        \hline
        Average \# nodes & 17.93 & 13.97 & 14.29 & 14.11 & 14.56\\
        Average \# edges & 19.79 & 14.32 & 14.69 & 14.48 & 15.00 \\
        \# Discrete Node Labels & 7 & 20 & 18 & 18 & 19 \\
        Dim. continuous node features & - & - & - & - & - \\
        \# Graphs & 188 & 336 & 344 & 349 & 351 \\
        \hline
        \hline
        Graph Kernels (10-fold CV)* & 169* & 302* & 311* & 314* & 315* \\
        \hline
        WL, PWL-H0 ($h=1$) & 40 & 152 & 148 & 149 & 148 \\
        WL, PWL-H0 ($h=10$) & 15,969 & 24,845 & 26,125 & 26,216 & 27,139 \\
        WL, PWL-H0 ($h=20$) & 41,825 & 58,081 & 61,240 & 61,212 & 63,404 \\
        \hline
        \hline
        \textbf{PWLR-H0} (Any $k_1$,$k_2$) & \textbf{28} & \textbf{64} & \textbf{64} & \textbf{64} & \textbf{64}\\
        \textbf{PWLR-H1} (Any $k_1$,$k_2$) & \textbf{7} & \textbf{8} & \textbf{8} & \textbf{8} & \textbf{8}\\
        \textbf{PWLR-OPT} (Any $k_1$,$k_2$) & \textbf{7} & \textbf{10} & \textbf{10} & \textbf{10} & \textbf{10} \\
        \hline
    \end{tabular}
    \end{sc}
    \end{small}
    \end{center}
    \vskip -0.1in
\end{table}

\begin{figure}[!htbp]
    \centering
    \subfloat[MUTAG data set]{\includegraphics[width=0.36\textwidth]{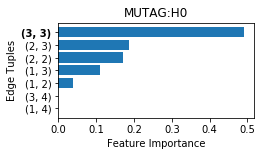} \includegraphics[width=0.36\textwidth]{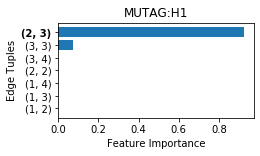}} \quad
    \subfloat[PTC-MM data set]{\includegraphics[width=0.36\textwidth]{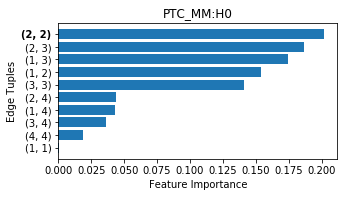} \includegraphics[width=0.36\textwidth]{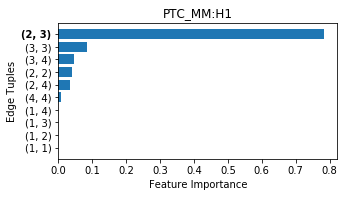}} \quad
    \subfloat[PTC-MR data set]{\includegraphics[width=0.36\textwidth]{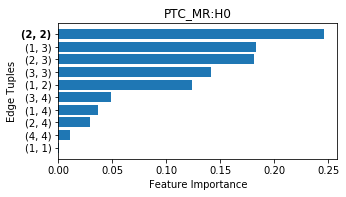} \includegraphics[width=0.36\textwidth]{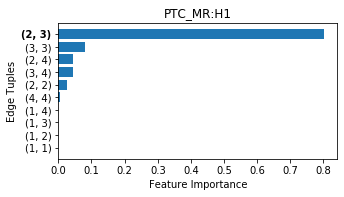}} \quad
    \subfloat[PTC-FM data set]{\includegraphics[width=0.36\textwidth]{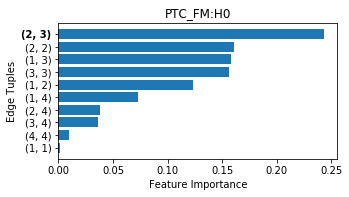} \includegraphics[width=0.36\textwidth]{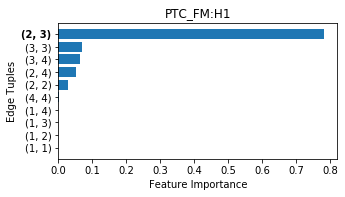}} \quad
    \subfloat[PTC-FR data set]{\includegraphics[width=0.36\textwidth]{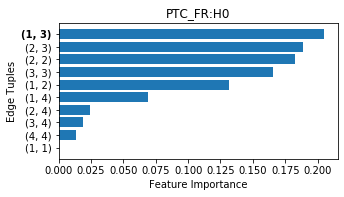} \includegraphics[width=0.36\textwidth]{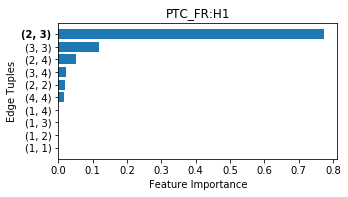}}
    \caption{Exemplary histograms of contributions each component of PWLR-OPT representations make to obtaining graph classification results for MUTAG and PTC data sets}
    \label{appendix:fig:small}
\end{figure}

Table \ref{appendix:discrete_small_results} records full classification results obtained from using graph representations constructed from the PWLR embedding framework and other contemporary techniques. We observe that the PWLR embedding framework enhances classification results in MUTAG and PTC-FR data sets, even surpassing those obtained from some examples of graph neural networks. For other PTC data sets, the PWLR embedding framework still manages to produce comparable results to contemporary techniques in classifying other variants of PTC data sets.

As explainable low-dimensional embeddings, the PWLR graph representations can pinpoint motifs of graph data sets that significantly contribute to classification results. Figure \ref{appendix:fig:small} demonstrates which tuples of unweighted node degrees associated to edges of $G$ contribute to classification of respective graph data sets. Here, the parameters $k_1$ and $k_2$ are number of iterations providing the highest classification scores from Table \ref{appendix:discrete_small_results}. The dimensions of the representations obtained from graph kernel techniques, WL, PWL, and PWLR embedding frameworks are outlined in Table \ref{appendix:small_discrete_dimension}. 

For MUTAG data sets, the classes of edges with tuples $(2,3)$ and $(3,3)$ which record the variations of number of cycles (PWLR-OPT-H1) completely determines the classification results. Interestingly, the clusters naturally observable from the embedded vectors are in correspondence to the provided binary graph labels (each notated in color blue and red). Attached in supplementary materials is a video clip which shows how clusters of graph representations appear as the number of iterations $k_1$ and $k_2$ vary.

As for the four variants of PTC data sets, the optimal number of iterations $k_1$ and $k_2$ as well as the contributions from the tuples of unweighted degrees alter with respect to graph data sets. This demonstrates that the PWLR framework provides a suitable collection of adaptable graph representations with suitable choices of parameters $k_1$ and $k_2$.

\subsection{Large Graph data sets}

Table \ref{appendix:discrete_large_results} lists the results obtained from classifying NCI1, NCI109, PROTEINS, and DD data sets. The PWLR embedding framework produces subpar classification results compared to state-of-the-art techniques, though they outperform a few graph neural networks in classifying NCI1 and NCI109 graph data sets. 

Nevertheless, we observe from Table \ref{appendix:large_discrete_dimension} that the dimensions of the proposed embeddings are substantially smaller than those obtained from graph kernels and WL procedures. For instance, a single iteration of WL procedure generates a vector representation of dimension equal to ``$253,231$'', a value too large to identify which components contribute the most to obtaining graph classification results. The reduced PWLR embedding formulation from (\ref{eq:second_representation}) cuts down the embedding dimension to $148$, at the cost of marginally reducing the classification results to $76.4\% - 76.7\%$. Thus, we may argue that the PWLR embedding scheme provides a more suitable form of representations than other graph kernels or WL procedures for identifying the relevant motifs for classifying DD data sets. 

As before, we illustrate in Figures \ref{appendix:fig:large:NCI} exemplary histograms of contributions each component of PWLR-Opt representations make in classifying NCI1 and NCI109 data sets. The parameters $k_1$ and $k_2$ are determined from the most optimal classification results shown in Table \ref{appendix:discrete_large_results}. 

\begin{table}[!htbp]
    \caption{Classification results obtained from large graph data sets with discrete node labels. Cells notated as N/A indicate graph classification techniques which do not report classification results on the given graph data set. For PWLR embedding methods, we indicate which iteration numbers $k_1$ and $k_2$ are used to obtain the recorded results. All classification results other than the proposed method are used from available classification results obtained from pre-existing publications.}
    \label{appendix:discrete_large_results}
    \vskip 0.15in
    \begin{center}
    \begin{small}
    \begin{sc}
    \begin{adjustbox}{width=\textwidth}
    \begin{tabular}{ccccc}
            \hline
            & NCI1 & NCI109 & PROTEINS & DD \\
             \hline
             \hline
            WL & $85.58 \pm 0.15$ & $84.85 \pm 0.19$ & $\mathbf{76.11 \pm 0.64}$ & $79.45 \pm 0.38$ \\
            WWL & $\mathbf{85.75 \pm 0.25}$ & N/A & $74.28 \pm 0.56$ & $79.69 \pm 0.50$ \\
            PWL-H0 & $85.34 \pm 0.14$ & $84.78 \pm 0.15$ & $75.31 \pm 0.73$ & $79.34 \pm 0.46$ \\
            PWL-H1 & $85.46 \pm 0.16$ & $\mathbf{84.96 \pm 0.34}$ & $75.27 \pm 0.38$ & $78.66 \pm 0.32$ \\
            RetGK-I,II & $84.50 \pm 0.20$ & N/A & $75.80 \pm 0.60$ & $\mathbf{81.60} \pm \mathbf{0.30}$ \\
            \hline
            \hline
            PSCN & $78.59 \pm 1.89$ & N/A & $75.89 \pm 2.76$ & $77.12 \pm 2.41$ \\
            DGCNN & $74.44 \pm 0.47$ & N/A & $75.54 \pm 0.94$ & $79.37 \pm 0.94$ \\
            GCKN-subtree-sup & $82.00 \pm 1.20$ & N/A & $\mathbf{76.20} \pm \mathbf{2.50}$ & N/A \\
            Perslay & $73.50 \pm 0.30$ & $69.50 \pm 0.30$ & $74.80 \pm 0.30$ & N/A \\
            GIN & $82.70 \pm 1.70$ & N/A & $\mathbf{76.20} \pm \mathbf{2.80}$ & $75.30 \pm 2.90$ \\
            GNTK & $84.20 \pm 1.50$ & N/A & $75.60 \pm 4.20$ & N/A \\
            RWNN & $73.90 \pm 1.30$ & N/A & $74.70 \pm 3.30$ & $77.60 \pm 4.70$ \\
            DM & $83.07 \pm 1.07$ & N/A & $\mathbf{76.19} \pm \mathbf{2.91}$ & N/A \\
            \hline
            \hline
            \textbf{PWLR-H0} & $75.66 \pm 0.30$ & $75.52 \pm 0.20$ & $73.24 \pm 0.53$ & $77.22 \pm 0.47$ \\
            $(k_1,k_2)$ & (0,1) & (0,1) & (26,0) & (23,3) \\
            \hline
            \textbf{PWLR-H1} & $74.50 \pm 0.42$ & $75.12 \pm 0.24$ & $72.78 \pm 0.35$ & $76.40 \pm 0.40$ \\
            $(k_1,k_2)$ & (2,0) & (2,0) & (0,5) & (27,0)  \\
            \hline
            \textbf{PWLR-H0+H1} & $77.15 \pm 0.23$ & $76.70 \pm 0.34$ & $73.57 \pm 0.41$ & $76.57 \pm 0.53$ \\
            $(k_1,k_2)$ & (0,1) & (0,1) & (23,4) & (25,3) \\
            \hline
            \hline
            \textbf{PWLR-OPT-H0} & $75.99 \pm 0.23$ & $75.41 \pm 0.18$ & $74.10 \pm 0.59$ & $76.44 \pm 0.46$ \\
            $(k_1,k_2)$ & (15,10) & (6,6) & (0,0) & (11,0) \\
            \hline
            \textbf{PWLR-OPT-H1} & $72.76 \pm 0.33$ & $73.60 \pm 0.25$ & $73.22 \pm 0.39$ & $76.68 \pm 0.30$ \\
            $(k_1,k_2)$ & (2,0) & (2,0) & (29,0) & (0,5) \\
            \hline
            \textbf{PWLR-OPT-H0+H1} & $79.05 \pm 0.40$ & $77.54 \pm 0.19$ & $74.46 \pm 0.38$ & $76.74 \pm 0.38$ \\
            $(k_1,k_2)$ & (2,0) & (1,0) & (0,0) & (9,0) \\
            \hline
    \end{tabular}
    \end{adjustbox}
    \end{sc}
    \end{small}
    \end{center}

    \caption{Dimensions of representations processed in the Random Forest Classifier for classifying large graph data sets with discrete labels. Cells notated as - indicate graph data set which do not have the prescribed node features. The variable $h$ denotes the number of WL iteration procedures used for obtaining the representations. The asterix $*$ denotes that the obtained graph representations are subject to changes based on the choice of training data sets.}
    \label{appendix:large_discrete_dimension}
    \vskip 0.15in
    \begin{center}
    \begin{small}
    \begin{sc}
    \begin{tabular}{ccccc}
        \hline
        data sets & NCI1 & NCI109 & DD & PROTEINS \\
        \hline
        \hline
        Average \# nodes & 29.87 & 29.68 & 284.32 & 39.06 \\
        Average \# edges & 32.30 & 32.13 & 715.66 & 72.82 \\
        \# Discrete Node Labels & 22 & 19 & 82 & 3 \\
        \# Graphs & 4110 & 4127 & 1178 & 1113 \\
        \hline
        \hline
        Graph Kernels (10-fold CV)* & 3699* & 3714* & 1060* & 996* \\
        \hline
        WL, PWL-H0 ($h=1$) & 288 & 275 & 253,231 & 299 \\
        WL, PWL-H0 ($h=10$) & 530,723 & 534,654 & $\geq 10^6$ & 329,035 \\
        WL, PWL-H0 ($h=20$) & $\geq 10^6$ & $\geq 10^6$ & $\geq 10^6$ & 721,222 \\
        \hline
        \hline
        \textbf{PWLR-H0} (Any $k_1$,$k_2$) & \textbf{111} & \textbf{111} & 5748 & 620 \\
        \textbf{PWLR-H1} (Any $k_1$,$k_2$ )& \textbf{18} & \textbf{18} & 8521 & 539 \\
        \textbf{PWLR-OPT} (Any $k_1$,$k_2$) & \textbf{10} & \textbf{12} & \textbf{148} & \textbf{74} \\
        \hline
    \end{tabular}
    \end{sc}
    \end{small}
    \end{center}
    \vskip -0.1in
\end{table}

\begin{figure}[!htbp]
    \centering
    \subfloat[NCI1 data set]{\includegraphics[width=0.36\textwidth]{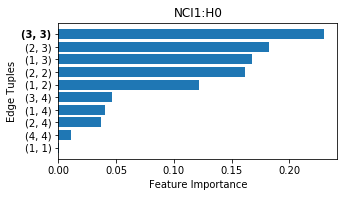} \includegraphics[width=0.36\textwidth]{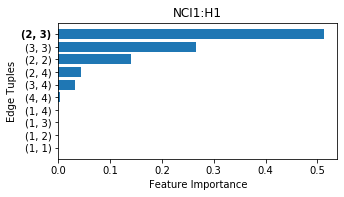}} \quad
    \subfloat[NCI109 data set]{\includegraphics[width=0.36\textwidth]{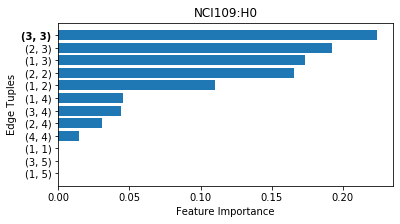} \includegraphics[width=0.36\textwidth]{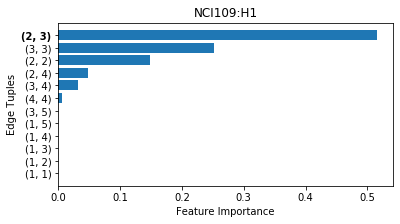}}
    \caption{Exemplary histograms of contributions each component of PWLR-OPT representations make to obtaining graph classification results for NCI data sets}
    \label{appendix:fig:large:NCI}
\end{figure}

\subsection{Graph data sets with Continuous Features}
\label{subsection:graph_continuous}

\begin{table}[!htbp]
    \caption{Classification results obtained from data sets with continuous node labels and edge weights. Cells notated as N/A indicate graph classification techniques which do not report classification results on the given graph data set. The term ``Disc.'' indicates that only the discrete node labels are used for graph classification. The term ``Both.'' indicates that both discrete node labels and continuous node features are used. All classification results other than the proposed method are used from available results obtained from pre-existing publications.}
    \label{appendix:cont_results}
    \vskip 0.15in
    \begin{center}
        \begin{scriptsize}
        \begin{sc}
        \begin{adjustbox}{width=\textwidth}
            \begin{tabular}{cccccccc}
                \hline
                & PROTEINS & \multicolumn{2}{c}{BZR} & BZR-MD & \multicolumn{2}{c}{COX2} & COX2-MD \\
                Features & Both. & Disc. & Both. & & Disc. & Both. & \\
                 \hline
                 \hline
                WWL & $77.91 \pm 0.80$ & \multicolumn{2}{c}{$84.42 \pm 2.03$} & $69.76 \pm 0.94$ & \multicolumn{2}{c}{$78.29 \pm 0.47$} & $\mathbf{76.33 \pm 1.02}$ \\
                HGK-WL & $75.93 \pm 0.17$ & \multicolumn{2}{c}{$78.59 \pm 0.63$} & $68.94 \pm 0.65$ & \multicolumn{2}{c}{$78.13 \pm 0.45$} & $74.61 \pm 1.74$ \\
                HGK-SP & $75.78 \pm 0.17$ & \multicolumn{2}{c}{$76.42 \pm 0.72$} & $66.17 \pm 1.05$ & \multicolumn{2}{c}{$75.45 \pm 1.53$} & $71.83 \pm 1.61$ \\
                GH & $74.78 \pm 0.29$ & \multicolumn{2}{c}{$76.49 \pm 0.99$} & $69.14 \pm 2.08$ & \multicolumn{2}{c}{$76.41 \pm 1.39$} & $66.20 \pm 1.05$  \\
                FGW & $74.55 \pm 2.74$ & \multicolumn{2}{c}{$85.12 \pm 4.15$} & N/A & \multicolumn{2}{c}{$77.23 \pm 4.86$} & N/A \\
                RetGK-I,II & $78.00 \pm 0.30$ & \multicolumn{2}{c}{$87.10 \pm 0.70$} & $62.77 \pm 1.69$ & \multicolumn{2}{c}{$81.40 \pm 0.60$} & $59.47 \pm 1.66$ \\
                FC & $74.54 \pm 0.48$ & \multicolumn{2}{c}{$85.61 \pm 0.59$} & $75.61 \pm 1.13$ & \multicolumn{2}{c}{$81.01 \pm 0.88$} & $73.41 \pm 0.79$ \\
                \hline
                \hline
                PSCN & N/A & \multicolumn{2}{c}{N/A} & $67.00 \pm 9.48$ & \multicolumn{2}{c}{N/A} & $65.33 \pm 7.78$ \\
                DGCNN & N/A & \multicolumn{2}{c}{N/A} & $64.67 \pm 9.32$ & \multicolumn{2}{c}{N/A} & $64.00 \pm 8.86$ \\
                GCKN-subtree-sup & $77.60 \pm 0.40$ & \multicolumn{2}{c}{$86.40 \pm 0.50$} & N/A & \multicolumn{2}{c}{$\mathbf{81.70 \pm 0.70}$} & N/A \\
                Perslay & $74.80 \pm 0.30$ & \multicolumn{2}{c}{N/A} & N/A & \multicolumn{2}{c}{$80.90 \pm 1.00$} & N/A \\
                GIN & N/A & \multicolumn{2}{c}{N/A} & $70.53 \pm 8.00$ & \multicolumn{2}{c}{N/A} & $65.97 \pm 5.70$ \\
                GNTK & $75.70 \pm 0.20$ & \multicolumn{2}{c}{$85.50 \pm 0.80$} & $66.47 \pm 1.20$ & \multicolumn{2}{c}{$79.60 \pm 0.40$} & $64.27 \pm 1.55$ \\
                DM & N/A & \multicolumn{2}{c}{N/A} & $73.55 \pm 5.76$ & \multicolumn{2}{c}{N/A} & $72.28 \pm 9.37$ \\
                \hline
                \hline
                \textbf{PWLR-H0} & 74.62$\pm$0.62 & \textbf{89.39}$\pm$\textbf{0.74} & \textbf{87.81}$\pm$\textbf{0.29} & \text{75.08}$\pm$\text{1.32} & 80.61$\pm$1.11 & 79.26$\pm$0.64 & 71.36$\pm$1.46 \\
                $(k_1,k_2)$ & (2,21) & \textbf{(4,1)} & \textbf{(25,1)} & \text{(0,3)} & (1,13) & (13,1) & (3,0) \\
                \hline
                \textbf{PWLR-H1} & 73.02$\pm$0.50 & \textbf{89.29}$\pm$\textbf{0.66} & \textbf{87.29}$\pm$\textbf{0.65} & \textbf{77.75}$\pm$\textbf{0.98} & 81.02$\pm$0.55 & 79.32$\pm$0.89 & 69.95$\pm$1.26  \\
                $(k_1,k_2)$ & (26,5) & \textbf{(3,4)} & \textbf{(24,10)} & \textbf{(1,4)} & (0,20) & (19,2) & (7,1) \\
                \hline
                \textbf{PWLR-H0+H1} & 73.94$\pm$0.58 & \textbf{88.95}$\pm$\textbf{0.90} & \textbf{87.87}$\pm$\textbf{0.56} & \textbf{76.44}$\pm$\textbf{1.37} & 80.88$\pm$0.54 & 79.34$\pm$0.47 & 71.36$\pm$1.46 \\
                $(k_1,k_2)$ & (7,2) & \textbf{(6,1)} & \textbf{(22,1)} & \textbf{(2,1)} & (0,8) & (7,2) & (3,0) \\
                \hline
                \hline
                \textbf{PWLR-OPT-H0} & $73.73 \pm 0.64$ & \textbf{89.46}$\pm$\textbf{0.55} & \textbf{88.35}$\pm$\textbf{0.50} & 72.47$\pm$1.44 & 79.94$\pm$0.58 & 78.97$\pm$0.69 & 72.10$\pm$2.13 \\
                $(k_1,k_2)$ & (25,0) & \textbf{(2,15)} & \textbf{(29,10)} & (1,1) & (10,0) & (3,15) & (5,1) \\
                \hline
                \textbf{PWLR-OPT-H1} & $73.09 \pm 0.77$ & \textbf{88.93}$\pm$\textbf{0.79} & 86.54$\pm$0.83 & \text{74.18}$\pm$\text{1.18} & 80.22$\pm$0.84 & 77.31$\pm$0.87 & 70.51$\pm$1.86 \\
                $(k_1,k_2)$ & (14,10) & \textbf{(2,0)} & (16,18) & \text{(2,0)} & (15,17) & (21,8) & (26,13) \\
                \hline
                \textbf{PWLR-OPT-H0+H1} & $73.89 \pm 0.38$ & \textbf{89.32}$\pm$\textbf{0.83} &\textbf{88.54}$\pm$\textbf{0.46} & \text{75.55}$\pm$\text{1.08} & 80.97$\pm$1.15 & 78.87$\pm$0.89 &72.97$\pm$1.00 \\
                $(k_1,k_2)$ & (29,22) & \textbf{(11,3)} & \textbf{(2,1)} & \text{(0,11)} & (0,8) & (7,0) & (4,0) \\
                \hline
            \end{tabular}
        \end{adjustbox}
        \end{sc}
        \end{scriptsize}
    \end{center}

    \caption{Dimensions of representations processed in the Random Forest Classifier for classifying graphs with continuous node attributes or edge weights. Cells notated as - indicate graph data set which do not have the prescribed node features. The variable $h$ denotes the number of WL iteration procedures used for obtaining the representations. The asterix $*$ denotes that the obtained graph representations are subject to changes based on the choice of training data sets.}
    \label{appendix:cont_dimension}
    \vskip 0.15in
    \begin{center}
    \begin{small}
    \begin{sc}
    \begin{tabular}{cccccc}
        \hline
        data sets & PROTEINS & BZR & BZR-MD & COX2 & COX2-MD \\
        \hline
        \hline
        Average \# nodes & 39.06 & 35.75 & 21.30 & 41.22 & 26.28 \\
        Average \# edges & 72.82 & 38.36 & 225.06 & 43.45 & 335.12 \\
        \# Discrete Node Labels & 3 & 10 & 8 & 8 & 7 \\
        Dim. continuous node features & 29 & 3 & - & 3 & - \\
        Dim. continuous edge attributes & - & - & 1 & - & 1 \\
        \# Graphs & 1113 & 405 & 306 & 467 & 303 \\
        \hline
        \hline
        Average Dim. all node features & 1249.92 & 464.75 & 170.4 & 453.42 & 183.96 \\
        \hline
        \hline
        Graph Kernels (10-fold CV)* & 996* & 364* & 275* & 420* & 272* \\
        \hline
        \textbf{PWLR-H0} (Any $k_1$,$k_2$) & 620 & \textbf{54} & \textbf{56} & \textbf{33} & \textbf{36} \\
        \textbf{PWLR-H1} (Any $k_1$,$k_2$) & 539 & \textbf{6} & \textbf{5} & \textbf{6} & \textbf{5} \\
        \textbf{PWLR-OPT} (Any $k_1$,$k_2$) & \textbf{74} & \textbf{9} & \textbf{8} & \textbf{9} & \textbf{9} \\
        \hline
    \end{tabular}
    \end{sc}
    \end{small}
    \end{center}
    \vskip -0.1in
\end{table}

\begin{figure}[!htbp]
    \centering
    \subfloat[BZR data set (Discrete labels)]{\includegraphics[width=0.35\textwidth]{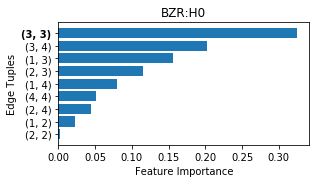} \includegraphics[width=0.35\textwidth]{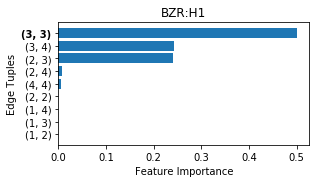}} \quad
    \subfloat[BZR data set (Both discrete and continuous features)]{\includegraphics[width=0.35\textwidth]{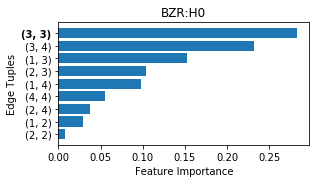} \includegraphics[width=0.35\textwidth]{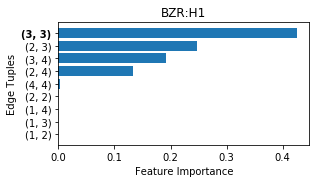}} \quad
    \subfloat[BZR-MD data set]{\includegraphics[width=0.35\textwidth]{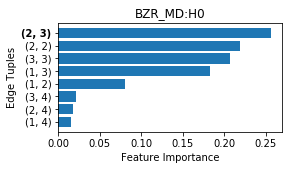} \includegraphics[width=0.35\textwidth]{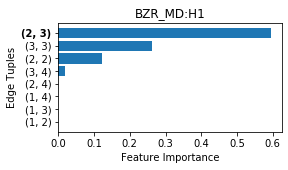}} \quad
    \subfloat[COX2 data set]{\includegraphics[width=0.35\textwidth]{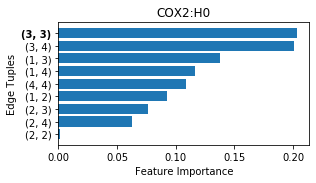} \includegraphics[width=0.35\textwidth]{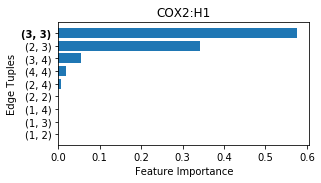}} \quad
    \subfloat[COX2-MD data set]{\includegraphics[width=0.35\textwidth]{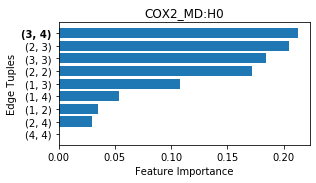} \includegraphics[width=0.35\textwidth]{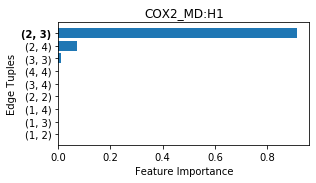}}
    \caption{Exemplary histograms of contributions each component of PWLR-OPT representations make to obtaining graph classification results for BZR, BZR-MD, COX2, and COX2-MD data sets}
    \label{appendix:fig:continuous}
\end{figure}

Table \ref{appendix:cont_results} demonstrates the classification results computed over PROTEINS, BZR, BZR-MD, COX2, and COX2-MD data sets. For preprocessing BZR-MD and COX2-MD data sets, edges whose weights are equal to ``0'' are omitted, as they represent two atoms which are not connected to each other. We also take the reciprocal of the initial 1-dimensional attributes on edges to take consideration of the fact that they represent distances between any two atoms which are bonded together. For classifying BZR and COX2 data sets, we record classification results obtained from solely using discrete labels and using both discrete and continuous node attributes. For classifying PROTEINS data set, we record classification results obtained from using both discrete and continuous node attributes. 

The PWLR embedding framework clearly outperforms state-of-the-art techniques in classifying BZR and BZR-MD data sets by a significant margin. It guarantees comparable performances for classifying COX2 and COX2-MD data sets as well. These enhanced performances are made possible while reducing the embedding dimensions of graph data sets. A detailed description of embedding dimensions and the role each component of reduced PWLR representations contribute to obtain classification scores are provided in Table \ref{appendix:cont_dimension} and Figure \ref{appendix:fig:continuous}. 

\section{Example}
\label{appendix: examples}

In this section, we provide relevant examples to show how PWLR embedding scheme embeds a given finite graph to a real vector space. We explicitly compute how one can utilize persistent homology groups to obtain real embeddings of the finite graph $G$ given as in Figure \ref{fig:graph_ex}. We denote by $v_i$ the $i$-th node of $G$, and by $e_{i,j}$ the edge connecting the nodes $v_i$ and $v_j$.

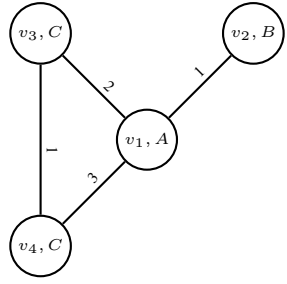
\begin{figure}
\begin{center}
{\tiny
\begin{tikzpicture}[node distance={20mm}, thick, main/.style = {draw, circle}] 
\node[main] (1) {$v_1, A$}; 
\node[main] (2) [above right of=1] {$v_2, B$};
\node[main] (3) [above left of=1] {$v_3, C$};
\node[main] (4) [below left of=1] {$v_4, C$};
\draw (2) -- node[midway, above right, sloped, pos=0.5] {1} (1);
\draw (3) -- node[midway, above right, sloped, pos=0.5] {2} (1);
\draw (4) -- node[midway, above right, sloped, pos=0.5] {3} (1);
\draw (3) -- node[midway, above right, sloped, pos=0.5] {1} (4);
\end{tikzpicture} 
}
\end{center}
\caption{The graph $G$ with initial node labels and edge weights}
\label{fig:graph_ex}
\end{figure}

\subsection{Examples: Normalized WL Procedure}
We illustrate how $M_G X$ corresponds to a single implementation of the normalized WL procedure. The adjacency matrix, its normalized matrix, and the matrix of concatenated node labels is given by
\begin{equation}
    A := \begin{pmatrix} 
    0 & 1 & 2 & 3 \\ 
    1 & 0 & 0 & 0 \\
    2 & 0 & 0 & 1 \\
    3 & 0 & 1 & 0
    \end{pmatrix}, \; \; 
    M_G := (D+I)^{-1}(A+I) = \begin{pmatrix} 
    \frac{1}{7} & \frac{1}{7} & \frac{2}{7} & \frac{3}{7} \\ 
    \frac{1}{2} & \frac{1}{2} & 0 & 0 \\
    \frac{2}{4} & 0 & \frac{1}{4} & \frac{1}{4} \\
    \frac{3}{5} & 0 & \frac{1}{5} & \frac{1}{5}
    \end{pmatrix}, \; \; 
    X := \begin{pmatrix}
    1 & 0 & 0 \\
    0 & 1 & 0 \\
    0 & 0 & 1 \\
    0 & 0 & 1
    \end{pmatrix}
\end{equation}
where we used the coordinate-wise embedding $A \mapsto (1,0,0)$, $B \mapsto (0,1,0)$, and $C \mapsto (0,0,1)$.

As previously mentioned, we regard $M_G X$ as the normalized WL iteration scheme. For example, the label of node 1 changes from $A$ to $ABCCCCC$ after the WL iteration. In particular, among 7 concatenated labels, $\frac{1}{7}$ of the labels belong to $A$, $\frac{1}{7}$ to $B$, and $\frac{5}{7}$ to $C$. Meanwhile, the entries of the matrix $M_G X$ are given by
\begin{equation}
    M_G X = 
    \begin{pmatrix} 
    \frac{1}{7} & \frac{1}{7} & \frac{2}{7} & \frac{3}{7} \\ 
    \frac{1}{2} & \frac{1}{2} & 0 & 0 \\
    \frac{2}{4} & 0 & \frac{1}{4} & \frac{1}{4} \\
    \frac{3}{5} & 0 & \frac{1}{5} & \frac{1}{5}
    \end{pmatrix} \begin{pmatrix}
    1 & 0 & 0 \\
    0 & 1 & 0 \\
    0 & 0 & 1 \\
    0 & 0 & 1
    \end{pmatrix} 
    = 
    \begin{pmatrix}
    \frac{1}{7} & \frac{1}{7} & \frac{5}{7} \\
    \frac{1}{2} & \frac{1}{2} & 0 \\
    \frac{2}{4} & 0 & \frac{2}{4} \\
    \frac{3}{5} & 0 & \frac{2}{5}
    \end{pmatrix}
\end{equation}
where the first row vector of $M_G X$ corresponds to the ratio of labels obtained after a single WL iteration. To see this, the WL iteration scheme updates the node labels as:
\begin{align}
    \begin{split}
        L(v_1) &= ABCCCCC \\
        L(v_2) &= AB \\
        L(v_3) &= AACC \\
        L(v_4) &= AAACC
    \end{split}
\end{align}
We notice that the ratio of occurrences of each label with respect to the total length of the node label is equal to the entries of the row of the matrix $M_G X$.

\subsection{Examples: Computing PWLR Embeddings}

The transition matrix $M_G$ as well as the matrix of node labels are given by
\begin{equation}
    M_G :=     \begin{pmatrix} 
    \frac{1}{7} & \frac{1}{7} & \frac{2}{7} & \frac{3}{7} \\ 
    \frac{1}{2} & \frac{1}{2} & 0 & 0 \\
    \frac{2}{4} & 0 & \frac{1}{4} & \frac{1}{4} \\
    \frac{3}{5} & 0 & \frac{1}{5} & \frac{1}{5}
    \end{pmatrix}, \; \; X := \begin{pmatrix}
    1 & 0 & 0 \\
    0 & 1 & 0 \\
    0 & 0 & 1 \\
    0 & 0 & 1
    \end{pmatrix}
\end{equation}
where we used the one-hot encoding of discrete labels to represent $A \mapsto (1,0,0)$, $B \mapsto (0,1,0)$, and $C \mapsto (0,0,1)$.

Suppose we updated the node labels by only iterating the RW procedure once, i.e. we set $k_1 = 0$ and $k_2 = 1$. The updated node labels $X^T M_G$ is given by
\begin{equation}
    X^T M_G = 
    \begin{pmatrix}
    1 & 0 & 0 & 0 \\
    0 & 1 & 0 & 0 \\
    0 & 0 & 1 & 1
    \end{pmatrix}
    \begin{pmatrix} 
    \frac{1}{7} & \frac{1}{7} & \frac{2}{7} & \frac{3}{7} \\ 
    \frac{1}{2} & \frac{1}{2} & 0 & 0 \\
    \frac{2}{4} & 0 & \frac{1}{4} & \frac{1}{4} \\
    \frac{3}{5} & 0 & \frac{1}{5} & \frac{1}{5}
    \end{pmatrix}
    = 
    \begin{pmatrix}
    \frac{1}{7} & \frac{1}{7} & \frac{2}{7} & \frac{3}{7} \\
    \frac{1}{2} & \frac{1}{2} & 0 & 0 \\
    \frac{11}{10} & 0 & \frac{9}{20} & \frac{9}{20}
    \end{pmatrix}
\end{equation}
The graph $G$, along with its updated node attributes, is shown in Figure \ref{fig:graph_G_ex2}
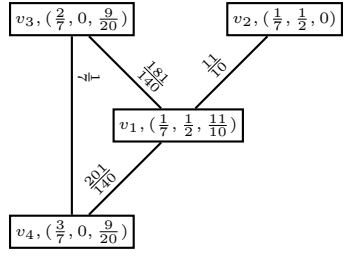
\begin{figure}
\begin{center}
{\tiny
\begin{tikzpicture}[node distance={20mm}, thick, main/.style = {draw, rectangle}] 
\node[main] (1) {$v_1, (\frac{1}{7}, \frac{1}{2}, \frac{11}{10})$}; 
\node[main] (2) [above right of=1] {$v_2, (\frac{1}{7}, \frac{1}{2}, 0)$};
\node[main] (3) [above left of=1] {$v_3, (\frac{2}{7}, 0, \frac{9}{20})$};
\node[main] (4) [below left of=1] {$v_4, (\frac{3}{7}, 0, \frac{9}{20})$};
\draw (2) -- node[midway, above right, sloped, pos=0.75] {$\frac{11}{10}$} (1);
\draw (3) -- node[midway, above right, sloped, pos=0.5] {$\frac{181}{140}$} (1);
\draw (4) -- node[midway, above right, sloped, pos=0.1] {$\frac{201}{140}$} (1);
\draw (3) -- node[midway, above right, sloped, pos=0.15] {$\frac{1}{7}$} (4);
\end{tikzpicture} 
}
\end{center}
\caption{The graph $G$ with node labels updated after a single iteration of RW procedure.}
\label{fig:graph_G_ex2}
\end{figure}
where the heights on edges are computed using the $l_1$ distance between two labels whose nodes are connected by respective edges. The sequence of nested subgraphs
\begin{equation} \label{eq:nested_subgraphs_ex}
    G^{[0]} \subset G^{[1]} \subset G^{[2]} \subset G^{[3]} \subset G^{[4]} = G
\end{equation}
can be drawn as shown in figure \ref{fig:nested_subgraphs}
\begin{figure}[ht]
\centering
\subfloat[$G^{[0]}$]{\small
\begin{tikzpicture}[baseline=(1.base), node distance={20mm}, thick, main/.style = {draw, rectangle}] 
\node[main] (1) {$v_1, (\frac{1}{7}, \frac{1}{2}, \frac{11}{10})$}; 
\node[main] (2) [above right of=1] {$v_2, (\frac{1}{7}, \frac{1}{2}, 0)$};
\node[main] (3) [above left of=1] {$v_3, (\frac{2}{7}, 0, \frac{9}{20})$};
\node[main] (4) [below left of=1] {$v_4, (\frac{3}{7}, 0, \frac{9}{20})$};
\end{tikzpicture} 
}
\hspace{10pt}
\subfloat[$G^{[1]}$]{\small
\begin{tikzpicture}[baseline=(1.base), node distance={20mm}, thick, main/.style = {draw, rectangle}] 
\node[main] (1) {$v_1, (\frac{1}{7}, \frac{1}{2}, \frac{11}{10})$}; 
\node[main] (2) [above right of=1] {$v_2, (\frac{1}{7}, \frac{1}{2}, 0)$};
\node[main] (3) [above left of=1] {$v_3, (\frac{2}{7}, 0, \frac{9}{20})$};
\node[main] (4) [below left of=1] {$v_4, (\frac{3}{7}, 0, \frac{9}{20})$};
\draw (3) -- node[midway, above right, sloped, pos=0.15] {$\frac{1}{7}$} (4);
\end{tikzpicture} 
}

\subfloat[$G^{[2]}$]{\small
\begin{tikzpicture}[baseline=(1.base), node distance={20mm}, thick, main/.style = {draw, rectangle}] 
\node[main] (1) {$v_1, (\frac{1}{7}, \frac{1}{2}, \frac{11}{10})$}; 
\node[main] (2) [above right of=1] {$v_2, (\frac{1}{7}, \frac{1}{2}, 0)$};
\node[main] (3) [above left of=1] {$v_3, (\frac{2}{7}, 0, \frac{9}{20})$};
\node[main] (4) [below left of=1] {$v_4, (\frac{3}{7}, 0, \frac{9}{20})$};
\draw (2) -- node[midway, above right, sloped, pos=0.75] {$\frac{11}{10}$} (1);
\draw (3) -- node[midway, above right, sloped, pos=0.15] {$\frac{1}{7}$} (4);
\end{tikzpicture} 
}
\hspace{10pt}
\subfloat[$G^{[3]}$]{\small
\begin{tikzpicture}[baseline=(1.base), node distance={20mm}, thick, main/.style = {draw, rectangle}] 
\node[main] (1) {$v_1, (\frac{1}{7}, \frac{1}{2}, \frac{11}{10})$}; 
\node[main] (2) [above right of=1] {$v_2, (\frac{1}{7}, \frac{1}{2}, 0)$};
\node[main] (3) [above left of=1] {$v_3, (\frac{2}{7}, 0, \frac{9}{20})$};
\node[main] (4) [below left of=1] {$v_4, (\frac{3}{7}, 0, \frac{9}{20})$};
\draw (2) -- node[midway, above right, sloped, pos=0.75] {$\frac{11}{10}$} (1);
\draw (3) -- node[midway, above right, sloped, pos=0.5] {$\frac{181}{140}$} (1);
\draw (3) -- node[midway, above right, sloped, pos=0.15] {$\frac{1}{7}$} (4);
\end{tikzpicture} 
}
\caption{Figures of nested subgraphs constructed in (\ref{eq:nested_subgraphs_ex})}
\label{fig:nested_subgraphs}
\end{figure}
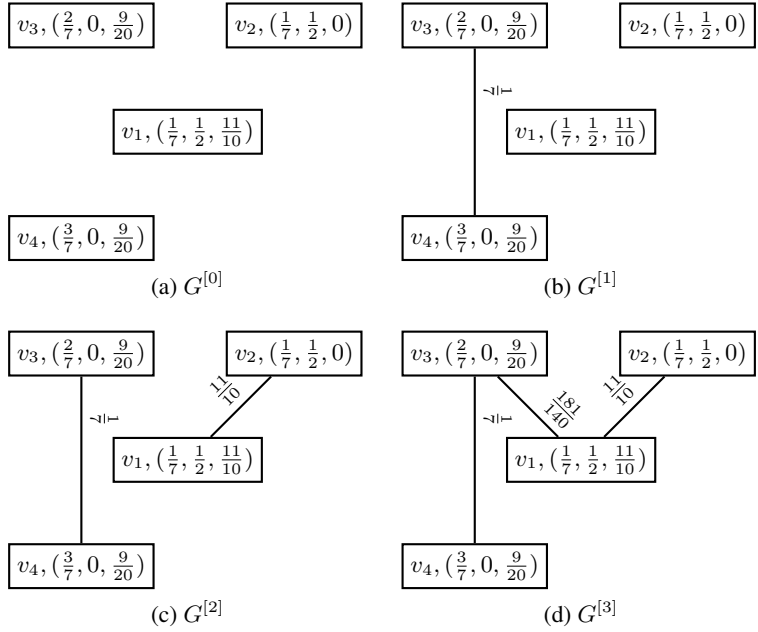

\begin{table}[ht]
    \caption{Number of connected components and cycles obtained from a sequence of nested subgraphs}
    \label{table:PWLR_example}
    \vskip 0.15in
    \begin{center}
    \begin{sc}
    \begin{small}
    \begin{tabular}{c||c|c|c|c|c}
        Features & $G^{[0]}$ & $G^{[1]}$ & $G^{[2]}$ & $G^{[3]}$ & $G$ \\
        \hline
        \hline
        Weights & $0$ & $\frac{1}{7}$ & $\frac{11}{10}$ & $\frac{181}{140}$ & $\frac{201}{140}$ \\
        \hline
        \# Components & 4 & 3 & 2 & 1 & 1 \\
        \hline
        \# Cycles & 0 & 0 & 0 & 0 & 1
    \end{tabular}
    \end{small}
    \end{sc}
    \end{center}
\end{table}

Using these graphs, the number of components and the number of cycles can be summarized as in Table \ref{table:PWLR_example}. The heights on edges that record the variations in the number of components are given by
\begin{equation}
    (h_E(e_{1,2}), h_E(e_{1,3}), h_E(e_{3,4})) = (1.1, 1.292857, 0.142857),
\end{equation}
whereas the height on the edge that records the variations in the number of cycle is given by
\begin{equation}
    (h_E(e_{1,4})) = (1.435714)
\end{equation}
where all the heights are rounded up to 6 decimal places. In this example, suppose we set the bias term $\tau = 0$. The representations of $G$ obtained from sorting these heights are given by
\begin{align}
    \varphi_{H_0}^{[0,1]} &= (0.142857, 1.1, 1.292857) \\
    \varphi_{H_1}^{[0,1]} &= (1.435714)
\end{align}
The sorted tuples of unweighted node degrees representing the edges are given by
\begin{equation}
    \overline{d}_{e_{1,2}} = (1,2), \overline{d}_{e_{1,3}} = (2,3), \overline{d}_{e_{1,4}} = (2,3), \overline{d}_{e_{3,4}} = (2,2)
\end{equation}
We hence obtain $3$-dimensional vectors $\varphi_{H_0, PWLR}^{[0,1]}$ and $\varphi_{H_1, PWLR}^{[0,1]}$ representing the graph $G$ given by
\begin{align}
    \begin{split}
        \varphi_{H_0,Opt}^{[0,1]} &= (1.1,0.142857,1.292857) \\
        \varphi_{H_1,Opt}^{[0,1]} &= (0,0,1.435714)
    \end{split}
\end{align}
where each coordinate of $\varphi_{H_i,PWLR}^{[0,1]}$ represents the sum of weights on edges associated to tuples $(1,2)$, $(2,2)$, and $(2,3)$.

For arbitrary $k_2$ iterations of RW procedures over the node labels of $G$ without any iterations of WL procedures, the sequential representation of $G$ obtained from updated node labels $X^T M_G^{k_2}$ are as shown in Table \ref{tab:example_24}.
\begin{table}[t]
    \caption{Sequential embedding of a finite graph $G$ obtained from node labels of form $X^T M_G^{k_2}$. The heights on edges are computed from taking the $l_1$-distance between labels of two connected nodes}
    \label{tab:example_24}
    \vskip 0.15in
    \begin{center}
    \begin{sc}
    \begin{adjustbox}{width=\textwidth}
    \begin{small}
    \begin{tabular}{c||c|c||c|c}
        $k_2$ & $\varphi^{[0,k_2]}_{H_0}$ & $\varphi^{[0,k_2]}_{H_1}$ & $\varphi^{[0,k_2]}_{H_0,Opt}$ & $\varphi^{[0,k_2]}_{H_1,Opt}$ \\
        Coordinates & Sorted & Sorted & $((1,2),(2,2),(2,3))$ & $((1,2),(2,2),(2,3))$ \\
        \hline
        \hline
        0 & (0, \; 2, \; 2) & (2) & (2, \; 0, \; 2) & (0, \; 0, \; 2) \\
        1 & (0.142857, 1.1, 1.292857) & (1.435714) & (1.1, 0.142857, 1.292857) & (0, \; 0, \; 1.435714) \\
        2 & (0.248980, 0.402398, 0.895) & (0.607806 & (0.895, 0.248980, 0.402398) & (0, \; 0, \; 0.607806) \\
        3 & (0.209344, 0.523549, 1.092750) & (0.732893) & (1.092750, 0.209344, 0.523549) & (0, \; 0, \; 0.732893) \\
        4 & (0.226757, 0.432861, 1.065452) & (0.659618) & (1.065452, 0.226757, 0.432861) & (0, \; 0, \; 0.659618) \\
        5 & (0.219926, 0.459778, 1.103905) & (0.679704) & (1.103905, 0.219926, 0.459778) & (0, \; 0, \; 0.679704) \\
        6 & (0.222978, 0.443228, 1.101213) & (0.666206) & (1.101213, 0.222978, 0.443228) & (0, \; 0, \; 0.666206) \\
        \vdots & \vdots & \vdots & \vdots & \vdots \\
        $\infty$ & (0.222222, 0.444444, 1.111111) & (0.666667) & (1.111111, 0.222222, 0.444444) & (0, \; 0, \;0.666667)
    \end{tabular}
    \end{small}
    \end{adjustbox}
    \end{sc}
    \vskip -0.1in
    \end{center}
\end{table}

\section{Mathematical Background}
\label{appendix: mathematical_background}

In this section, we give a brief outline of the key mathematical ideas required for proving the statements of the theorems in the manuscript. 

\subsection{Markov chains}
\label{appendix: markov_chains}

A classical way to define a stochastic process over a finite graph $G := (V,E)$ is to consider a random walk on $G$. The random walk is characterized by an iterated procedure of starting from a node $v \in V(G)$ and randomly travelling to an adjacent node, including itself. This process over a graph is a particular example of a Markov chain, a stochastic process where the probability of each random choice of walks is only dependent on the present node, independent from previously executed random walks. Interested readers may refer to \cite{IM76, Lo93, La06} for a rigorous treatment of the subject.

Throughout this manuscript, we assume that the probabilities of traveling to any arbitrary adjacent node from the starting point are determined by the positive weights on respective edges. This allows us to model the stochastic process using the normalized adjacency matrix of $G$. Let $M_G$ be the transition matrix given by
\begin{equation}
    M_G := (D+I)^{-1}(A+I)
\end{equation}
The entries of a row of matrix $M_G$ correspond to the probability of traveling to adjacent nodes (including itself) from the given node $v \in V(G)$. Let $P \in \mathbb{R}^{|V|}$ be a coordinate vector $e_i \in \mathbb{R}^{|V|}$, which specifies that the starting point of the random walk is the $i$-th node of $G$. The probability of landing at a node $v \in V(G)$ after taking $k$-steps of random walks can be computed from the matrix $P^T M_G$, where $(\cdot)^T$ is the matrix transpose operator \cite{Lo93}. 

The random walk over a connected finite graph $G$ characterized by the transition matrix $M_G$ satisfies the property that any node can be reached after finitely many steps of random walks, and the probability of arriving at a given node is nonzero after taking sufficiently large number of random walks. One key property of such random walks, is that its large-time is characterized by the left-eigenvector of the transition matrix $M_G$.

\begin{theorem}[Perron-Frobenius Theorem, \cite{IM76}] \label{theorem:perron_frobenius}
Given a connected finite graph $G$, let $M_G$ be the transition matrix characterizing the random walk on $G$. Let $\pi$ be the left eigenvector of $M_G$ with eigenvalue $1$, i.e. $\pi^T = \pi^T M_G$, where $(\cdot)^T$ is the matrix transpose. Let $\nu$ be the right eigenvector of $M_G$ with eigenvalue $1$, i.e. $\nu = M_G \nu$. Suppose the vectors $\pi$ and $\nu$ are normalized such that $\pi^T \nu = 1$. (This constraint implies that $\nu$ is the vector whose entries are all equal to $1$.) Then
\begin{equation}
    \lim_{k \to \infty} M_G^k = \nu \pi^T.
\end{equation}
In particular, given any probability distribution $P$ over the state space $S$,
\begin{equation}
    \lim_{k \to \infty} P M_G^k = \pi^T.
\end{equation}
\end{theorem}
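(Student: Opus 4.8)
The plan is to use the fact that $M_G=(D+I)^{-1}(A+I)$ is a row-stochastic matrix whose associated Markov chain is \emph{primitive}, and then to reduce the limit to the decay of the ``non-Perron'' part of $M_G$. First I would record the two structural facts about $M_G$. Since each row of $A+I$ sums to $d_v+1$ and the corresponding row of $(D+I)^{-1}$ scales it by $1/(d_v+1)$, the matrix $M_G$ is row-stochastic; equivalently $M_G\nu=\nu$, so $1$ is an eigenvalue with right eigenvector $\nu$. Because $G$ is connected, $A+I$ (hence $M_G$) is irreducible, and because every diagonal entry of $A+I$ is strictly positive (the $+I$ term, i.e.\ the artificial self-loops), $M_G$ is aperiodic; together these give primitivity, meaning $M_G^{m}$ has strictly positive entries for some $m\ge 1$.

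Next I would invoke the Perron--Frobenius theorem for primitive nonnegative matrices: the spectral radius $1$ of $M_G$ is a simple eigenvalue, it is the unique eigenvalue of modulus $1$, and there is a strictly positive left eigenvector, which after normalization to a probability vector is the stationary distribution $\pi$ with $\pi^T M_G=\pi^T$ and $\pi^T\nu=1$ (its explicit entries being those recorded in Theorem~\ref{appendix:theorem:stationary_distribution}). Set $N:=M_G-\nu\pi^T$. Using $\pi^T\nu=1$ one checks directly that $N\nu=M_G\nu-\nu(\pi^T\nu)=\nu-\nu=0$ and $\pi^T N=\pi^T M_G-(\pi^T\nu)\pi^T=\pi^T-\pi^T=0$, so all cross terms vanish and an easy induction gives $M_G^{k}=\nu\pi^T+N^{k}$ for every $k\ge 1$. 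Moreover the spectrum of $N$ is that of $M_G$ with the eigenvalue $1$ deleted (replaced by $0$), so $\rho(N)=\max\{|\lambda|:\lambda\in\mathrm{spec}(M_G),\,\lambda\neq 1\}<1$. Since $\rho(N)<1$ we have $N^{k}\to 0$ (by Gelfand's formula, or the Jordan form — no diagonalizability of $M_G$ is needed), whence $M_G^{k}\to\nu\pi^T$. The second assertion then follows immediately: for any probability distribution $P$ we have $P^T\nu=1$, so $P^T M_G^{k}\to P^T\nu\pi^T=(P^T\nu)\pi^T=\pi^T$.

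The main obstacle is the spectral gap, i.e.\ showing that every eigenvalue of $M_G$ other than $1$ has modulus strictly below $1$ (equivalently $\rho(N)<1$); this is exactly the place where primitivity, and hence both the connectedness of $G$ and the presence of the self-loops contributed by the $+I$ terms, is indispensable — without aperiodicity one only gets Ces\`aro convergence. If one prefers to sidestep quoting the full Perron--Frobenius machinery, an alternative is a direct Dobrushin-coefficient argument: primitivity gives $\tau_1(M_G^{m})<1$ for the contraction coefficient $\tau_1(Q)=1-\min_{i,j}\sum_k\min(Q_{ik},Q_{jk})$, and then $\|\mu^T M_G^{km}-\pi^T\|_1\le \tau_1(M_G^{m})^{k}\,\|\mu^T-\pi^T\|_1\to 0$ for every probability vector $\mu$, which upgrades to $M_G^{k}\to\nu\pi^T$ by applying it rowwise. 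This route has the side benefit of producing an explicit geometric rate, which is precisely the quantitative input needed for the stability result (Theorem~\ref{theorem:stability}).
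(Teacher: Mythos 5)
Your proposal is correct. Note that the paper does not prove this statement at all: it is quoted as background (Theorem Appendix D.1, cited to \cite{IM76}, with the stationary distribution and geometric ergodicity likewise imported from \cite{Lo93}), so there is no in-paper argument to compare against. Your route is the standard one and all the steps check out: row-stochasticity gives $M_G\nu=\nu$; connectedness of $G$ gives irreducibility of $A+I$ and hence of $M_G$; the strictly positive diagonal coming from the $+I$ terms gives aperiodicity, hence primitivity; Perron--Frobenius for primitive matrices gives simplicity of the eigenvalue $1$, a positive left eigenvector $\pi$ with $\pi^T\nu=1$, and the spectral gap. The rank-one deflation $N:=M_G-\nu\pi^T$ is handled correctly — the identities $N\nu=0$ and $\pi^TN=0$ do give $M_G^k=\nu\pi^T+N^k$ by induction, and your spectral claim for $N$ is justified because every generalized eigenvector of $M_G$ for $\lambda\neq 1$ lies in the $M_G$-invariant hyperplane $\{x:\pi^Tx=0\}$, on which $N$ and $M_G$ coincide; so $\rho(N)<1$ and $N^k\to 0$ without any diagonalizability assumption (though in fact $M_G$ is similar to the symmetric matrix $(D+I)^{-1/2}(A+I)(D+I)^{-1/2}$, so it is diagonalizable with real spectrum). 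The deduction $P^TM_G^k\to(P^T\nu)\pi^T=\pi^T$ is immediate. Your alternative Dobrushin-coefficient argument is also valid and, as you say, has the advantage of producing an explicit geometric rate, which is exactly the content of the geometric ergodicity statement (Theorem 5.1 of \cite{Lo93}) that the paper quotes separately and then uses in the stability proof of Theorem \ref{theorem:appendix_stability}; so that remark accurately reflects how the quantitative version enters the paper.
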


The left eigenvector $\pi$ of $M$ with eigenvalue $1$ is also known as the stationary distribution of the random walk. In fact, the desired left eigenvector $\pi$ is determined by the node degrees of $G$. 

\begin{theorem}[Perron-Frobenius Theorem for Random Walks, \cite{Lo93}]
\label{theorem:stationary_distribution}
Let $G := (V,E)$ be a finite graph. Denote by $d_v$ the weighted degree of a node $v \in V$, i.e. the sum of all weights on edges connected to $v$. Then the left eigenspace of $M_G$ with eigenvalue $1$ is spanned by
\begin{equation}
    \pi_G := \biggl[ \frac{d_v}{\sum_{w \in V(G)} d_w }\biggr]_{v \in V(G)}.
\end{equation}
The right eigenspace of $M_G$ with eigenvalue $1$ is spanned by the vector whose entries are all equal to $1$.
\end{theorem}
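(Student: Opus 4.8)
The plan is to exhibit explicit right and left eigenvectors of $M_G$ for the eigenvalue $1$ by direct computation, and then invoke the Perron--Frobenius theorem (Theorem~\ref{theorem:perron_frobenius}) to conclude that each of the two eigenspaces is one-dimensional, hence spanned by the stated vectors.

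For the right eigenspace this is immediate: writing $M_G = (D+I)^{-1}(A+I)$, the $v$-th row of $A+I$ sums to $d_v+1$, which is precisely the $v$-th diagonal entry of $D+I$, so $(A+I)\nu = (D+I)\nu$ for the all-ones column vector $\nu$, and therefore $M_G\nu = (D+I)^{-1}(D+I)\nu = \nu$. For the left eigenspace I would use reversibility. Since the weighted adjacency matrix is symmetric (with the convention $\tilde a_{v,v}=1$ on the diagonal), the degree-proportional measure $\pi_G$ of (\ref{equation:stationary_distribution}) satisfies the detailed-balance identity $\pi_G(v)\,(M_G)_{v,w} = \pi_G(w)\,(M_G)_{w,v}$ for every ordered pair $(v,w)$: up to the common normalizing constant both sides reduce to the symmetric quantity $\tilde a_{v,w}$. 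Summing this identity over $w$ and using that each row of $M_G$ sums to $1$ yields $\pi_G(v) = \sum_w \pi_G(w)\,(M_G)_{w,v}$, i.e. $\pi_G^T M_G = \pi_G^T$.

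It then remains to show that these two eigenspaces are exactly one-dimensional. Because $G$ is connected, $M_G$ is an irreducible stochastic matrix, and the strictly positive diagonal entries arising from the added self-loops make it primitive; Theorem~\ref{theorem:perron_frobenius} then gives that $1$ is a simple eigenvalue, so the left and right eigenspaces are spanned by $\pi_G$ and $\nu$ respectively. A self-contained alternative that sidesteps Perron--Frobenius is a maximum-principle argument: given any left $1$-eigenvector $x$, put $f(v) := x(v)/\pi_G(v)$; detailed balance converts $x^T M_G = x^T$ into $M_G f = f$, so $f$ is harmonic for the walk, and on a finite connected graph the maximum principle forces a harmonic function to be constant, whence $x$ is a scalar multiple of $\pi_G$ (and dually any right $1$-eigenvector is a multiple of $\nu$).

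I expect the real work to be concentrated in this uniqueness step rather than in the eigenvector identities, which are one-line verifications: one must carefully argue that connectedness of $G$ upgrades to irreducibility of $M_G$, that the self-loops make the walk aperiodic, and that Perron--Frobenius (or the maximum principle) legitimately applies to the non-symmetric matrix $M_G$. A secondary point of care is normalization bookkeeping, since the detailed-balance computation only determines the stationary vector up to a positive scalar, whereas (\ref{equation:stationary_distribution}) singles out the unique probability vector in that ray.
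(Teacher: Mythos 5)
Your overall strategy (exhibit the two eigenvectors directly, then get one-dimensionality from irreducibility plus the positive diagonal, or from a maximum principle for harmonic functions) is the natural one, and the paper itself offers nothing to compare it with: Theorem \ref{theorem:stationary_distribution} is imported from \cite{Lo93} without proof. The right-eigenvector computation and the uniqueness step are fine (modulo the fact that the statement says ``finite graph'' while simplicity of the eigenvalue genuinely needs connectedness, which you correctly assume). The problem is the detailed-balance verification, which is the heart of the left-eigenvector claim and which, as written, is false. With $\pi_G(v) \propto d_v$ and $(M_G)_{v,w} = \tilde{a}_{v,w}/(d_v+1)$ you get
\begin{equation*}
    \pi_G(v)\,(M_G)_{v,w} \;\propto\; \frac{d_v}{d_v+1}\,\tilde{a}_{v,w},
    \qquad
    \pi_G(w)\,(M_G)_{w,v} \;\propto\; \frac{d_w}{d_w+1}\,\tilde{a}_{v,w},
\end{equation*}
and the factors $d_v/(d_v+1)$ and $d_w/(d_w+1)$ do not cancel unless $d_v = d_w$. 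The cancellation you invoke (``both sides reduce to the symmetric quantity $\tilde a_{v,w}$'') occurs precisely when $\pi(v)$ is proportional to the row sum of $A+I$, i.e. to $d_v+1$, not to $d_v$. A two-node check on a path $v_1\!-\!v_2\!-\!v_3$ with unit weights shows the vector $[d_v]_v$ is not a left eigenvector of $(D+I)^{-1}(A+I)$ at all: $\sum_v d_v \tilde a_{v,1}/(d_v+1) = \tfrac12 + \tfrac23 \neq d_1$.

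So your method, carried out correctly, proves that the left eigenspace of $M_G=(D+I)^{-1}(A+I)$ is spanned by $\bigl[(d_v+1)/\sum_{w\in V(G)}(d_w+1)\bigr]_{v\in V(G)}$ rather than by the vector displayed in (\ref{equation:stationary_distribution}); the formula $d_v/\sum_w d_w$ is the stationary distribution of the unlazified walk $D^{-1}A$ treated in Section 1 of \cite{Lo93}, and it transfers to $M_G$ only if one reinterprets $d_v$ as the weighted degree in the graph with the unit self-loops added. You should either prove the statement in that corrected form (and say explicitly how it relates to the theorem as printed), or flag that the printed formula does not match the operator $M_G$ it is attached to. Once the stationary vector is fixed, the rest of your argument — simplicity of the eigenvalue via primitivity and Theorem \ref{theorem:perron_frobenius}, or the harmonic-function maximum principle with $f(v) := x(v)/\pi(v)$ — goes through as you describe.
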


Now that we have analyzed the limiting behavior of the sequence $M_G^k$, we focus on determining the convergence rate of the sequence $M_G^k$ to the limit specified in the Perron-Frobenius Theorem. It turns out that for random walks over graphs the sequence $M^k$ geometrically converges to the stationary distribution, a property known as geometric ergodicity.

\begin{theorem} [Geometric Ergodicity (Theorem 5.1, \cite{Lo93})]
Let $M_G$ be the transition matrix which governs the random walk on the graph $G := (V,E)$. Suppose $\mu_2$ is the second largest eigenvalue of $M_G$. Denote by $\pi_G$ the stationary distribution of the transition matrix $M_G$. Then there exists a constant $C > 0$ such that for any probability distribution $P$ on $G$,
\begin{equation}
    \sup_{v \in V} \|P^T M_G^k(v) - \pi_G(v)\|_p < C \mu_2^k.
\end{equation}
\end{theorem}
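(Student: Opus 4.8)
The plan is to prove the bound through the spectral decomposition of $M_G$, exploiting the fact that the random walk is reversible with respect to $\pi_G$. First I would record the structural properties of $M_G$. Since $G$ is connected, $M_G = (D+I)^{-1}(A+I)$ is an irreducible stochastic matrix, and the added self-loops (the $+I$ terms) force $(M_G)_{vv} > 0$, so the chain is aperiodic. By the Perron--Frobenius Theorem (Theorem \ref{theorem:perron_frobenius}) the eigenvalue $1$ is simple, with right eigenvector $\nu$ (the all-ones vector) and left eigenvector $\pi_G$ (Theorem \ref{theorem:stationary_distribution}), and every remaining eigenvalue has modulus strictly less than $1$. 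I label the eigenvalues $1 = \mu_1 > \mu_2 \ge \cdots \ge \mu_{|V|} > -1$, where $\mu_2$ is the second largest.

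Next I would use reversibility to secure a real spectrum and a usable eigenbasis. Detailed balance $\pi_v (M_G)_{vw} = \pi_w (M_G)_{wv}$ holds with respect to $\pi_G$, since both sides reduce to the symmetric weight $\tilde a_{vw}$ of the edge $(v,w)$. Writing $\Pi = \mathrm{diag}(\pi_G)$, this says that $N := \Pi^{1/2} M_G \Pi^{-1/2}$ is symmetric; hence $N$---and therefore $M_G$, which is similar to it---is diagonalizable with real eigenvalues and admits an eigenbasis orthonormal in the $\pi_G$-weighted inner product. Denoting by $u_i$ and $w_i$ the right and left eigenvectors for $\mu_i$, this yields the expansion
\[
M_G^k = \nu \pi_G^T + \sum_{i=2}^{|V|} \mu_i^k\, u_i w_i^T .
\]
Subtracting the limit $\nu \pi_G^T$ from Theorem \ref{theorem:perron_frobenius} and multiplying on the left by $P^T$ gives
\[
P^T M_G^k - \pi_G^T = \sum_{i=2}^{|V|} \mu_i^k\,(P^T u_i)\, w_i^T .
\]
Taking the $v$-th coordinate and absolute values, and using $|\mu_i| \le \mu_2$ for every $i \ge 2$, the error at node $v$ is at most $\mu_2^k \sum_{i\ge 2} |P^T u_i|\,|w_i(v)|$. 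Since $P$ lies in the probability simplex and the finitely many eigenvectors have bounded entries, both the sum over $i$ and the supremum over $v$ are bounded by a constant $C$ independent of $k$ and $P$, giving $\sup_{v} |P^T M_G^k(v) - \pi_G(v)| \le C \mu_2^k$ (the $\|\cdot\|_p$ of a scalar being its absolute value), which is the claim.

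The main obstacle is controlling the subdominant spectrum so that $\mu_2$ governs the rate. The clean bound $C\mu_2^k$ requires $\mu_2 \ge |\mu_i|$ for all $i \ge 2$; if $M_G$ has an eigenvalue near $-1$ this can fail, and the honest rate is the absolute spectral gap $\max_{i \ge 2}|\mu_i|$. Here the laziness introduced by the self-loops pushes the spectrum upward and, together with the standing assumption $0 < \mu_2 < 1$, lets me take $\mu_2 = \max_{i\ge 2}|\mu_i|$; otherwise one replaces $\mu_2$ by that maximum throughout. A secondary point is the uniformity of $C$ over all $P$ and all $v$, which follows from the compactness of the simplex and the finiteness of $V$. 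For the paper's purposes one may also simply invoke the cited Theorem 5.1 of \cite{Lo93}.
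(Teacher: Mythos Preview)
The paper does not supply its own proof of this statement: it is quoted verbatim as Theorem 5.1 of \cite{Lo93} in the background section (Appendix \ref{appendix: markov_chains}) and is used only as a black box in the proof of Theorem \ref{theorem:appendix_stability}. So there is no ``paper's proof'' to compare against; your final remark that one may simply invoke the cited result is exactly what the authors do.

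That said, your spectral argument is the standard one and is essentially correct. The reversibility of $M_G$ with respect to $\pi_G$ gives a real spectrum and an orthogonal eigenbasis in the $\pi_G$-weighted inner product, from which the expansion $M_G^k - \nu\pi_G^T = \sum_{i\ge 2}\mu_i^k u_i w_i^T$ and the bound follow. The one genuine subtlety you already flag is the correct one: the stated rate $C\mu_2^k$ with $\mu_2$ the \emph{second largest} eigenvalue only follows if $\mu_2 = \max_{i\ge 2}|\mu_i|$. The self-loop in $M_G=(D+I)^{-1}(A+I)$ gives positive diagonal and aperiodicity, but does not by itself force $\mu_2 \ge |\mu_{|V|}|$; the honest constant in general is the absolute spectral gap. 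The paper's later uses of the theorem (e.g.\ in Theorem \ref{theorem:appendix_stability}, where $0<\mu_{2,G}<1$ is explicitly assumed) are insensitive to this distinction, so the discrepancy is harmless for the downstream arguments.
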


\subsection{Homology Groups}
\label{appendix: homology_groups}

In the previous section, we observed how constructing a graph $G$ as a 1-dimensional cell complex induce a topological structure over $G$. The topological invariants of a cell complex can be obtained by computing what is called homology groups of $G$. We omit the technical derivation of homology groups, which can be found in Chapter 2 of \cite{Ha02}. 

Because the graph $G$ can be regarded as a 1-dimensional cell complex, it is suffice to consider only the $0$-th and the $1$-st homology groups of $G$. These homology groups (with $\mathbb{R}$-coefficients) are respectively notated as $H_0(G,\mathbb{R})$ and $H_1(G,\mathbb{R})$. Both groups are $\mathbb{R}$-vector spaces whose bases are spanned by the equivalence classes of connected components and cycles of $G$. With abuse of notation, we obtain the following equations.
\begin{align}
    \begin{split}
        \dim_{\mathbb{R}} H_0(G,\mathbb{R}) &= \# \text{ connected components of } G\\
        \dim_{\mathbb{R}} H_1(G,\mathbb{R}) &= \# \text{ cycles of } G
    \end{split}
\end{align}
The dimensions of the $0$-th and the $1$-st homology groups are topological invariants of finite graphs. In other words, two graphs $G$ and $G'$ are not isomorphic if their homology groups are non-isomorphic as real vector spaces. These dimensions can be obtained from the number of nodes and edges of a graph $G$ by using the Euler characteristic formula.
\begin{theorem}[Euler characteristic formula \cite{Ha02}]
\label{theorem: euler_characteristic}
Let $G$ be a 1-dimensional cell complex whose numbers of 0 and 1 cells are finite. Denote by $c_0$ the number of 0-cells, and by $c_1$ the number of 1-cells used for constructing the space $X$. Then the following equality holds for any such $G$:
\begin{equation}
    c_0 - c_1 = \dim_{\mathbb{R}} H_0(G,\mathbb{R}) - \dim_{\mathbb{R}} H_1(G,\mathbb{R})
\end{equation}
The quantity $\chi(G) := c_0 - c_1$ is a topological invariant of $G$, called the Euler characteristic of $G$.
\end{theorem}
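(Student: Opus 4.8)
The plan is to compute both sides of the claimed identity directly from the cellular chain complex of $G$ and to invoke the rank-nullity theorem. Since $G$ is a $1$-dimensional cell complex, its cellular chain complex (with $\mathbb{R}$-coefficients) is concentrated in degrees $0$ and $1$, taking the form
\[
0 \longrightarrow C_1 \xrightarrow{\;\partial_1\;} C_0 \longrightarrow 0,
\]
where $C_0 \cong \mathbb{R}^{c_0}$ is freely generated by the $0$-cells and $C_1 \cong \mathbb{R}^{c_1}$ is freely generated by the oriented $1$-cells. The boundary map $\partial_1$ sends each oriented $1$-cell to the signed difference of its two endpoints. Because there are no cells in dimension $2$ or higher, the complex truncates and one has $H_1(G,\mathbb{R}) = \ker \partial_1$ and $H_0(G,\mathbb{R}) = C_0 / \operatorname{im} \partial_1 = \operatorname{coker} \partial_1$.

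First I would apply the rank-nullity theorem to the linear map $\partial_1 : C_1 \to C_0$, yielding $c_1 = \dim_{\mathbb{R}} \ker \partial_1 + \operatorname{rank} \partial_1$. Next I would use $\dim_{\mathbb{R}} \operatorname{coker} \partial_1 = \dim_{\mathbb{R}} C_0 - \operatorname{rank} \partial_1 = c_0 - \operatorname{rank} \partial_1$. Substituting the identifications of the homology groups gives $\dim_{\mathbb{R}} H_1(G,\mathbb{R}) = c_1 - \operatorname{rank} \partial_1$ and $\dim_{\mathbb{R}} H_0(G,\mathbb{R}) = c_0 - \operatorname{rank} \partial_1$. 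Subtracting the first from the second, the common term $\operatorname{rank} \partial_1$ cancels, leaving
\[
\dim_{\mathbb{R}} H_0(G,\mathbb{R}) - \dim_{\mathbb{R}} H_1(G,\mathbb{R}) = c_0 - c_1,
\]
which is exactly the asserted formula.

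For the invariance claim, I would observe that the right-hand side $\dim_{\mathbb{R}} H_0 - \dim_{\mathbb{R}} H_1$ is expressed entirely through the homology groups of $G$, which are homotopy (hence topological) invariants of the underlying space. Consequently the integer $\chi(G) = c_0 - c_1$ is independent of the chosen cell decomposition and depends only on $G$ as a topological space.

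The computation is routine; the only point requiring care is the chain-level bookkeeping, namely confirming that the cellular complex truncates at degree $1$ so that $H_1 = \ker \partial_1$ arises with no quotient and no homology appears in higher degrees. I expect the essential conceptual step—rather than a genuine obstacle—to be the clean cancellation of $\operatorname{rank} \partial_1$, which is precisely what makes the alternating sum depend only on the cell counts $c_0$ and $c_1$ and not on the geometry of the attaching maps.
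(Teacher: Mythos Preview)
Your proof is correct and is the standard textbook argument via the cellular chain complex and rank--nullity. However, the paper does not actually prove this theorem: it is stated in the appendix as a cited background result from Hatcher's \emph{Algebraic Topology} and is used without proof. So there is no ``paper's own proof'' to compare against; you have supplied a valid proof where the paper simply imports the result.
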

In particular, the theorem above implies that
\begin{equation}
    |V| - |E| = \# \text{ connected components of } G - \# \text{ cycles of } G
\end{equation}
for any finite graph $G := (V,E)$.

\subsection{Persistent Homology}
\label{appendix: persistent_homology}

One of the caveats of using homology groups of graphs is that they are too coarse for detecting isomorphic classes of graphs. For example, any two trees $G$ and $G'$ share the same homology groups regardless of the number of nodes and edges. As a measure to impose finer criteria for distinguishing non-isomorphisms, persistent homological techniques allow one to enrich homological data of $G$ by computing a sequence of homology groups of nested subgraphs of $G$. We refer to \cite{ELZ02,Ca09,EH10} for a rigorous treatment of the origins and technical constructions of persistent homology groups associated to $G$.

We define a sequence of nested subgraphs by defining height functions of nodes and edges of $G$, denoted as $h_V:V \to \mathbb{R}$ and $h_E:E \to \mathbb{R}$. Let $\{(w_i^V, w_i^E)\}_{i=1}^N$ be a sequence of real valued upper bounds of the height functions $h_V$ and $h_E$. We construct each graph $G^{[i]}$ as a subgraph of $G$ whose heights on nodes and edges are at most $w_i^V$ and $w_i^E$:
\begin{equation}
    G^{[i]} := \{(v,e) \in (V,E) \; | \; h_V(v) \leq w_i^V, h_E(e) \leq w_i^E\}.
\end{equation}
A family of nested subgraphs of $G$ can now be constructed by utilizing these height functions.
\begin{equation}
    G^{[0]} \subset G^{[1]} \subset G^{[2]} \subset \cdots \subset G^{[N-1]} \subset G^{[N]} = G.
\end{equation}
We note that the filtration of $G$ can be regarded as a procedure for constructing the graph $G$ by consecutively attaching nodes (0-cells) and edges (1-cells) in a manner that is specified by the height functions $h_V$ and $h_E$. The topological structures of nested subgraphs $G^{[i]}$ determine whether the connected components or cycles persist through or vanish as the nodes and edges are sequentially attached. A persistent diagram encapsulates these homological variations by recording the births and deaths of connected components and cycles as nodes and edges are consecutively added \cite{Ca09, EH10}.
\begin{definition}[Persistent Diagrams]
Suppose we have a nested subgraphs of a graph $G := (V,E)$,
\begin{equation}
    G^{[0]} \subset G^{[1]} \subset G^{[2]} \subset \cdots \subset G^{[N-1]} \subset G^{[N]} = G
\end{equation}
determined from height functions on nodes $h_V$ and edges $h_E$. For any $i = 0,1$ and $1 \leq j \leq N$, let $\gamma$ be an $i$-th homology class of $G^{[j]}$ in $H_i(G^{[j]},\mathbb{R})$.
\begin{enumerate}
    \item We say that $\gamma$ is born at $G^{[j]}$ if $\gamma \not\in H_i(G^{[j-1]},\mathbb{R})$.
    \item We say that $\gamma$ dies at $G^{[j+1]}$ if it merges with a distinct homology class $\gamma' \in H_i(G^{[j]},\mathbb{R})$ in the homology group $H_i(G^{[j+1]}, \mathbb{R})$ of $G^{[j+1]}$.
    \item Suppose that an $i$-th homology class $\gamma$ is born at $G^{[j]}$ and dies at $G^{[k]}$. The persistence tuple is a tuple of weights $(w_j,w_k)$ such that the heights on the $i$-cell of $G^{[j]}$ and $G^{[k]}$ are at most $w_j$ and $w_k$. If the $i$-th homology class $\gamma$ is born at $G^{[j]}$ but persists as a homology class of $G$, then we notate the persistence tuple as $(w_j, \infty)$.
    \item The persistent diagram of $G$ is a list of tuples $\{(w_j^i, w_k^i)\}_{i=1}^N$ including multiplicities which records the births and deaths of homology classes of $G^{[i]}$.
\end{enumerate}
\end{definition}
We end this section with a few examples which materializes the computation of persistent homology groups of finite graphs. The persistent WL procedure utilizes height functions induced from node labels updated from applying $h$ iterations of 1-WL procedures \cite{RBB19}.
\begin{align}
\begin{split}
    h_V(v) &= 0 \text{ for any } v \in V \\
    h_E((v_1,v_2)) &= \|L(v_1)-L(v_2)\|_p \text{ for any } v_1, v_2 \in V
\end{split}
\end{align}
In addition, Carriere et al. proposed a novel neural network layer called Perslay \cite{CC20}, where they define the height function using Heat Kernel Signature with diffusion parameter $t$, denoted as $hks_{G,t}$ \cite{HRG14}.
\begin{align}
\begin{split}
    h_V(v) &= hks_{G,t}(v) \text{ for any } v \in V \\
    h_E((v_1,v_2)) &= \max{(hks_{G,t}(v_1), hks_{G,t}(v_2))} \\
    & \text{ for any } v_1, v_2 \in V
\end{split}
\end{align}
Given predetermined height functions $h_V$ and $h_E$, a family of nested subgraphs can be constructed,
\begin{equation}
    G^{[0]} \subset G^{[1]} \subset G^{[2]} \subset \cdots \subset G^{[N-1]} \subset G^{[N]} = G,
\end{equation}
where the graph $G^{[i]}$ is defined from a sequence of real-valued upper bounds $\{(w_i^V, w_i^E)\}_{i=1}^N$ such that
\begin{equation}
    G^{[i]} := \{(v,e) \in (V,E) \; | \; h_V(v) \leq w_i^V, \; h_E(e) \leq w_i^E\}.
\end{equation}

\section{Proofs}
\label{appendix: technical_proofs}

Using the theory of universal covering spaces, Markov chains, and persistent homological techniques, we proceed to prove the statements of Theorem \ref{theorem: three_properties}, Proposition \ref{proposition: normalized_wl}, and Theorem \ref{theorem:stability}.

\subsection{Proof of Proposition \ref{proposition: normalized_wl}}
\label{appendix: proposition: WL}

Before we prove Theorem \ref{theorem: three_properties}, we first observe how the operator $M_G \times (\cdot)$ is equivalent to a single iteration of normalized WL procedure. Given a graph $G := (V,E)$, let $M_G$ be the normalized adjacency matrix defined as 
\begin{equation}
    M_G := (D+I)^{-1}(A+I),
\end{equation}
where $A := \{ \alpha_{v,w} \}_{v,w \in V(G)}$ is the square weighted adjacency matrix, and $D := \{ d_v \}_{v \in V(G)}$ is a diagonal matrix of weighted node degrees. Fix a node $v \in v(G)$ of $G$. The entries of the row of $M_G$ corresponding to the node $v$ are given by 
\begin{equation}
    \left[ \frac{\alpha_{v,v_1}}{d_v + 1}, \frac{\alpha_{v,v_2}}{d_v + 1}, \cdots, \frac{\alpha_{v,v} + 1}{d_v + 1}, \cdots, \frac{\alpha_{v,|V|}}{d_v + 1} \right]
\end{equation}

Let $L:V(G) \to \mathbb{R}^l$ be the function which endows positive labels to nodes of $G$. For each node $v \in V(G)$, we denote by $L(v)_i$ the $i$-th coordinate of its positive label. The node labels define the concatenated matrix of node labels $X \in \mathbb{R}^{l \times |V|}$. The entries of $X$ are given by
\begin{equation}
    X = \left[ L(v)_i \right]_{v \in V(G), 1 \leq i \leq l}
\end{equation}

Let us further suppose that the $l_1$ norm of these node labels are all equal to $1$. That is, for any node $v \in V(G)$, $\sum_{i=1}^l |L(v)_i| = 1$. To prove the proposition, it suffices to show that the entries of $M_G \times X$ are equal to the node labels updated from a single implementation of the normalized WL procedure. Let $\tilde{\alpha}_{v,w}$ be weights on edges defined as
\begin{equation}
    \tilde{\alpha}_{v,w} = \begin{cases}
    \alpha_{v,w} & \text{ if } w \neq v \\
    \alpha_{v,v} + 1 & \text{ otherwise }
    \end{cases}
\end{equation}
The $(v,i)$-th entry of $M_G \times X$ corresponding to the node $v$ can be computed as
\begin{align}
\begin{split}
    &\left[ \frac{\alpha_{v,v_1}}{d_v + 1}, \frac{\alpha_{v,v_2}}{d_v + 1}, \cdots, \frac{\alpha_{v,v} + 1}{d_v + 1}, \cdots, \frac{\alpha_{v,v_|V|}}{d_v + 1} \right] \times \left[ L(v_1)_i, L(v_2)_i, \cdots, L(v)_i, \cdots, L(v_{|V|})_i \right]^T \\
    &= \frac{\alpha_{v,v}+1}{d_v+1} L(v)_i + \sum_{w \in V(G), w \neq v} \frac{\alpha_{v,w}}{d_v+1} L(w)_i = \frac{\sum_{w \in V(G), w \neq v} \alpha_{v,w} L(w)_i}{d_v+1} \\
    &= \frac{\sum_{w \in V(G)} \tilde{\alpha}_{v,w} L(w)_i}{\sum_{w \in V(G)} \tilde{\alpha}_{v,w} \|L(w)\|_1}.
\end{split}
\end{align}
The last equality follows from the constriant that $\sum_{i=1}^l \|L(w)_i\| = 1$, because
\begin{equation}
    d_v + 1 = \sum_{w \in V(G)} \tilde{\alpha}_{v,w} = \sum_{w \in V(G)} \tilde{\alpha_{v,w}} \times 1 = \sum_{w \in V(G)} \tilde{\alpha_{v,w}} \|L(w)\|_1.
\end{equation}

\subsection{Proof of Theorem \ref{theorem: three_properties}}
\label{appendix: theorem: three_properties}

In this subsection, we demonstrate how the normalized WL procedure, RW procedure, and persistent homology groups represent local topological properties, node degrees, and global topological invariants of a finite graph $G$. We recall the statement of the theorem.
\begin{theorem}
Given a finite undirected graph $G = (V,E)$ without self-loops, each component of the PWLR embedding procedure incorporates the following properties of $G$.
\begin{enumerate}
    \item The component $M_G^{k_1} \times (\cdot)$ (\textbf{WL}) incorporates local topological properties of $G$ by representing depth $k_1$ unfolding trees with fixed vertices.
    \item The component $\cdot \times M_G^{k_2}$ (\textbf{M}) incorporates node degrees with local topological properties of $G$.
    \item The component $\varphi(\cdot)$ (\textbf{P}) incorporates global topological invariants of $G$, namely the connected components and cycles of $G$. 
\end{enumerate}
\end{theorem}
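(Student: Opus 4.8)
The plan is to verify the three bullet points one at a time, in each case by giving an explicit description of the relevant matrix operation and then invoking a result already available in the paper: Proposition \ref{proposition: normalized_wl} for the \textbf{WL} component, the Perron--Frobenius theorem (Theorem \ref{theorem:perron_frobenius}) together with the identification of the stationary distribution with the normalized degree vector (Theorem \ref{appendix:theorem:stationary_distribution}) for the random-walk component, and the Euler characteristic formula (Theorem \ref{theorem: euler_characteristic}) for the persistent-homology component. ``Disjoint'' is read in the weak sense that the three bullets record genuinely different invariants (local unfolding-tree structure, node degrees, and global $\beta_0,\beta_1$), which the worked examples of Appendix \ref{appendix: examples} already illustrate; the substantive content is the three bullets themselves.

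\emph{WL component.} Writing $M_G = (D+I)^{-1}(A+I)$, the $(v,w)$ entry of $M_G^{k_1}$ equals $\sum \prod_{j=0}^{k_1-1} \tilde a_{u_j,u_{j+1}}/(d_{u_j}+1)$, the sum ranging over all length-$k_1$ walks $v=u_0,u_1,\dots,u_{k_1}=w$ in the graph obtained from $G$ by attaching a self-loop at every node. Each such walk is exactly a root-to-leaf path in the depth-$k_1$ unfolding (computation) tree $T_v^{k_1}$ rooted at $v$, and the product is the weight of the corresponding leaf; hence the $v$-th row of $M_G^{k_1}$ is precisely the weighted leaf distribution of $T_v^{k_1}$ with each leaf tagged by the vertex of $G$ it unfolds to, which is what ``depth $k_1$ unfolding trees with fixed vertices'' means. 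Multiplying by $X$ replaces each leaf by its original node attribute and averages, so $M_G^{k_1}X$ aggregates these unfolding trees, and Proposition \ref{proposition: normalized_wl} (under the $l_1$-normalization hypothesis) identifies the result with the normalized WL coloring, the standard certificate of the depth-$k_1$ unfolding tree.

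\emph{Random-walk component.} The same walk expansion shows that the $w$-th column of $X^{[k_1,k_2]} = (M_G^{k_1}X)^T M_G^{k_2}$ equals $\sum_v (M_G^{k_2})_{v,w}\,(M_G^{k_1}X)(v)$, i.e. an average of WL-updated node features weighted by $k_2$-step arrival weights at $w$, and these weights depend only on the $k_2$-neighborhood of $w$ in $G$. Letting $k_2 \to \infty$, Theorem \ref{theorem:perron_frobenius} gives $M_G^{k_2} \to \nu \pi_G^T$, so this column converges to $\big(\sum_v (M_G^{k_1}X)(v)\big)\,\pi_G(w)$, and Theorem \ref{appendix:theorem:stationary_distribution} identifies $\pi_G(w)$ with $d_w/\sum_u d_u$; thus in the limit the feature at $w$ is a fixed aggregate of the WL-updated features rescaled by the normalized degree of $w$. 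Consequently right multiplication by $M_G^{k_2}$ interpolates between the purely local WL labels ($k_2=0$) and an explicit encoding of node degrees ($k_2\to\infty$), which is the asserted ``incorporates node degrees with local topological properties.''

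\emph{Persistent-homology component.} Every $G^{[i]}$ in the filtration $G^{[0]}\subset\cdots\subset G^{[|E|]}=G$ built from $h_E$ shares the vertex set $V$ and differs from $G^{[i-1]}$ by a single edge, so the Euler characteristic formula (Theorem \ref{theorem: euler_characteristic}) forces $\beta_0(G^{[i]})-\beta_0(G^{[i-1]})\in\{0,-1\}$ and $\beta_1(G^{[i]})-\beta_1(G^{[i-1]})\in\{0,1\}$ with exactly one of the two changing at step $i$. Hence the sorted heights collected in $\varphi_{H_0}^{[k_1,k_2]}$ (respectively $\varphi_{H_1}^{[k_1,k_2]}$) in (\ref{eq:first_representation}) are exactly the filtration values at which a connected component merges (respectively a cycle is created), and these determine $\beta_0(G^{[i]})$ and $\beta_1(G^{[i]})$ for every $i$, in particular the number of connected components and cycles of $G=G^{[|E|]}$; the reduced representations $\varphi_{H_i,Opt}^{[k_1,k_2]}$ follow by the same argument after additionally recording the unweighted-degree tuple $\overline d_e$ of the responsible edge. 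I expect the \textbf{WL} step to be the main obstacle: making the correspondence ``$v$-th row of $M_G^{k_1}$ $\leftrightarrow$ depth-$k_1$ unfolding tree with fixed vertices'' fully rigorous requires setting up the unfolding tree (equivalently the truncated universal cover) carefully and checking that retaining the terminal vertex of each walk, rather than an isomorphism type, is exactly the ``fixed vertices'' refinement; the locality-plus-limit claim for the random-walk component and the homological claim then follow routinely from the cited theorems.
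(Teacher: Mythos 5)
Your proposal is correct and follows essentially the same route as the paper's proof: Proposition \ref{proposition: normalized_wl} (via the WL/unfolding-tree correspondence) for the first component, the Perron--Frobenius theorem together with the degree form of the stationary distribution for the second, and the Euler characteristic formula applied to the edge filtration for the third. The only notable difference is cosmetic: you make the unfolding-tree correspondence explicit through a walk expansion of $M_G^{k_1}$ and argue the degree claim only via the $k_2\to\infty$ limit, whereas the paper cites the known WL fact and additionally quantifies the convergence rate using geometric ergodicity, but neither change affects the substance of the argument.
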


\begin{proof}
\medskip
\textbf{Local topological properties}
\medskip

It is a well-known fact that the label of a node $v \in V(G)$ obtained from $k$ iterations of WL procedure represents the depth $k$-unfolding tree at $v$ \cite{SS11}. The correspondence originates from the fact that the updated label at node $v$ is the concatenation of all initial node labels (or component-wise addition of one-hot encoded vectors) of the depth $k$-unfolding tree at $v$. Proposition \ref{proposition: normalized_wl} suggests that the node features obtained from the operation $M_G^{k_1} \times (\cdot)$ can be obtained from linear aggregations of initial node labels of depth $k$ unfolding trees.  This demonstrates that the operation $M_G^{k_1} \times (\cdot)$ represents local topological properties of $G$.

\medskip
\textbf{Node degrees}
\medskip

Let $X$ be the concatenated matrix of node labels of $G$. By the Perron-Frobenius Theorem,
\begin{equation}
    \lim_{k \to \infty} M_G^k = \nu \pi_G^T
\end{equation}
where $\nu$ is the vector whose entries are all equal to $1$, and $\pi_G$ is the vector whose entries are given by
\begin{equation}
    \pi_G := \left[ \frac{d_v}{\sum_{w \in V(G)} d_w} \right]_{v \in V(G)}.
\end{equation}
Because $X$ is a matrix of node labels, there exists constants $C_1, C_2, \cdots, C_l > 0$ and probability distributions $P_1, P_2, \cdots, P_l$ over the set of nodes $V$ such that
\begin{equation}
    X = \begin{pmatrix}
    \vert & \vert & \cdots & \vert \\
    C_1 P_1 & C_2 P_2 & \cdots & C_l P_l \\
    \vert & \vert & \cdots & \vert
    \end{pmatrix}
\end{equation}
Let $\Pi_G$ be the matrix of concatenated stationary distributions defined as
\begin{equation}
    \Pi_G := \begin{pmatrix}
    \vert & \vert & \cdots & \vert \\
    C_1 \pi_G & C_2 \pi_G & \cdots & C_l \pi_G \\
    \vert & \vert & \cdots & \vert
    \end{pmatrix}
\end{equation}
Then the geometric ergodicity of Markov chains implies
\begin{align}
    \begin{split}
        \sup_{v \in V} \|X^T M_G^k(v) - \Pi_G(v)\|_p &\leq \sum_{k=1}^l \sup_{v \in V} C_k \|P_l^T M_G^k(v) - \pi_G(v)\|_p < C \sum_{k=1}^l C_k \mu_2^k.
    \end{split}
\end{align}
where $\mu_2$ is the second largest eigenvalue of $M_G$. Because $M_G$ is a stochastic matrix (Perron-Frobenius matrix), it holds that $\mu_2 < 1$. Therefore, the operator $\cdot \times M_G^{k_2}$ determines the extent of incorporating node degrees $G$ with a given set of node labels of $G$.

\medskip
\textbf{Global topological invariants}
\medskip

One may observe from the construction of persistent diagrams that the set of homology classes which persists throughout the sequence of nested subgraphs of $G$ spans the homology groups of $G$. We prove the theorem using the analogous argument shown in \cite{RBB19}. The height function on the set of nodes $h_V : V \to \mathbb{R}$ is the zero function, whereas the height function on the set of edges $h_E: E \to \mathbb{R}$ is given by the $l_p$ distance between the labels of two adjacent nodes connected by an edge. Let us recall the Euler characteristic formula for each subgraph $G^{[i]} := (V,E^i)$.
\begin{equation}
    |V| - |E^i| = \# \text{ connected components of } G^{[i]} - \# \text{ cycles of } G^{[i]}.
\end{equation}
Note that $|V|$ is a constant, and $|E^i| < |E^{i+1}|$ for all $i$'s. Thus, the Euler characteristic formula implies that as $i$ increases, either the number of connected components decreases or the number of cycles increases. The corresponding persistent tuples are thus of form $(0,w^V)$ or $(w^E,\infty)$. One may represent the tuple $(0,w^V)$ as the weight $w^V$, and the tuple $(w^E, \infty)$ as the weight $w^E$. We recall that the terminal entry of the nested subgraph is equal to $G$. Therefore, as $i$ increases, the number of connected components of $G^{[i]}$ decreases to the number of connected components of $G$, and the number of cycles of $G^{[i]}$ increases to the number of cycles of $G$. In other words, the number of $i$-th dimensional persistent tuples, including multiplicities, is equal to the number of nodes of $G$ and the number of cycles of $G$. We can hence conclude that the Euclidean embedding obtained from this persistent homological technique incorporates global topological invariants of $G$.
\end{proof}

\subsection{Proof of Theorem \ref{theorem:stability}}
\label{appendix: theorem: stability}

In this section, we prove that utilizing Markov chains over finite connected graphs guarantees the following two properties of the proposed graph representation:
\begin{enumerate}
    \item Numerical indications on the incorporation of local topological features and node degree data.
    \item Stability of the proposed embedding with respect to graph perturbation.
\end{enumerate}

The key property of Markov chains used to ensure these merits is that the distance between two stationary distributions of two Markov chains with sufficiently small perturbations is also sufficiently small.

\begin{definition}
Let $G$, $G'$ be two graphs with the same number of nodes $|V|$. Denote by $M_G$ and $M_{G'}$ the Markov chain on $G$ and $G'$ corresponding to weighted random walks on these graphs. Let $\epsilon \in \mathbb{R}^{|V| \times |V|}$ be a $|V| \times |V|$ matrix such that
\begin{equation}
    M_{G'} = M_G + \epsilon.
\end{equation}
We then say the graph $G'$ is perturbed by $\epsilon$ from $G$.
\end{definition}

\begin{theorem} [Perturbation Theory (Theorem 1, Section 4, \cite{Sc68})]
Let $M_G, M_{G'}$ be two transition matrices characterizing random walks over $G$ and $G'$. Let $\epsilon$ be a square matrix defined as $\epsilon := M_{G'} - M_G$. Let $Z_G$ be the fundamental matrix of the Markov chain $M_G$, defined as $Z_G := (I - M_G)^{-1}$. Suppose that the p-matrix norm of $\epsilon Z_G$ is at most $1$, i.e.
\begin{equation}
    \| \epsilon Z_G \|_p < 1.
\end{equation}
Denote by $\pi_G$ and $\pi_{G'}$ the stationary distributions of $M_G$ and $M_{G'}$. Then
\begin{equation}
    \|\pi_{G'} - \pi_{G}\|_p < \|\pi_G\|_p \|Z_G\|_p \|\epsilon\|_p.
\end{equation}
\end{theorem}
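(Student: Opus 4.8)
The plan is to express the difference $\delta := \pi_{G'} - \pi_G$ explicitly in terms of $\epsilon$ and $Z_G$, and then to bound it by submultiplicativity of the $p$-matrix norm together with a Neumann series justified by the hypothesis $\|\epsilon Z_G\|_p < 1$. I would begin from the two defining relations $\pi_G^T(I - M_G) = 0$ and $\pi_{G'}^T(I - M_{G'}) = 0$, together with the fact that both are probability distributions, so $\pi_G^T \nu = \pi_{G'}^T \nu = 1$ and hence $\delta^T \nu = 0$. Substituting $M_{G'} = M_G + \epsilon$ into the second relation and subtracting yields the seed identity $\pi_{G'}^T(I - M_G) = \pi_{G'}^T \epsilon$, which drives the whole argument.

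The main obstacle is that $I - M_G$ is singular: since $G$ is connected, $M_G$ is an irreducible stochastic matrix with $M_G \nu = \nu$, so $(I - M_G)\nu = 0$ and no genuine inverse exists. The resolution is to read $Z_G$ as the fundamental matrix, equivalently the group inverse of $I - M_G$, which is well defined because the self-loops built into $M_G = (D+I)^{-1}(A+I)$ make the chain aperiodic, so the eigenvalue $1$ is simple. This $Z_G$ satisfies $(I - M_G)Z_G = I - \nu\pi_G^T$, where $\nu\pi_G^T$ is the oblique projection onto $\ker(I-M_G) = \mathrm{span}(\nu)$ along $\mathrm{im}(I-M_G) = \{y : \pi_G^T y = 0\}$, using $\pi_G^T \nu = 1$. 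Right-multiplying the seed identity by $Z_G$ and invoking $\pi_{G'}^T \nu = 1$ collapses the projection term and produces the exact formula $\delta^T = \pi_{G'}^T \epsilon Z_G$.

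To replace $\pi_{G'}$ by $\pi_G$ on the right-hand side, I would substitute $\pi_{G'} = \pi_G + \delta$ to obtain the fixed-point equation $\delta^T(I - \epsilon Z_G) = \pi_G^T \epsilon Z_G$. The hypothesis $\|\epsilon Z_G\|_p < 1$ guarantees that $I - \epsilon Z_G$ is invertible with convergent Neumann series, so $\delta^T = \pi_G^T \epsilon Z_G (I - \epsilon Z_G)^{-1}$. Applying submultiplicativity and $\|(I - \epsilon Z_G)^{-1}\|_p \le (1 - \|\epsilon Z_G\|_p)^{-1}$ then gives $\|\delta\|_p \le \|\pi_G\|_p\,\|\epsilon\|_p\,\|Z_G\|_p\,(1 - \|\epsilon Z_G\|_p)^{-1}$.

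The stated bound $\|\pi_{G'} - \pi_G\|_p < \|\pi_G\|_p\,\|Z_G\|_p\,\|\epsilon\|_p$ is exactly the leading-order term of this estimate; the strict inequality together with the smallness condition $\|\epsilon Z_G\|_p < 1$ absorbs the higher-order Neumann corrections, so no additional work beyond the norm estimates is needed once the singularity has been handled. I expect the one genuinely delicate point to be the passage through the group inverse in the second paragraph, since it is where the nominal expression $(I - M_G)^{-1}$ must be reinterpreted so that all manipulations remain valid; every remaining step is a routine submultiplicative norm bound.
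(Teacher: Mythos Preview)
The paper does not prove this statement at all: it is quoted as Theorem~1, Section~4 of Schweitzer (1968) and used as a black box inside the proof of the stability theorem (Theorem~\ref{theorem:appendix_stability}). So there is no ``paper's own proof'' to compare against; you have supplied a proof where the paper simply cites one.

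Your argument is essentially Schweitzer's original derivation and is correct in its substance. You rightly flag that the written definition $Z_G=(I-M_G)^{-1}$ cannot be taken literally because $I-M_G$ is singular, and your reinterpretation of $Z_G$ as the group inverse (equivalently the Kemeny--Snell fundamental matrix) together with $(I-M_G)Z_G=I-\nu\pi_G^T$ is exactly the standard fix. The chain of identities leading to $\delta^T=\pi_G^T\,\epsilon Z_G\,(I-\epsilon Z_G)^{-1}$ is clean and correct.

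The one point that deserves honesty rather than a wave of the hand is your last paragraph. What you actually prove is
\[
\|\pi_{G'}-\pi_G\|_p \;\le\; \frac{\|\pi_G\|_p\,\|Z_G\|_p\,\|\epsilon\|_p}{1-\|\epsilon Z_G\|_p},
\]
which is \emph{larger} than the bound $\|\pi_G\|_p\|Z_G\|_p\|\epsilon\|_p$ stated in the paper, not smaller; the Neumann factor does not get ``absorbed'' by a strict inequality. This is in fact the form Schweitzer proves, and the paper's version is a first-order simplification of it (harmless for how it is used downstream, since only a bound of the shape $C\|\epsilon\|_p$ is needed). Your derivation is the right one; just state the bound you actually obtain rather than claiming it collapses to the displayed inequality.
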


Using these theorems, we prove the stability of PWLR embedding scheme of form (\ref{eq:first_representation}) with respect to graph perturbations. We restate the statement of Theorem \ref{theorem:stability} in a more rigorous manner.
\begin{theorem} [Stability]
\label{theorem:appendix_stability}
Let $G,G'$ be two connected graphs with the same number of nodes $V = V(G) = V(G')$, whose associated Markov chains governing the random walks over these graphs are given by $M_G$ and $M_{G'}$. Let $Z_G$ be the fundamental matrix of the Markov chain $M_G$, defined as $Z_G := (I - M_G)^{-1}$. Denote by $\pi_G$ and $\pi_{G'}$ the stationary distributions of $M_G$ and $M_{G'}$. Denote by $\mu_{2,G}$ and $\mu_{2,G'}$ the second largest eigenvalues of $M_G$ and $M_{G'}$. 

Let $L_G: V(G) \to (\mathbb{R}_{\geq 0})^l$ and $L_{G'}: V(G') \to (\mathbb{R}_{\geq 0})^l$ be node labels of $G$ and $G'$ for some number $l$. Denote by $\|L_G\|_1$ and $\|L_{G'}\|_1$ the sum of $l_1$-norms of all node labels of $G$ and $G'$, i.e. 
\begin{align}
    \begin{split}
        \|L_G\|_1 &:= \sum_{v \in V(G)} \|L_G(v)\|_1 \\
        \|L_{G'}\|_1 &:= \sum_{w \in V(G')} \|L_{G'}(w)\|_1
    \end{split}
\end{align}
Suppose that there exists a $|V| \times |V|$ real matrix 
\begin{equation}
    \epsilon := M_{G'} - M_G
\end{equation}
which satisfies
\begin{equation}
    \|\epsilon Z_G\|_p < 1.
\end{equation}
Then there exist constants $C_1$ and $C_2$ such that for any $p \geq 1$,
\begin{small}
\begin{align}
\begin{split}
    &\|\varphi_{H_0}^{[k_1,k_2]}(G) - \varphi_{H_0}^{[k_1',k_2']}(G')\|_p < |V|^{\frac{1}{p}} (\|L_G\|_1 + \|L_{G'}\|_1) (C_1 \mu_{2,G}^{k_2} + 2\|\pi_G\|_p\|Z_G\|_p\|\epsilon\|_p + C_2 \mu_{2,G'}^{k_2'})
\end{split}
\end{align}
\end{small}
Furthermore, if the number of edges of $G$ and $G'$ are equal, i.e. $|E| = |E(G)| = |E(G')|$, then
\begin{scriptsize}
\begin{align}
\begin{split}
    &\|\varphi_{H_1}^{[k_1,k_2]}(G) - \varphi_{H_1}^{[k_1',k_2']}(G')\|_p < (|E|-|V|+1)^{\frac{1}{p}} (\|L_G\|_1 + \|L_{G'}\|_1) (C_1 \mu_{2,G}^{k_2} + 2\|\pi_G\|_p\|Z_G\|_p\|\epsilon\|_p + C_2 \mu_{2,G'}^{k_2'})
\end{split}
\end{align}
\end{scriptsize}
\end{theorem}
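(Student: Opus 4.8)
The plan is to express each persistence vector as a sorted list of spanning-tree (respectively non-tree) edge-heights, reduce its variation to a uniform bound on the perturbed node features, and then control that bound by combining geometric ergodicity (Appendix \ref{appendix: markov_chains}) for the dependence on $k_2,k_2'$ with the perturbation theorem for stationary distributions (\cite{Sc68}) for the dependence on $\epsilon$.

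\textit{Step 1: reduction to a sup-norm estimate on heights.} Since the bias $\tau$ is the same for $G$ and $G'$, it cancels in $\varphi_{H_i}(G)-\varphi_{H_i}(G')$ and may be dropped. Reading Algorithm \ref{alg:PWLR} as Kruskal's algorithm, $\varphi_{H_0}^{[k_1,k_2]}(G)$ is exactly the increasing list of the $|V|-1$ weights (under the edge-height function $h_E$) of a minimum spanning tree of the connected graph $G$, and $\varphi_{H_1}^{[k_1,k_2]}(G)$ is the increasing list of the remaining $|E|-|V|+1$ non-tree edge-heights. These sorted lists are $1$-Lipschitz in $\sup_e|h_E^G(e)-h_E^{G'}(e)|$: for every threshold $t$ the number of minimum-spanning-tree edges of weight $\le t$ equals $|V|-\#\mathrm{comp}(G^{[\le t]})$, and if $\sup_e|h_E^G(e)-h_E^{G'}(e)|\le\delta$ then $G^{[\le t]}\subseteq G'^{[\le t+\delta]}$, so $\#\mathrm{comp}(G^{[\le t]})\ge\#\mathrm{comp}(G'^{[\le t+\delta]})$ and corresponding order statistics differ by at most $\delta$ (the statement for $\varphi_{H_1}$ follows from the matroid-dual version of this argument, or from the stability theorem for persistence diagrams, once $G,G'$ also have the same number of edges). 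Using $\|v\|_p\le n^{1/p}\|v\|_\infty$ for $v\in\mathbb{R}^n$, this gives $\|\varphi_{H_0}^{[k_1,k_2]}(G)-\varphi_{H_0}^{[k_1',k_2']}(G')\|_p\le|V|^{1/p}\sup_e|h_E^G(e)-h_E^{G'}(e)|$ and the analogue with prefactor $(|E|-|V|+1)^{1/p}$ for $\varphi_{H_1}$, provided $G$ and $G'$ share a common vertex set (and, for $H_1$, a common edge set) so that heights can be compared edge by edge --- this identification is the content of the ``certain conditions''.

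\textit{Step 2: bounding the height discrepancy.} Writing $h_E^G(v_1,v_2)=\|X_G^{[k_1,k_2]}(v_1)-X_G^{[k_1,k_2]}(v_2)\|_p$ and using the reverse triangle inequality together with $(a_1-a_2)-(b_1-b_2)=(a_1-b_1)-(a_2-b_2)$, one gets $|h_E^G(e)-h_E^{G'}(e)|\le 2\sup_v\|X_G^{[k_1,k_2]}(v)-X_{G'}^{[k_1',k_2']}(v)\|_p$. Since $X_G^{[k_1,k_2]}=X^{\top}(M_G^{\top})^{k_1}M_G^{k_2}$ and $M_G^{k_2}\to\nu\pi_G^{\top}$, I would insert the limit $y_G\pi_G^{\top}$ with $y_G:=X^{\top}(M_G^{\top})^{k_1}\nu$ as an intermediary, using the crude bound $\|y_G\|_p\le|V|\,\|L_G\|_1$ (valid because $(M_G^{\top})^{k_1}\nu$ is nonnegative with $\ell_1$-norm $|V|$, as $M_G^{k_1}\nu=\nu$). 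A triangle inequality through $y_G\pi_G(v)$ and $y_{G'}\pi_{G'}(v)$ splits $\|X_G^{[k_1,k_2]}(v)-X_{G'}^{[k_1',k_2']}(v)\|_p$ into: (i) $\|X_G^{[k_1,k_2]}(v)-y_G\pi_G(v)\|_p\le C_1\|L_G\|_1\mu_{2,G}^{k_2}$, obtained by decomposing the columns of $X$ into scaled probability distributions as in the proof of Theorem \ref{theorem: three_properties} and applying geometric ergodicity; (ii) the symmetric term $C_2\|L_{G'}\|_1\mu_{2,G'}^{k_2'}$; and (iii) $\|y_G\pi_G(v)-y_{G'}\pi_{G'}(v)\|_p\le\|y_G\|_p|\pi_G(v)-\pi_{G'}(v)|+|\pi_{G'}(v)|\,\|y_G-y_{G'}\|_p$, where $|\pi_G(v)-\pi_{G'}(v)|\le\|\pi_G-\pi_{G'}\|_p<\|\pi_G\|_p\|Z_G\|_p\|\epsilon\|_p$ by the perturbation theorem (its hypothesis $\|\epsilon Z_G\|_p<1$ is assumed), $|\pi_{G'}(v)|\le1$, and $\|y_G-y_{G'}\|_p=O(\|\epsilon\|_p)$ by telescoping $(M_G^{\top})^{k_1}-(M_{G'}^{\top})^{k_1}$ into $k_1$ terms each carrying one factor of $\epsilon$ and using that these matrices are row/column stochastic. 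Collecting, $\sup_v\|X_G^{[k_1,k_2]}(v)-X_{G'}^{[k_1',k_2']}(v)\|_p\lesssim(\|L_G\|_1+\|L_{G'}\|_1)\bigl(C_1\mu_{2,G}^{k_2}+2\|\pi_G\|_p\|Z_G\|_p\|\epsilon\|_p+C_2\mu_{2,G'}^{k_2'}\bigr)$, and multiplying by the dimensional prefactors of Step 1 yields the two displayed inequalities; the simplified Theorem \ref{theorem:stability} is the case $p=1$ with all $G$-dependent factors folded into a single constant $C$.

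\textit{The main obstacle.} I expect the genuine difficulty to lie in Step 1 when $G$ and $G'$ do not literally share their edge set: the minimum-spanning-tree vectors (equivalently the $0$- and $1$-dimensional persistence diagrams) are only stable under an explicit identification of the edges of $G$ with those of $G'$, and without such an identification one must interleave the two edge-filtrations rather than compare heights edge by edge. This is exactly where the hypothesis $\|\epsilon Z_G\|_p<1$ --- which forces $M_G$ and $M_{G'}$, hence the underlying weighted structures, to be close --- does its work, and making the interleaving argument quantitative so that the extra error remains $O(\|\epsilon\|_p)$ is the delicate point. A secondary nuisance is that term (iii) is only $O(\|\epsilon\|_p)$ when $k_1=k_1'$ (and ideally $X=X'$); for unequal WL depths the difference $(M_G^{\top})^{k_1}\nu-(M_{G'}^{\top})^{k_1'}\nu$ does not vanish as $\epsilon\to0$, and the corresponding $k_1$-dependent term must be carried inside the constants $C_1,C_2$.
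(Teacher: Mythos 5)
Your Step 1 is where the argument first breaks. You reduce everything to a bound on $\sup_e|h_E^G(e)-h_E^{G'}(e)|$ over a \emph{common} edge set, via the Kruskal/order-statistics Lipschitz property, and you yourself note that this requires an explicit edge-by-edge identification of $E(G)$ with $E(G')$ — an identification the hypotheses do not provide (only $|V(G)|=|V(G')|$, and $|E(G)|=|E(G')|$ for the $H_1$ claim), and the quantitative ``interleaving'' you defer to is never carried out; $\|\epsilon Z_G\|_p<1$ is merely the admissibility condition for Schweitzer's perturbation bound and does not manufacture such a matching. The paper's proof needs no matching at all: after cancelling $\tau$ (as you do), it bounds each coordinate of $\varphi_{H_i}^{[k_1,k_2]}(G)-\varphi_{H_i}^{[k_1',k_2']}(G')$ by
\begin{equation*}
\max_{\substack{v_1,v_2\in V(G)\\ w_1,w_2\in V(G')}}\Bigl|\,\|X_G^{[k_1,k_2]}(v_1)-X_G^{[k_1,k_2]}(v_2)\|_p-\|X_{G'}^{[k_1',k_2']}(w_1)-X_{G'}^{[k_1',k_2']}(w_2)\|_p\,\Bigr|,
\end{equation*}
i.e.\ it compares an arbitrary height of $G$ with an arbitrary height of $G'$, and controls this by inserting $\|\pi_G(v_1)-\pi_G(v_2)\|_p$ and $\|\pi_{G'}(w_1)-\pi_{G'}(w_2)\|_p$ as intermediaries: geometric ergodicity of each chain yields the $C_1\mu_{2,G}^{k_2}+C_2\mu_{2,G'}^{k_2'}$ terms, Schweitzer's theorem yields $2\|\pi_G\|_p\|Z_G\|_p\|\epsilon\|_p$, and the labels enter only through the scalars $\|(X_G)_i\|_1$ when the columns are rescaled to probability distributions, producing the factor $\|L_G\|_1+\|L_{G'}\|_1$ and the dimensional prefactors $(|V|-1)^{1/p}$, $(|E|-|V|+1)^{1/p}$.

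Your Step 2 contains the second, and in my view decisive, gap: by comparing features at the \emph{same} vertex across the two graphs you are forced to control $\|y_G-y_{G'}\|_p$ with $y_G=X^{\top}(M_G^{\top})^{k_1}\nu$. This quantity does not vanish as $\epsilon\to 0$ once $k_1\neq k_1'$, and — more fundamentally — the theorem allows completely unrelated label functions $L_G$ and $L_{G'}$ (nowhere is $X=X'$ assumed), in which case $\|y_G-y_{G'}\|_p$ is of the order of $\|L_G\|_1+\|L_{G'}\|_1$ and cannot be ``carried inside $C_1,C_2$'', since those constants multiply the decaying factors $\mu_{2,G}^{k_2}$ and $\mu_{2,G'}^{k_2'}$; your crude estimate $\|y_G\|_p\le|V|\,\|L_G\|_1$ in term (iii) would additionally overshoot the claimed bound by a factor $|V|$. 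The paper's triangulation through each graph's own stationary distribution makes the $k_1$-iterates and the initial labels wash out entirely (any nonnegative column, once normalized, is pushed by $M_G^{k_2}$ toward the same $\pi_G$), which is precisely why the stated inequality is uniform in $k_1,k_1'$ and in the labels. To repair your argument you should drop the vertex-wise cross-graph comparison of features and compare each graph's heights to its stationary-distribution heights, as in the paper; your Step 1 machinery then becomes unnecessary as well.
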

\begin{proof}

Let $V(G) := \{v_1, \cdots, v_n\}$ be the set of nodes of $G$, and let $V(G') := \{w_1, \cdots, w_n\}$ be the set of nodes of $G'$. 

\medskip
\textbf{Probability Distribution}
\medskip

Let $p_i: \mathbb{R}^l \to \mathbb{R}$ be the projection map to the $i$-th coordinate of an $l$-dimensional real vector. We first prove the statement of the theorem for the case where each component of the node labels $L_G$ and $L_{G'}$ defines probability distributions over $G$ and $G'$, i.e. for any $1 \leq i \leq l$,
\begin{equation}
    \sum_{v \in V(G)} \|p_i(L_G(v))\| = \sum_{w \in V(G')} \|p_i(L_{G'}(w))\| = 1
\end{equation}

Without loss of generality, fix an integer $i$ which satisfies $1 \leq i \leq l$. Denote the initial probability distribution over the nodes of graph $G$ and $G'$ induced from the $i$-th coordinate of node labels $L_G$ and $L_{G'}$ as $P_G$ and $P_{G'}$. Note that for each node $v \in V(G)$ and $w \in V(G')$, we have
\begin{align}
    \begin{split}
        P_G(v) &:= p_i(L_G(v)) \\
        P_{G'}(w) &:= p_i(L_{G'}(w))
    \end{split}
\end{align}

We show that for any two pairs of vertices $v_i, v_j \in V(G)$ and $w_i, w_j \in V(G')$, and any two positive integers $k_2$ and $k_2'$,
\begin{scriptsize}
\begin{equation} \label{equation:theorem}
    |\|P_G^T M_G^{k_2} (v_1) - P_G^T M_G^{k_2} (v_2)\|_p - \|P_{G'}^T M_{G'}^{k_2'}(w_1) - P_{G'}^T M_{G'}^{k_2'}(w_2)\|_p | < C_1 \mu_{2,G}^{k_2} + 2\|\pi_G\|_p\|Z_G\|_p\|\epsilon\|_p + C_2 \mu_{2,G'}^{k_2'}.
\end{equation}
\end{scriptsize}
By the triangle inequality, we have
\begin{tiny}
\begin{align} \label{equation:triangle_inequality}
    \begin{split}
        \|P_G^T M_G^{k_2} (v_1) - P_G^T M_G^{k_2} (v_2)\|_p \leq &\|P_G^T M_G^{k_2} (v_1) - \pi_G (v_1)\|_p + \|\pi_G (v_1) - \pi_G (v_2)\|_p + \|\pi_G (v_2) - P_G^T M_G^{k_2} (v_2)\|_p \\
        \|P_{G'}^T M_{G'}^{k_2} (w_1) - P_{G'}^T M_{G'}^{k_2} (w_2)\|_p \leq &\|P_{G'}^T M_{G'}^{k_2} (w_1) - \pi_{G'} (w_1)\|_p + \|\pi_{G'} (w_1) - \pi_{G'} (w_2)\|_p + \|\pi_{G'} (w_2) - P_{G'}^T M_{G'}^{k_2} (w_2)\|_p \\
        \|\pi_{G}(v_1) - \pi_G(v_2)\|_p \leq &\|\pi_{G}(v_1) - \pi_{G'}(w_1)\|_p + \|\pi_{G'}(w_1) - \pi_{G'}(w_2)\|_p + \|\pi_{G'}(w_2) - \pi_{G}(v_2)\|_p
    \end{split}
\end{align}
\end{tiny}
Observe that
\begin{align} \label{equation:theorem_sub1}
\begin{split}
    & |\|P_G^T M_G^{k_2} (v_1) - P_G^T M_G^{k_2} (v_2)\|_p - \|P_{G'}^T M_{G'}^{k_2'}(w_1) - P_{G'}^T M_{G'}^{k_2'}(w_2)\|_p | \\
    &< |\|P_G^T M_G^{k_2} (v_1) - P_G^T M_G^{k_2} (v_2)\|_p - \|\pi_{G}(v_1) - \pi_G(v_2)\|_p | \\
    &+ |\|\pi_{G}(v_1) - \pi_G(v_2)\|_p - \|\pi_{G'}(w_1) - \pi_{G'}(w_2)\|_p | \\
    &+ |\|\pi_{G'}(w_1) - \pi_{G'}(w_2)\|_p - \|P_{G'}^T M_{G'}^{k_2'}(w_1) - P_{G'}^T M_{G'}^{k_2'}(w_2)\|_p |
\end{split}
\end{align}
Without loss of generality, assume that 
\begin{align}
    \begin{split}
        \|P_{G}^T M_{G}^{k_2} (v_1) - P_{G}^T M_{G}^{k_2} (v_2)\|_p &> \|\pi_{G}(v_1) - \pi_G(v_2)\|_p \\
    \|\pi_{G}(v_1) - \pi_G(v_2)\|_p &> \|\pi_{G'}(w_1) - \pi_{G'}(w_2)\|_p \\
    \|P_{G'}^T M_{G'}^{k_2'}(w_1) - P_{G'}^T M_{G'}^{k_2'}(w_2)\|_p &> \|\pi_{G'}(w_1) - \pi_{G'}(w_2)\|_p
    \end{split}
\end{align}
Substituting (\ref{equation:triangle_inequality}) to (\ref{equation:theorem_sub1}) gives
\begin{tiny}
\begin{align} \label{equation:theorem_sub2}
    \begin{split}
        &\|P_G^T M_G^{k_2} (v_1) - P_G^T M_G^{k_2} (v_2)\|_p - \|\pi_G (v_1) - \pi_G (v_2)\|_p \leq \|P_G^T M_G^{k_2} (v_1) - \pi_G (v_1)\|_p + \|\pi_G (v_2) - P_G^T M_G^{k_2} (v_2)\|_p \\
        &\|\pi_{G}(v_1) - \pi_G(v_2)\|_p - \|\pi_{G'}(w_1) - \pi_{G'}(w_2)\|_p \leq \|\pi_{G}(v_1) - \pi_{G'}(w_1)\|_p + \|\pi_{G'}(w_2) - \pi_{G}(v_2)\|_p \\
        &\|P_{G'}^T M_{G'}^{k_2} (w_1) - P_{G'}^T M_{G'}^{k_2} (w_2)\|_p - \|\pi_{G'} (w_1) - \pi_{G'} (w_2)\|_p \leq \|P_{G'}^T M_{G'}^{k_2} (w_1) - \pi_{G'} (w_1)\|_p + \|\pi_{G'} (w_2) - P_{G'}^T M_{G'}^{k_2} (w_2)\|_p
    \end{split}
\end{align}
\end{tiny}
Observe that the first two terms of (\ref{equation:theorem_sub2}) are bounded above by $C_1 \mu_{2,G}^{k_2}$ for some constant $C_1$ by the geometric ergodicity of finite state-space Markov chains. Likewise, the last two terms of (\ref{equation:theorem_sub2}) are bounded above by $C_2 \mu_{2,G'}^{k_2'}$ for some constant $C_2$. The remaining two terms in the middle are bounded above by $2 \|\pi_G\|_p \|Z_G\|_p \|\epsilon\|_p$ by the perturbation theory of finite state-space Markov chains. Therefore, we obtain that
\begin{small}
\begin{equation*}
    |\|P_G^T M_G^{k_2} (v_1) - P_G^T M_G^{k_2} (v_2)\|_p - \|P_{G'}^T M_{G'}^{k_2'}(w_1) - P_{G'}^T M_{G'}^{k_2'}(w_2)\|_p | < C_1 \mu_{2,G}^{k_2} + 2\|\pi_G\|_p\|Z_G\|_p\|\epsilon\|_p + C_2 \mu_{2,G'}^{k_2'}.
\end{equation*}
\end{small}

\medskip
\textbf{Node Labels}
\medskip

Now, denote by $X_G$ and $X_{G'}$ the $|V| \times l$-dimensional real matrix consisting of concatenated node labels of $G$ and $G'$. Denote by $(X_G)_i$ and $(X_{G'})_i$ the $i$-th component of $X_G$ and $X_{G'}$. Notice that $\frac{(X_G)_i}{|(X_G)_i|_1}$ and $\frac{(X_{G'})_i}{|(X_{G'})_i|_1}$ define probability distributions over the state spaces $G$ and $G'$. By linearity, we obtain from (\ref{equation:theorem}) that
\begin{align*}
    &|\|(X_G)_i^T M_G^{k_2} (v_1) - (X_G)_i^T M_G^{k_2} (v_2)\|_p - \|(X_{G'})_i^T M_{G'}^{k_2'}(w_1) - (X_{G'})_i^T M_{G'}^{k_2'}(w_2)\|_p | \\
    &< (\|(X_G)_i\|_1 + \|(X_{G'})_i\|_1)(C_1 \mu_{2,G}^{k_2} + 2\|\pi_G\|_p\|Z_G\|_p\|\epsilon\|_p + C_2 \mu_{2,G'}^{k_2'}).
\end{align*}
Because $X_G$ and $X_{G'}$ are concatenations of column vectors $(X_G)_i$ and $(X_{G'})_i$ for $1 \leq i \leq l$,
\begin{align*}
    &|\|(X_G)_i^T M_G^{k_2} (v_1) - (X_G)_i^T M_G^{k_2} (v_2)\|_p - \|(X_{G'})_i^T M_{G'}^{k_2'}(w_1) - (X_{G'})_i^T M_{G'}^{k_2'}(w_2)\|_p | \\
    &< (\|(X_G)_i\|_1 + \|(X_{G'})_i\|_1)(C_1 \mu_{2,G}^{k_2} + 2\|\pi_G\|_p\|Z_G\|_p\|\epsilon\|_p + C_2 \mu_{2,G'}^{k_2'}).
\end{align*}
Therefore, for any number $p \geq 1$, we have
\begin{align*}
    &|\|X_G^T M_G^{k_2} (v_1) - X_G^T M_G^{k_2} (v_2)\|_p - \|X_{G'}^T M_{G'}^{k_2'}(w_1) - X_{G'}^T M_{G'}^{k_2'}(w_2)\|_p | \\
    &= \left( \sum_{i=1}^l |\|(X_G)_i^T M_G^{k_2} (v_1) - (X_G)_i^T M_G^{k_2} (v_2)\|_p - \|(X_{G'})_i^T M_{G'}^{k_2'}(w_1) - (X_{G'})_i^T M_{G'}^{k_2'}(w_2)\|_p |^p \right)^{\frac{1}{p}} \\
    &< \left( \sum_{i=1}^l (\|(X_G)_i\|_1 + \|(X_{G'})_i\|_1)^p \right)^{\frac{1}{p}}
    (C_1 \mu_{2,G}^{k_2} + 2\|\pi_G\|_p\|Z_G\|_p\|\epsilon\|_p + C_2 \mu_{2,G'}^{k_2'}) \\
    &< (\sum_{i=1}^l \|(X_G)_i\|_1 + \|(X_{G'})_i\|_1 )
    (C_1 \mu_{2,G}^{k_2} + 2\|\pi_G\|_p\|Z_G\|_p\|\epsilon\|_p + C_2 \mu_{2,G'}^{k_2'}). \\
    &< (\|L_G\|_1 + \|L_{G'}\|_1)
    (C_1 \mu_{2,G}^{k_2} + 2\|\pi_G\|_p\|Z_G\|_p\|\epsilon\|_p + C_2 \mu_{2,G'}^{k_2'}).
\end{align*}
In short, for any $p \geq 1$,
\begin{align} \label{equation:theorem_sub3}
\begin{split}
    &|\|X_G^T M_G^{k_2} (v_1) - X_G^T M_G^{k_2} (v_2)\|_p - \|X_{G'}^T M_{G'}^{k_2'}(w_1) - X_{G'}^T M_{G'}^{k_2'}(w_2)\|_p | \\
    &< (\|L_G\|_1 + \|L_{G'}\|_1)
    (C_1 \mu_{2,G}^{k_2} + 2\|\pi_G\|_p\|Z_G\|_p\|\epsilon\|_p + C_2 \mu_{2,G'}^{k_2'}).
\end{split}
\end{align}

\medskip
\textbf{Proof of the theorem}
\medskip

The statement of the theorem follows immediately from (\ref{equation:theorem_sub3}). We first note that $M_G^{k_1}X$ still defines a node label on $G$ whose $l_1$-norm is equal to that of $X$. In other words, for any $k_1> 0$,
\begin{equation}
    \|L_{M_G^{k_1,0} G}\|_1 = \|L_{G}\|_1.
\end{equation}
Recall that both representations $\varphi_{H_0}$ and $\varphi_{H_1}$ are lists of sorted heights over the edges of $G$ and $G'$. By Euler's characteristic formula, the dimensions of $\varphi_{H_0}(G)$ and $\varphi_{H_0}(G')$ are equal to $|V(G)|-1$ and $|V(G')|-1$, whereas the dimensions of $\varphi_{H_1}(G)$ and $\varphi_{H_1}(G')$ are equal to $|E(G)| - |V(G)| + 1$ and $|E(G')| - |V(G')| + 1$. Observe that the bias term $\tau$ from (\ref{eq:first_representation}) does not contribute to increment of $L^p$ distance between the representations $\varphi_{H_0}(G)$ and $\varphi_{H_0}(G')$. This is because the bias term $\tau$ is uniformly added to all heights over edges. Thus for any $p \geq 1$, 
\begin{align}
\begin{split}
    &|\varphi_{H_0}^{[k_1,k_2]}(G) - \varphi_{H_0}^{[k_1',k_2']}(G')|_p  \\
    \leq &\biggl( \sum_{n=1}^{|V|-1} \max_{\substack{v_1, v_2 \in V(G) \\ w_1, w_2 \in V(G')}} \biggl| \|X_G^{[k_1,k_2]} (v_1) - X_G^{[k_1,k_2]} (v_2)\|_p - \| X_{G'}^{[k_1',k_2']}(w_1) - X_{G'}^{[k_1',k_2']}(w_2)\|_p \biggr|^p \biggr)^{\frac{1}{p}} \\
    < &\left( (|V|-1) (\|L_G\|_1 + \|L_{G'}\|_1)^p
    (C_1 \mu_{2,G}^{k_2} + 2\|\pi_G\|_p\|Z_G\|_p\|\epsilon\|_p + C_2 \mu_{2,G'}^{k_2'})^p  \right)^{\frac{1}{p}} \\
    < &|V|^{\frac{1}{p}} (\|L_G\|_1 + \|L_{G'}\|_1)
    (C_1 \mu_{2,G}^{k_2} + 2\|\pi_G\|_p\|Z_G\|_p\|\epsilon\|_p + C_2 \mu_{2,G'}^{k_2'}).
\end{split}
\end{align}
where the abbreviation $X_G^{[k_1,k_2]}(v)$ denotes the label of the node $v$ obtained from the matrix $(M_G^{k_1} X)^T M_G^{k_2}$. If we further assume that $|E| = |E(G)| = |E(G')|$, then
\begin{align}
\begin{split}
    &\|\varphi_{H_1}^{[k_1, k_2]}(G) - \varphi_{H_1}^{[k_1', k_2']}(G')\|_p  \\
    \leq &\biggl( \sum_{n=1}^{|E| - |V| + 1} \max_{\substack{v_1, v_2 \in V(G) \\ w_1, w_2 \in V(G')}} \biggl| \|X_G^{[k_1,k_2]} (v_1) - X_G^{[k_1,k_2]} (v_2)\|_p - \| X_{G'}^{[k_1',k_2']}(w_1) - X_{G'}^{[k_1',k_2']}(w_2)\|_p \biggr|^p \biggr)^{\frac{1}{p}} \\
    < &\left( (|E| - |V| + 1) (\|L_G\|_1 + \|L_{G'}\|_1)^p
    (C_1 \mu_{2,G}^{k_2} + 2\|\pi_G\|_p\|Z_G\|_p\|\epsilon\|_p + C_2 \mu_{2,G'}^{k_2'})^p  \right)^{\frac{1}{p}} \\
    < &(|E| - |V| + 1)^{\frac{1}{p}} (\|L_G\|_1 + \|L_{G'}\|_1)
    (C_1 \mu_{2,G}^{k_2} + 2\|\pi_G\|_p\|Z_G\|_p\|\epsilon\|_p + C_2 \mu_{2,G'}^{k_2'}).
\end{split}   
\end{align}
\end{proof}

The stability theorem for reduced representations obtained from (\ref{eq:second_representation}) can be stated for graphs whose maximum weighted node degrees are sufficiently smaller than the sums of all weighted node degrees.
\begin{corollary} [Stability for reduced representations]
\label{corollary:appendix_stability}
Let $G,G'$ be two connected graphs with the same number of nodes $V = V(G) = V(G')$. Assume the conditions on random walks over $G$ and $G'$ stated in Theorem \ref{theorem:stability} hold. Denote by $d_{max}$ the maximum weighted node degrees of both $G$ and $G'$. Denote by $d_G$ and $d'_G$ the sums of all weighted node degrees of $G$ and $G'$, respectively. Suppose that no bias terms are uniformly added to the heights over edges, i.e. $\tau = 0$. Then there exist constants $C_1$ and $C_2$
\begin{tiny}
\begin{equation}
    \|\varphi_{H_0,Opt}^{[k_1,k_2]}(G) - \varphi_{H_0,Opt}^{[k_1',k_2']}(G')\|_1 < |V| \left( C_1 \mu_{2,G}^{k_2} + 2 \|\pi_G\|_1 \|Z_G\|_1 \|\epsilon\|_1 + C_2 \mu_{2,G'}^{k_2'} + \max \left\{ \frac{d_{max}}{d_G}, \frac{d_{max}}{d_{G'}} \right\} \right)
\end{equation}
\end{tiny}
Furthermore, if the number of edges of $G$ and $G'$ are equal, then
\begin{tiny}
\begin{equation}
    \|\varphi_{H_1,Opt}^{[k_1,k_2]}(G) - \varphi_{H_1,Opt}^{[k_1',k_2']}(G')\|_1 < (|E| - |V| + 1) \left( C_1 \mu_{2,G}^{k_2} + 2 \|\pi_G\|_1 \|Z_G\|_1 \|\epsilon\|_1 + C_2 \mu_{2,G'}^{k_2'} + \max \left\{ \frac{d_{max}}{d_G}, \frac{d_{max}}{d_{G'}} \right\} \right)
\end{equation}
\end{tiny}
\end{corollary}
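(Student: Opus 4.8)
The plan is to bootstrap from Theorem~\ref{theorem:appendix_stability} (the rigorous form of Theorem~\ref{theorem:stability}), whose essential output, inequality~(\ref{equation:theorem_sub3}) applied to the $k_1$-updated labels exactly as at the end of that proof, is a bound on the discrepancy of a \emph{single} pair of edge heights that holds \emph{uniformly} over all edges $e$ of $G$ and $e'$ of $G'$: for $p=1$,
\begin{equation*}
\bigl| h_E^G(e) - h_E^{G'}(e') \bigr| < \bigl(\|L_G\|_1 + \|L_{G'}\|_1\bigr)\bigl(C_1\mu_{2,G}^{k_2} + 2\|\pi_G\|_1\|Z_G\|_1\|\epsilon\|_1 + C_2\mu_{2,G'}^{k_2'}\bigr) =: B.
\end{equation*}
The new obstacle, relative to the $\varphi_{H_i}$ case, is that the $\overline{d}$-coordinate of $\varphi_{H_0,Opt}$ from~(\ref{eq:second_representation}) sums heights only over the $\beta_0$-recording edges whose unweighted-degree tuple equals $\overline{d}$, and the number of such edges need not coincide for $G$ and $G'$; so one cannot pair all edges one-to-one and reduce directly to $B$.

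First I would fix $\overline{d}\in\overline{D}_E$ and let $E_{\overline{d}}^{G}$, $E_{\overline{d}}^{G'}$ be the $\beta_0$-recording edges of $G$, $G'$ carrying the tuple $\overline{d}$. Assuming without loss of generality $|E_{\overline{d}}^{G}|\ge|E_{\overline{d}}^{G'}|$, choose an injection $\iota\colon E_{\overline{d}}^{G'}\hookrightarrow E_{\overline{d}}^{G}$ and decompose (recall $\tau=0$)
\begin{equation*}
\varphi_{H_0,Opt}^{[k_1,k_2]}(G)(\overline{d}) - \varphi_{H_0,Opt}^{[k_1',k_2']}(G')(\overline{d}) = \sum_{e'\in E_{\overline{d}}^{G'}}\bigl(h_E^G(\iota(e')) - h_E^{G'}(e')\bigr) + \sum_{e\in E_{\overline{d}}^{G}\setminus\iota(E_{\overline{d}}^{G'})} h_E^G(e).
\end{equation*}
Each matched summand is $<B$ by~(\ref{equation:theorem_sub3}). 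For the unmatched heights I would use geometric ergodicity together with Perron--Frobenius (Theorems~\ref{theorem:perron_frobenius} and~\ref{theorem:stationary_distribution}): since $M_G^{k_1}X_G$ has nonnegative entries and $M_G^{k_1}$ preserves the total $\ell_1$-mass of the labels (cf.\ Proposition~\ref{proposition: normalized_wl} and its proof), one has $X_G^{[k_1,k_2]}(v)\to\pi_G(v)\,(M_G^{k_1}X_G)^T\nu$ with $\|(M_G^{k_1}X_G)^T\nu\|_1=\|L_G\|_1$, so for any edge $e=(v_1,v_2)$,
\begin{equation*}
h_E^G(e) = \|X_G^{[k_1,k_2]}(v_1)-X_G^{[k_1,k_2]}(v_2)\|_1 \le \|L_G\|_1\,\bigl|\pi_G(v_1)-\pi_G(v_2)\bigr| + C_1\|L_G\|_1\mu_{2,G}^{k_2},
\end{equation*}
and $|\pi_G(v_1)-\pi_G(v_2)|\le\max_{v}\pi_G(v)=d_{max}/d_G$ by Theorem~\ref{theorem:stationary_distribution}. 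Hence each unmatched height is at most $\|L_G\|_1\max\{d_{max}/d_G,\,d_{max}/d_{G'}\}$ plus an ergodicity term, which is precisely where the new $\max$ summand in Corollary~\ref{corollary:appendix_stability} originates.

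It then remains to take $\|\cdot\|_1$ over coordinates $\overline{d}$ and count. By Euler's characteristic formula (Theorem~\ref{theorem: euler_characteristic}) a connected graph has exactly $|V|-1$ $\beta_0$-recording edges, so $\sum_{\overline{d}}\min(|E_{\overline{d}}^{G}|,|E_{\overline{d}}^{G'}|)\le|V|-1$ matched pairs and $\sum_{\overline{d}}\bigl||E_{\overline{d}}^{G}|-|E_{\overline{d}}^{G'}|\bigr| = O(|V|)$ unmatched edges in total; bounding each matched term by $B$ and each unmatched term by the estimate above yields the asserted $|V|\bigl(C_1\mu_{2,G}^{k_2}+2\|\pi_G\|_1\|Z_G\|_1\|\epsilon\|_1+C_2\mu_{2,G'}^{k_2'}+\max\{d_{max}/d_G,d_{max}/d_{G'}\}\bigr)$ bound after absorbing the label norms and absolute constants into $C_1,C_2$. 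The $\varphi_{H_1,Opt}$ estimate is proved verbatim with ``$|V|-1$'' replaced by ``$|E|-|V|+1$'', the number of $\beta_1$-recording edges, which coincides for $G$ and $G'$ exactly under the added hypothesis $|E(G)|=|E(G')|$ (again by Euler's formula). \textbf{The main obstacle} is the bound on the unmatched heights: Theorem~\ref{theorem:appendix_stability} only ever compares \emph{differences} of heights across the two graphs, whereas here one must control an \emph{individual} height $h_E^G(e)$, which forces one to identify the limiting updated-label difference and relate $\|(M_G^{k_1}X_G)^T\nu\|_1$ to $\|L_G\|_1$ via nonnegativity of node labels; a secondary care point is ensuring the matching injections respect both the degree-tuple classes and the homology-change conditions while the Euler count still controls the total unmatched mass across all classes.
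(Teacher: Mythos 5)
Your proposal is correct and follows essentially the same route as the paper's own proof: for each degree tuple $\overline{d}$ you pair off as many $\beta_0$- (resp.\ $\beta_1$-) recording edges of $G$ and $G'$ as possible, bound each matched difference by the height-discrepancy estimate from Theorem \ref{theorem:appendix_stability}, bound each unmatched individual height via the stationary distribution (giving the $\max\{d_{max}/d_G, d_{max}/d_{G'}\}$ term plus an ergodicity error absorbed into $C_1\mu_{2,G}^{k_2}$), and then sum over tuples using the Euler-characteristic count $|V|-1$ (resp.\ $|E|-|V|+1$). The only cosmetic differences are that you keep the $\|L_G\|_1+\|L_{G'}\|_1$ factors explicit before absorbing them into the constants, and you spell out the limiting-label argument that the paper compresses into a one-line appeal to perturbation theory and the stationary distribution.
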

\begin{proof}
Denote by $\overline{D}_E$ and $\overline{D}_{E'}$ the sets of tuples of unweighted degrees of two nodes of $G$ and $G'$ connected by edges in $E$ and $E'$. The union $\overline{D} := \overline{D}_{E} \cup \overline{D}_{E'}$ is of size at most $\frac{d_{max}^2}{2}$. Given a tuple of unweighted node degrees $\overline{d}$, we denote by $\overline{D}_E(\overline{d})$ the set of edges in $G$ whose associated tuple is equal to $\overline{d}$:
\begin{equation}
    \overline{D}_E(\overline{d}) := \{ e \in E(G) \; | \; \overline{d_e} = \overline{d} \}
\end{equation}
Given a choice of height functions over edges $h_E$, one may decompose the set $\overline{D}_E(\overline{d})$ into two subsets, based on whether the addition of an edge $e$ in the nested sequence of subgraphs induced from $h_E$ lead to variations in the number of connected components or the number of cycles:
\begin{align}
    \overline{D}_E(\overline{d}) &= \overline{D}_{E,0}(\overline{d}) \cup \overline{D}_{E,1}(\overline{d}) \\
    \overline{D}_{E,0}(\overline{d}) &:= \{ e_i \in \tilde{E}(G) \; | \; d_{e_i} = \overline{d}, \; h_0^i > 0 \} \\
    \overline{D}_{E,1}(\overline{d}) &:= \{ e_i \in \tilde{E}(G) \; | \; d_{e_i} = \overline{d}, \; h_1^i > 0 \}
\end{align}
Here, $\tilde{E}(G)$ is the sorted set of edges of $G$ based on the predetermined height function $h_E$ over $E(G)$.

To make the proof of the corollary simpler, we assume that for all tuples of unweighted node degrees $\overline{d} \in \overline{D}$ and for all $j = 0,1$, 
\begin{equation}
    |\overline{D}_{E,j}(\overline{d})| \geq |\overline{D}_{E',j}(\overline{d})|.
\end{equation}

We observe that the list of heights over edges utilized for constructing the representations $\varphi_{H_0,Opt}$ and $\varphi_{H_1,Opt}$ are stable with respect to graph perturbations, as shown in Theorem \ref{theorem:stability}. On the other hand, these reduced vectors are constructed from taking summations of heights over edges $e$ with a predetermined associated tuple of unweighted node degrees $\overline{d_e} = (d_{v_1},d_{v_2})$.

We use the abbreviation $X_G^{[k_1,k_2]}(v)$ to denote the label of the node $v$ obtained from the matrix $(M_G^{k_1} X)^T M_G^{k_2}$. As stated in the statement of the theorem, we ignore the contributions to the $L^1$-distance originating from the bias terms $\tau$. Then the $L^1$-distance between $\varphi_{H_0,Opt}^{[k_1,k_2]}(G)$ and $\varphi_{H_0,Opt}^{[k_1',k_2']}(G')$ is given by
\begin{small}
\begin{align}
\begin{split}
    & \|\varphi_{H_0,Opt}^{[k_1,k_2]}(G) - \varphi_{H_0,Opt}^{[k_1',k_2']}(G')\|_1 \\
    \leq & \sum_{\overline{d} \in \overline{D}} \; \biggl| \sum_{(v_1,v_2) \in \overline{D}_{E,0}(\overline{d})} \|X_G^{[k_1,k_2]}(v_1) - X_G^{[k_1,k_2]}(v_2)\|_1 - \sum_{(w_1,w_2) \in \overline{D}_{E',0}(\overline{d})} \|X_{G'}^{[k_1',k_2']}(w_1) - X_{G'}^{[k_1',k_2']}(w_2)\|_1 \biggr| \\
    \leq & \sum_{\overline{d} \in \overline{D}} \biggl( |\overline{D}_{E',0}(\overline{d})| \times \max_{\substack{v_1,v_2 \in V(G) \\ w_1,w_2 \in V(G')}} \biggl\| \|X_G^{[k_1,k_2]} (v_1) - X_G^{[k_1,k_2]} (v_2)\|_1 - \| X_{G'}^{[k_1',k_2']}(w_1) - X_{G'}^{[k_1',k_2']}(w_2)\|_1 \biggr| \\
    & + (|\overline{D}_{E,0}(\overline{d})| - |\overline{D}_{E',0}(\overline{d})|) \times \max_{v_1,v_2 \in V(G)} \biggl\| X_G^{[k_1,k_2]} (v_1) - X_G^{[k_1,k_2]} (v_2) \biggr\|_1 \biggr)
\end{split}
\end{align}
\end{small}
By perturbation theory of Markov chains and the stationary distribution of random walks over graphs, the inequality simplifies to:
\begin{small}
\begin{align}
    \begin{split}
    < & \sum_{\overline{d} \in \overline{D}} \biggl( |\overline{D}_{E',0}(\overline{d})| \times (C_1 \mu_{2,G}^{k_2} + 2 \|\pi_G\|_1 \|Z_G\|_1 \|\epsilon\|_1 + C_2 \mu_{2,G'}^{k_2'}) \\
    & + (|\overline{D}_{E,0}(\overline{d})|) \times \max \left\{ \frac{d_{max}}{\sum_{v \in V(G)} d_v}, \frac{d_{max}}{\sum_{w \in V(G')} d_w} \right\} \biggr) \\
    \leq & |V| \left( C_1 \mu_{2,G}^{k_2} + 2 \|\pi_G\|_1 \|Z_G\|_1 \|\epsilon\|_1 + C_2 \mu_{2,G'}^{k_2'} + \max \left\{ \frac{d_{max}}{\sum_{v \in V(G)} d_v}, \frac{d_{max}}{\sum_{w \in V(G')} d_w} \right\} \right).
    \end{split}
\end{align}
\end{small}
Likewise, if the number of edges of $G$ and $G'$ are equal, then the $L^1$-distance between $\varphi_{H_1,Opt}^{[k_1,k_2]}(G)$ and $\varphi_{H_1,Opt}^{[k_1',k_2']}(G')$ is given by
\begin{scriptsize}
\begin{align}
\begin{split}
    & \|\varphi_{H_0,Opt}^{[k_1,k_2]}(G) - \varphi_{H_0,Opt}^{[k_1',k_2']}(G')\|_1 \\
    < & (|E|-|V|+1) \left( C_1 \mu_{2,G}^{k_2} + 2 \|\pi_G\|_1 \|Z_G\|_1 \|\epsilon\|_1 + C_2 \mu_{2,G'}^{k_2'} + \max \left\{ \frac{d_{max}}{\sum_{v \in V(G)} d_v}, \frac{d_{max}}{\sum_{w \in V(G')} d_w} \right\} \right).
\end{split}
\end{align}
\end{scriptsize}
\end{proof}

\subsection{Time Complexity}
\label{subsection:time_complexity}

The computational complexity for calculating the matrix of node labels $(M_G^{k_1}X)^T M_G^{k_2}$ is of order $\mathcal{O}((k_1 + k_2) \times m \times l)$, for there are a total of $m$ non-zero entries in $M_G$. As for the task of sorting all the weights on edges, one needs to sort $m$ edges to formulate a nested sequence of subgraphs of $G$ given each pair of numbers $k_1$ and $k_2$, requiring $\mathcal{O}(m \log m)$ computational complexity. In addition, for each pair $(k_1,k_2)$ the computational complexity for formulating persistent vectors from a set of sorted edges is given by $\mathcal{O}(m \times \alpha(m))$, as shown in \cite{RBB19}. Here, $\alpha(m)$ is a slowly increasing function of $m$ which satisfies $\alpha(m) \ll \log m$. This leads us to conclude that the total time complexity for embedding graphs using the PWLR procedure up to $k_1$-iterations of WL kernel and $k_2$-iterations of Markov chain is $\mathcal{O}(k_1 \times k_2 \times m \times (l + \log m))$. 

\end{document}